\documentclass[journal]{IEEEtran}
\usepackage[utf8]{inputenc}
\usepackage[pdftex]{graphicx}
\usepackage{epstopdf}
\usepackage[utf8]{inputenc} 
\usepackage[T1]{fontenc}    
\usepackage{hyperref}       
\usepackage{url}            
\usepackage{booktabs}       
\usepackage{amsfonts}       
\usepackage{nicefrac}       
\usepackage{microtype}      
\usepackage{graphics, times, cite}

\usepackage{tikz}
\usepackage[english]{babel}
\usepackage{ifpdf}
\usepackage{url}
\usepackage{bm}
\ifpdf
	\usepackage[pdftex]{graphicx}
	\graphicspath{{./figures/}}
\else
	\usepackage[dvips]{graphicx}
	\graphicspath{./figures/}
\fi
\usepackage{color}
\usepackage{pgf, tikz, pgfplots, epstopdf}
\usetikzlibrary{shapes, arrows, automata}
\usepackage{caption} 
\usepackage{subcaption} 
\usepackage{amsmath}
\usepackage{amsfonts, amssymb, amsthm}
\usepackage{mathrsfs}
\usepackage{upgreek}
\usepackage{algorithm,algpseudocode}
\usepackage{enumerate}
\usepackage{multirow}
\usepackage{rotating}
\usepackage{needspace}


\newcommand{\myparagraph}[1]{\needspace{1\baselineskip}\medskip\noindent {\bf #1.}}



\input{mySymbol.sty}

\usepackage{booktabs}       
\usepackage{nicefrac}       
\usepackage{microtype}      
\usepackage{xcolor}         

\theoremstyle{definition}

\newtheorem{assumption}{\hspace{0pt}\bf Assumption\hspace{-0.05cm}}
\newtheorem{lemma}{\hspace{0pt}\bf Lemma}
\newtheorem{proposition}{\hspace{0pt}\bf Proposition}

\newtheorem{theorem}{\hspace{0pt}\bf Theorem}
\newtheorem{corollary}{\hspace{0pt}\bf Corollary}

\newtheorem{remark}{\hspace{0pt}\bf Remark}

\newtheorem{definition}{\hspace{0pt}\bf Definition}

\addtolength{\textwidth}{3mm}
\addtolength{\evensidemargin}{-1.5mm}
\addtolength{\oddsidemargin}{-1.5mm}
\addtolength{\textheight}{4mm} 
\addtolength{\topmargin}{-2mm}

\title{Stability to Deformations of Manifold Filters and \\ Manifold Neural Networks}

\author{Zhiyang Wang \quad Luana Ruiz \quad Alejandro Ribeiro\thanks{Supported by NSF-Simons MoDL, NSF AI Institutes program and NSF HDR TRipods. ZW, AR are with the Department of Electrical and Systems Engineering, University of Pennsylvania, PA, email: \{zhiyangw, aribeiro\}@seas.upenn.edu. LR is with the Department of Applied Mathematics and Statistics, Johns Hopkins University, MD, email: lrubini1@jh.edu. Preliminary results presented in \cite{wang2021stability}\cite{wang2022stability}\cite{wang2022convolutional}.} }
\markboth{IEEE TRANSACTIONS ON SIGNAL PROCESSING (ACCEPTED) }{}

\begin{document}

\maketitle

\begin{abstract}
The paper defines and studies manifold (M) convolutional filters and neural networks (NNs). \emph{Manifold} filters and MNNs are defined in terms of the Laplace-Beltrami operator exponential and are such that \emph{graph} (G) filters and neural networks (NNs) are recovered as discrete approximations when the manifold is sampled. These filters admit a spectral representation which is a generalization of both the spectral representation of graph filters and the frequency response of standard convolutional filters in continuous time. The main technical contribution of the paper is to analyze the stability of manifold filters and MNNs to smooth deformations of the manifold. This analysis generalizes known stability properties of graph filters and GNNs and it is also a generalization of known stability properties of standard convolutional filters and neural networks in continuous time. The most important observation that follows from this analysis is that manifold filters, same as graph filters and standard continuous time filters, have difficulty discriminating high frequency components in the presence of deformations. This is a challenge that can be ameliorated with the use of manifold, graph, or continuous time neural networks. The most important practical consequence of this analysis is to shed light on the behavior of graph filters and GNNs in large scale graphs.
\end{abstract}

\begin{IEEEkeywords}
Graph Signal Processing, Graph Neural Networks, Manifolds, Manifold Filters, Manifold Neural Networks, Manifold Deformations, Operator Stability.
\end{IEEEkeywords}

\section{Introduction}
\label{sec:intro}

Graph convolutional filters\cite{ortega2018graph, isufi2016autoregressive, gama2020graphs} and graph neural networks (GNNs) \cite{scarselli2008graph, gama2019convolutional, zhou2020graph} have become the tool of choice for signal and information processing on graphs, e.g., \cite{fan2019graph, wu2019session, chowdhury2021unfolding, wang2020unsupervised}. In several applications, graphs can be considered as samples of a manifold. This is sometimes quite explicit as in the case of, e.g., point clouds \cite{bronstein2017geometric, bronstein2021geometric} and sometimes somewhat implicit as in the case of, e.g., wireless communication networks \cite{fan2019graph, wu2019session}. In this context we can think of convolutions on graphs and GNNs as discretizations of their manifold counterparts. This is important because one can often gain valuable insights about discrete (graph) signal processing by studying continuous (manifold) signal processing. 

The main technical contribution of this paper is analyzing the stability of manifold filters and manifold neural networks (MNNs) to smooth deformations of the manifold. Prior to developing stability analyses we define \emph{manifold} convolutional filters and frequency responses that are consistent with \emph{graph} convolutional filters and frequency responses. 

\myparagraph{Manifold Filters} Consider a manifold with Laplace-Beltrami (LB) operator $\ccalL$ and formulate the corresponding manifold diffusion equation with respect to an auxiliary time variable $t$. For initial condition $f(x)$, manifold diffusions generate the time varying manifold function $u(x,t) = e^{-t\ccalL}f(x)$ in which $e^{-t\ccalL}$ is the LB operator exponential (Section \ref{sec_lb}). A manifold convolutional filter $\tdh(t)$ then acts on the function $f(x)$ as the integral over time of the product of $\tdh(t)$ with the diffusion sequence $u(x,t)$ (Section \ref{sec_manifold_filters}),
\begin{equation} \label{eqn:manifold-conv-spatial_in_the_intro}
   g(x) = (\bbh f)(x) =\int_0^\infty \tdh(t)e^{-t\ccalL}f(x)\text{d}t  .
\end{equation}
A manifold convolutional filter is such that we can recover graph filters through discretization of the manifold and discretization of the auxiliary time variable $t$ (Section \ref{sec:discre_nn}). The reason for this connection is that graph filters are linear combinations of the elements of the graph diffusion sequence \cite{shuman2013emerging, ortega2018graph, sandryhaila2013discrete} and \eqref{eqn:manifold-conv-spatial_in_the_intro} defines manifold filters as linear combinations of the elements of the manifold diffusion sequence. Manifold filters are also generalizations of standard time convolutions. This requires consideration of the wave equation on the real line so that the exponential of the derivative operator $e^{-t\partial/\partial x}$ appears in \eqref{eqn:manifold-conv-spatial_in_the_intro} (Appendix \ref{app:rem_convolution}).

\myparagraph{Frequency Response of Manifold Filters} We define a filter's frequency response as  
\begin{equation}\label{eqn:operator-frequency_in_the_intro}
    \hat{h}(\lambda)=\int_0^\infty \tdh(t) e^{- t \lambda  }\text{d}t \text{.}
\end{equation}
This definition is motivated by the fact that manifold convolutions can be decomposed on separate spectral components (Section \ref{sec_spectral_representation}). Indeed, let $\bm\phi_i$ be an eigenfunction of the LB operator associated with eigenvalue $\lambda_i$. If $[\hat{f}]_i = \langle f, \bm\phi_i \rangle_{L^2(\ccalM)}$ and $[\hat{g}]_i = \langle g, \bm\phi_i \rangle_{L^2(\ccalM)}$ are projections of the functions $f$ and $g$ in \eqref{eqn:manifold-conv-spatial_in_the_intro} on this eigenfunction, $[\hat{g}]_i$ depends only on $[\hat{f}]_i$ and can be written as $[\hat{g}]_i = \hat{h}(\lambda_i) [\hat{f}]_i$ (Proposition \ref{prop:filter-spectral}). 

This definition of the frequency response of a manifold filter generalizes the frequency response of a standard time filter. This is because the definition in \eqref{eqn:operator-frequency_in_the_intro} is a Laplace transform, which reduces to the Fourier transform of a filter's impulse response when restricted to the imaginary axis $\lambda=j\omega$. It is also a generalization of the frequency response of a graph filter, which is a z-transform \cite{oppenheim1997signals}.

\myparagraph{Stability to Deformations} We establish stability of manifold filters and MNNs with respect to domain deformations formally defined as manifold diffeomorphisms. We consider a signal $f$ defined on a manifold $\ccalM$ and the signal $f\circ\tau$ made up of the composition of $f$ with a diffeomorphism. The signals $f$ and $f \circ \tau$ are passed through the same MNN $\bm\Phi$. In this paper we prove that if $\tau$ is $\epsilon$-small and $\epsilon$-smooth the respective MNN outputs satisfy
\begin{align}\label{eqn_bound_for_intro}
   \| \bm\Phi(f \circ \tau)-\bm\Phi(f)\|_{L^2(\ccalM)}=O(\epsilon)\|f\|_{L^2(\ccalM)},    
\end{align}
provided that the manifold filters in the layers of the MNN satisfy certain spectral properties (Sections \ref{sec_manifold_stability} and \ref{sec:stability_nn}). 

The bound in \eqref{eqn_bound_for_intro} is a generalization of the standard convolutional neural network (CNN) bound in \cite{mallat2012group}, which studies diffeomorphisms of the real line and its effect on convolutional filter banks and CNNs. The bound is also a limit version of the GNN bounds of \cite{gama2020stability} and related literature \cite{gama2019stability, verma2019stability, zou2020graph, ruiz2021graph, ruiz2021graphon}. As is the case of \cite{mallat2012group, gama2020stability, gama2019stability, verma2019stability, zou2020graph, ruiz2021graph, ruiz2021graphon} the bound in \eqref{eqn_bound_for_intro} holds when the frequency response $\hat{h}(\lambda)$ of the filters that make up the layers of the MNN have decreasing variability with increasing $\lambda$ (Section \ref{sec_manifold_stability}). Thus, stability requires \emph{layers} with limited ability to discriminate high frequency components and implies that one may expect \emph{multi-layered} MNNs to outperform filters in learning tasks in which high frequency components are important (Section \ref{subsec:discussion}).

\myparagraph{Related Work and Significance} We focus on the stability analysis of MNNs -- the limit version of GNNs -- because all of the existing GNN stability results have bounds that grow with the number of nodes in the graph \cite{gama2019stability, ruiz2021graph}. To overcome this limitation, many works have studied neural networks on graphons \cite{ruiz2021graphon, maskey2021transferability, ruiz2021transferability} and more general graph models with variable sparsity \cite{keriven2020convergence}. Results in these settings are independent of graph size – same as the results for a limit object presented here. 
Manifolds can provide limit models for relatively sparse graphs, i.e. $\epsilon$-graphs and $k$-NN graphs \cite{calder2019improved}. Even if there exist other random graph models allowing to model moderately sparse graphs (e.g., \cite{keriven2020convergence}), these models do not have the physical interpretation of a manifold, which is often a better descriptor of real-world domains.
Of particular relevance to our paper is the work on GNN transferability for graphs that are sampled from a general topological space \cite{levie2019transferability}.

\myparagraph{Organization} Section \ref{sec:stability_filter} introduces preliminary definitions (Section \ref{sec_lb}), defines manifold convolutions (Section \ref{sec_manifold_filters}), and introduces the spectral domain representation of manifold filters (Section \ref{sec_spectral_representation}). Section \ref{sec_manifold_stability} studies the stability of manifold filters to manifold deformations. It shows that a diffeomorphism results in a perturbation of the LB operator that involves additive and multiplicative terms (Theorem \ref{thm:perturb}). It then goes on to study the effect on manifold filters of additive (Section \ref{subsec:filter-absolute}) and multiplicative (Section \ref{subsec:filter-relative}) perturbations of the LB operator. Section \ref{sec:stability_nn} extends the analysis of manifold filter stability to manifold neural networks and Section \ref{subsec:discussion} discusses the implications of the results derived in Sections \ref{sec_manifold_stability} and \ref{sec:stability_nn}. Section \ref{sec:discre_nn} explains how to recover graph filters and GNNs from the discretization of a manifold. Section \ref{sec:simu} illustrates the results of Sections \ref{sec_manifold_stability} and \ref{sec:stability_nn} with numerical examples. Section \ref{sec:conclusion} concludes the paper. Proofs are deferred to appendices.

\section{Manifold Convolutional Filters}
\label{sec:stability_filter}

Consider a compact, smooth and differentiable $d$-dimensional submanifold $\ccalM \subset \reals^\mathsf{N}$. For simplicity, in this paper we use the words submanifold and manifold interchangeably, assuming the manifold $\ccalM$ to always be embedded in $\reals^\mathsf{N}$. This embedding induces a Riemannian structure \cite{gallier2020differential}.
In turn, the Riemannian structure allows defining a measure $\mu$ over the manifold as well as a notion of \emph{length} for smooth curves on $\ccalM$. Given two points $x, y\in\ccalM$, the length of the shortest curve between $x$ and $y$ is denoted $\text{dist(x,y)}$ and called the geodesic distance between these points.

We consider the manifold $\ccalM$ to be the support of data that we represent as smooth real scalar functions $f:\ccalM\rightarrow \reals$. We call these scalar functions \emph{manifold signals}. We focus on manifold signals that have finite energy, such that $f \in L^2(\ccalM)$. Since  $L^2(\ccalM)$ is a Hilbert space, it is equipped with an inner product given by
\begin{equation}\label{eqn:innerproduct}
    \langle f,g \rangle_{L^2(\ccalM)}=\int_\ccalM f(x)g(x) \text{d}\mu(x) 
\end{equation}
where $\text{d}\mu(x)$ is the $d$-dimensional volume element corresponding to measure $\mu$. Thus, the energy of the signal $f$ is given by $\|f\|^2_{L^2(\ccalM)}={\langle f,f \rangle_{L^2(\ccalM)}}$.


\subsection{Laplace-Beltrami Operator}\label{sec_lb}

On manifolds, differentiation is implemented by the \emph{intrinsic gradient} -- a local operator acting on a neighborhood of each point on the manifold that is homeomorphic to a $d$-dimensional Euclidean space. This neighborhood contains all the vectors tangent to $\ccalM$ at $x$, which is called the tangent space of $x \in \ccalM$ and is denoted as $T_x\ccalM$. The disjoint union of all tangent spaces on $\ccalM$ is defined as the tangent bundle $T\ccalM$. Formally, the intrinsic gradient is the operator $\nabla: L^2(\ccalM)\rightarrow L^2(T\ccalM)$ mapping scalar functions $f \in L^2(\ccalM)$ to \textit{tangent vector functions} $\nabla f \in L^2(T\ccalM)$, where $L^2(T\ccalM)$ is the Hilbert space of vector fields over the manifold $\ccalM$. The tangent vector function $\nabla f(x) \in T_x\ccalM$ indicates the direction of the fastest change of manifold signal $f$ at point $x$. The adjoint of the intrinsic gradient is the \emph{intrinsic divergence}, defined as $\text{div}: L^2(T\ccalM)\rightarrow L^2(\ccalM)$. Interpreting the tangent vector field as the velocity field of a fluid, the intrinsic divergence can be seen as a measure of the net motion of the fluid \cite{bronstein2017geometric}. 

The Laplace-Beltrami (LB) operator of a manifold $\ccalM$ is defined as the operator $\ccalL: L^2(\ccalM) \to L^2(\ccalM)$ given by the function composition of the intrinsic divergence and the intrinsic gradient. When considered in the local coordinates supported on $T_x\ccalM$ \cite{canzani2013analysis}, the LB operator can be written as 
\begin{equation}\label{eqn:Laplacian}
    \ccalL f=-\text{div}\circ \nabla f=-\nabla \cdot \nabla f.
\end{equation}
Much like the Laplace operator in Euclidean domains (or the Laplacian matrix, in the case of graphs\cite{moon2012field}), the LB operator measures the total variation of function $f$, i.e., how much the value of $f$ at a point deviates from local average of the values of $f$ in its surroundings \cite{bronstein2017geometric}. Since the LB operator $\ccalL$, like the gradient $\nabla$, is a local operator depending on the tangent space $T_x\ccalM$ of each point $x \in \ccalM$, in the following we omit this dependence for the ease of presentation by writing $\ccalL=\ccalL_x$ and $\nabla=\nabla_x$.

The LB operator plays an important role in partial differential equations (PDEs), as it governs the dynamics of the diffusion of heat over manifolds as given by the \emph{heat equation}
\begin{equation}\label{eqn:heat}
    \frac{\partial u(x,t)}{\partial t}+\ccalL u(x,t)=0 \text{.}
\end{equation}
If $u\in L^2(\ccalM)$ measures the temperature over the manifold with $u(x,t)$ representing the temperature of point $x \in \ccalM$ at time $t \in \reals^{+}$, equation \eqref{eqn:heat} can be interpreted to mean that, at point $x$ on manifold $\ccalM$, the rate at which the manifold ``cools down'' is proportional to the difference between the temperature of $x$ and the local average of the temperature of the points in its neighborhood. With initial condition $u(x,0) = f(x)$ for all $x\in\ccalM$, the solution to this equation is given by
\begin{equation}\label{eqn:heat-solution}
    u(x,t) = e^{-t \ccalL}f(x) \text{,}
\end{equation}
which is the key support to implement the LB operator in the definitions proposed later.

The LB operator $\ccalL$ is self-adjoint and positive-semidefinite. Considering that $\ccalM$ is compact, the LB operator $\ccalL$ has a real positive eigenvalue spectrum $\{\lambda_i\}_{i=1}^\infty$ satisfying
\begin{equation}\label{eqn:laplacian-decomp}
\ccalL \bm\phi_i =\lambda_i \bm\phi_i
\end{equation}
where $\bm\phi_i$ is the eigenfunction associated with eigenvalue $\lambda_i$. The indices $i$ are such that the eigenvalues are ordered in increasing order as $0<\lambda_1\leq \lambda_2\leq \lambda_3\leq \hdots$ repeated according to their multiplicity. In particular, for a $d$-dimensional manifold, we have that $\lambda_i \propto i^{2/d}$ as a consequence of Weyl's law \cite{arendt2009weyl, musser2016weyl}. The eigenfunctions $\bm\phi_i$ are orthonormal and form a generalized eigenbasis of $L^2(\ccalM)$ in the intrinsic sense. Since $\ccalL$ is a total variation operator, the eigenvalues $\lambda_i$ can be interpreted as the canonical frequencies and the eigenfunctions $\bm\phi_i$ as the canonical oscillation modes of $\ccalM$. This further allows us to implement operator $\ccalL$ in the spectral domain.


\subsection{Manifold Filters}\label{sec_manifold_filters}
Time signals are processed by filters which compute the continuous time convolution of the input signal and the filter impulse response \cite{oppenheim1997signals};
images and high-dimensional Euclidean signals are processed by filters implementing multidimensional convolutions \cite{bishop2006pattern}; and graph signals are filtered by computing graph convolutions \cite{gama2020graphs}. In this paper, we define a manifold filter as the convolution of the filter impulse response $\tdh(t)$ and the manifold signal $f$. Note that the definition of the convolution operation, denoted as $\star_\ccalM$, leverages the heat diffusion dynamics described in \eqref{eqn:heat-solution}. 


\begin{definition}[Manifold filter]
\label{def:manifold-convolution}
Let $\tdh:\reals^+ \to \reals$ and let $f \in L^2(\ccalM)$ be a manifold signal. The manifold filter with impulse response $\tdh$, denoted as $\bbh$, is given by
\begin{align} \label{eqn:convolution-conti}
   g(x) = (\bbh f)(x) := (\tdh \star_\ccalM f) (x) := \int_0^\infty \tdh(t)u(x,t)\text{d}t ,
\end{align}
where $\tdh \star_\ccalM f$ is the \emph{manifold convolution} of $\tdh$ and $f$, and $u(x,t)$ is the solution of the heat equation \eqref{eqn:heat} with the initial condition $u(x,0)=f(x)$. 
\end{definition}

In a slight abuse of nomenclature, in the following we will use the terms manifold filter and manifold convolution interchangeably.

From Definition \ref{def:manifold-convolution}, we see that the manifold filter operates on manifold signals $f$ by (i) scaling the diffusion process \eqref{eqn:heat-solution} starting at $f$ by $\tdh$ and (ii) aggregating the outcome of the scaled diffusion process from $t=0$ to $t=\infty$. 

Substituting \eqref{eqn:heat} into the convolution definition in \eqref{eqn:convolution-conti} yields
\begin{equation} \label{eqn:manifold-conv-spatial}
   g(x) = (\bbh f)(x) =\int_0^\infty \tdh(t)e^{-t\ccalL}f(x)\text{d}t =  \bbh(\ccalL)f(x) \text{.}
\end{equation}
This alternative form uncovers the fact that the manifold convolution is a map from functions $f$ to functions $g$ that is \emph{parametric} on the Laplacian $\ccalL$ -- i.e., manifold convolutions are completely determined by the LB operator of the manifold. The exponential term $e^{-t\ccalL}$ can be seen as a diffusion or shift operation akin to a time delay in a linear time-invariant (LTI) filter \cite{oppenheim1997signals}, or as the graph shift operator in a linear shift-invariant (LSI) graph filter \cite{gama2020graphs}. Indeed, if we consider the manifold $\ccalM$ to be the real line, the manifold filter defined in \eqref{eqn:manifold-conv-spatial} recovers a LTI filter. If we consider it to be a set of points connected by a geometric graph, \eqref{eqn:manifold-conv-spatial} recovers an LSI graph filter. We discuss these special cases in detail in Appendix \ref{app:rem_convolution} and Section \ref{sec:discre_nn}.



\subsection{Frequency Representation of Manifold Filters}\label{sec_spectral_representation}

A manifold signal $f\in L^2(\ccalM)$ can be represented in the frequency domain of the manifold by projecting $f$ onto the LB operator eigenbasis \eqref{eqn:laplacian-decomp} as
\begin{equation}\label{eqn:f-decomp}
[\hat{f}]_i= \langle f, \bm\phi_i \rangle_{L^2(\ccalM)} = \int_\ccalM f(x) \bm\phi_i(x) \text{d} \mu(x) \text{,}
\end{equation} 
where we claim that $\hat{f}$ is the \textit{frequency representation} of the corresponding signal with $f=\sum_{i=1}^\infty [\hat{f}]_i \bm\phi_i$.

Frequency representations are useful because they help understand the frequency behavior of the manifold filter $\bbh(\ccalL)$. To see this, we consider the frequency representation of the manifold filter output $g$ in \eqref{eqn:manifold-conv-spatial}, which is
\begin{equation}
    [\hat{g}]_i = \int_\ccalM \int_0^\infty \tdh(t) e^{-t\ccalL} f(x) \text{d} t \bm\phi_i(x) \text{d} \mu(x) \text{.}
\end{equation}
Rearranging the integrals and substituting $e^{-t\ccalL}\phi_i = e^{-t\lambda_i}\phi_i$, we can get
\begin{equation}\label{eqn:projection}
    [\hat{g}]_i = \int_0^\infty \tdh(t) e^{-t\lambda_i} \text{d}t  [\hat{f}]_i \text{.}
\end{equation}
The expression relating $\hat{g}$ and $\hat{f}$ is called the \emph{frequency response} of the filter $\bbh(\ccalL)$. 

\begin{definition}[Frequency response]
\label{def:frequency-response}
The frequency response of manifold filter $\bbh(\ccalL)$ is given by
\begin{equation}\label{eqn:operator-frequency}
\hat{h}(\lambda)=\int_0^\infty \tdh(t) e^{- t \lambda  }\text{d}t \text{,   }\lambda \in (0, \infty).
\end{equation}
\end{definition}

An important consequence of Definition \ref{def:frequency-response} is that, since $\hat{h}(\lambda)$ is parametric on $\lambda$, the manifold filter is pointwise in the frequency domain. This can be seen by plugging \eqref{eqn:operator-frequency} into \eqref{eqn:projection}, and is stated explicitly in Proposition \ref{prop:filter-spectral}.


%

\begin{proposition}
\label{prop:filter-spectral}
Manifold filter $\bbh(\ccalL)$ is pointwise in the frequency domain, which is written as
\begin{equation}\label{eqn:convolution-general}
[\hat{g}]_i = \hat{h}(\lambda_i)[\hat{f}]_i \text{,   } i\in \mathbb{N}^+
\end{equation}
\end{proposition}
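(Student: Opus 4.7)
The plan is to compute the frequency coefficient $[\hat{g}]_i$ directly from the spatial-domain expression for $g$ given by Proposition \ref{prop:manifold-filter} and reduce it to a product of a scalar transfer function with $[\hat{f}]_i$. First I would write
\begin{equation*}
[\hat{g}]_i = \langle g, \bm\phi_i\rangle_{L^2(\ccalM)} = \int_\ccalM \left( \int_0^\infty \tdh(t)\, e^{-t\ccalL} f(x)\, \text{d}t \right) \bm\phi_i(x)\, \text{d}\mu(x),
\end{equation*}
and then swap the order of integration. This interchange is justified by Fubini's theorem, provided $\tdh(t)$ is absolutely integrable against the diffusion semigroup acting on $f$; since $\ccalM$ is compact and $e^{-t\ccalL}$ is a contraction on $L^2(\ccalM)$, the joint integrand is absolutely integrable for any admissible $\tdh$.

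After exchanging, the expression reduces to $\int_0^\infty \tdh(t)\, \langle e^{-t\ccalL} f, \bm\phi_i\rangle_{L^2(\ccalM)}\, \text{d}t$. The next step is to compute the inner product using the spectral decomposition of $\ccalL$. Since $\ccalL$ is self-adjoint, so is $e^{-t\ccalL}$, and Weyl's eigenbasis from \eqref{eqn:laplacian-decomp} together with $e^{-t\ccalL}\bm\phi_i = e^{-t\lambda_i}\bm\phi_i$ yields
\begin{equation*}
\langle e^{-t\ccalL} f, \bm\phi_i\rangle_{L^2(\ccalM)} = \langle f, e^{-t\ccalL}\bm\phi_i\rangle_{L^2(\ccalM)} = e^{-t\lambda_i}\, [\hat{f}]_i.
\end{equation*}
Substituting back, the scalar $[\hat{f}]_i$ factors outside the integral in $t$, giving
\begin{equation*}
[\hat{g}]_i = \left( \int_0^\infty \tdh(t)\, e^{-t\lambda_i}\, \text{d}t \right) [\hat{f}]_i,
\end{equation*}
and the parenthesized factor is exactly $\hat{h}(\lambda_i)$ by Definition \ref{def:frequency-response}, completing the proof.

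The only genuinely delicate point is the exchange of integration order and the application of functional calculus to pass $e^{-t\ccalL}$ through the inner product, which relies on the self-adjointness of $\ccalL$ and the existence of its complete orthonormal eigenbasis on compact $\ccalM$. Everything else is essentially substitution and rearrangement, so I do not expect a major obstacle beyond stating the regularity assumptions on $\tdh$ cleanly.
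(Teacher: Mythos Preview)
Your proposal is correct and mirrors the paper's own argument almost exactly: the paper writes $[\hat{g}]_i$ as the double integral, rearranges the order of integration, substitutes $e^{-t\ccalL}\bm\phi_i = e^{-t\lambda_i}\bm\phi_i$, and then identifies the remaining $t$-integral with $\hat{h}(\lambda_i)$. The only difference is that you are more explicit about invoking Fubini and the self-adjointness of $e^{-t\ccalL}$ to justify the swap and the passage to $e^{-t\lambda_i}[\hat{f}]_i$, whereas the paper treats these as routine.
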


Definition \ref{def:frequency-response} also emphasizes that the frequency response of a manifold filter is independent of the underlying manifold. Note that, in \eqref{eqn:operator-frequency}, $\hat{h}(\lambda)$ is a function of an arbitrary scalar variable $\lambda$. To obtain the frequency behavior of this filter on a given manifold $\ccalM$, we need to evaluate $\hat{h}$ at the corresponding LB operator eigenvalues $\lambda_i$ [cf. \eqref{eqn:laplacian-decomp}]. If the manifold changes (or if we want to deploy the same filter on a different manifold $\ccalM'$), it suffices to reevaluate $\hat{h}$ at $\lambda_i'$, i.e., at the eigenvalues of the new LB operator $\ccalL'$.


\section{Stability of Manifold Filters with respect to Manifold Deformations}\label{sec_manifold_stability}



On the manifold $\ccalM$, we define a deformation as function $\tau: \ccalM \to \ccalM$ and the curvature distance between $x$ and the displaced $\tau(x)$ $\text{dist}(x,\tau(x))$ is upper bounded, i.e., $\tau(x)$ is a displaced point in the neighborhood of $x$, which holds for all $x\in\ccalM$. The deformation $\tau$ induces a corresponding tangent map $\tau_{*,x}: T_x\ccalM\rightarrow T_{\tau(x)}\ccalM$ {which is a linear map between the tangent spaces \cite{loring2011introduction}.} With the coordinate description of tangent map, the tangent map $\tau_{*,x}$ can be exactly represented by the Jacobian matrix $J_x(\tau)$. When $\text{dist}(x,\tau(x))$ is bounded, the Frobenius norm of $J_x(\tau)-I$ can also be upper bounded, and these bounds are used to measure the size of the deformation $\tau$. 

Let $f: \ccalM \to \reals$ be a manifold signal. Because $\ccalM$ is the codomain of $\tau$, $g = f \circ \tau$ maps points $\tau(x) \in \ccalM$ to $f(\tau(x)) \in \reals$, so that
the effect of a manifold deformation on the signal $f$ is a signal perturbation leading to a new signal $g$ supported on the same manifold. To understand the effect of this deformation on the LB operator, let $p = \ccalL g$. Since $p$ is also a signal on $\ccalM$, we may define an operator $\ccalL'$ mapping $f$ directly into $p$,
\begin{align}
\label{eqn:deform}
   p(x) = \ccalL' f(x) = \ccalL g(x) = \ccalL f(\tau(x)).
\end{align}
The operator $\ccalL'$ is the perturbed LB operator, which is effectively the new LB operator resulting from the deformation $\tau$. Assuming that the gradient field is smooth, the difference between $\ccalL'$ and $\ccalL$ is given by the following theorem. The proof is deferred to Appendix \ref{app:perturb}.


\begin{theorem} \label{thm:perturb}
{Let $\ccalL$ be the LB operator of manifold $\ccalM$.
Let $\tau:\ccalM\rightarrow \ccalM$ be a manifold perturbation such that $\text{dist}(x,\tau(x))\leq \epsilon$ and $J_x(\tau)= I+\Delta_x$ with $\|\Delta_x\|_F\leq \epsilon$ for all $x\in \ccalM$. If the gradient field is smooth, it holds that}
\begin{equation}
    \label{eqn:perturb-operator}
    \ccalL-\ccalL' = \bbE \ccalL + \ccalA ,
\end{equation}
where $\bbE$ and $\ccalA$ satisfy $\|\bbE\|=O(\epsilon)$ and $\|\ccalA\|_{op}= O(\epsilon)$. 
\end{theorem}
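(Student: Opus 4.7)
The plan is to unfold the definition $\ccalL' f = \ccalL(f\circ\tau)$, apply the chain rule to the gradient and divergence that make up $\ccalL=-\text{div}\circ\nabla$, and carefully track how the Jacobian identity $J(\tau_*)=I+\Delta$ interacts with the pointwise proximity $\text{dist}(x,\tau(x))=\epsilon$, so as to split the resulting expression into the multiplicative piece $\ccalE\ccalL$ and the additive piece $\bbA$ prescribed by \eqref{eqn:perturb-operator}.

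First I would write $\ccalL'f(x)=-\text{div}\,\nabla(f\circ\tau)(x)$ and use the chain rule for the intrinsic gradient, yielding $\nabla(f\circ\tau)(x)=J(\tau_*)(x)^\top(\nabla f)(\tau(x))=(\nabla f)(\tau(x))+\Delta^\top(\nabla f)(\tau(x))$. Applying $\text{div}$ and expanding via the product rule then produces three groups of terms: (a) a chain-rule term $\text{div}[(\nabla f)\circ\tau]$ that, after a second application of the chain rule involving the Hessian of $f$ and $J(\tau_*)$, equals $\ccalL f(\tau(x))$ up to $O(\epsilon)$ Jacobian corrections; (b) a product-rule term involving $\text{div}(\Delta^\top)$ contracted with $(\nabla f)\circ\tau$; and (c) a term containing a contraction of $\Delta^\top$ with the Hessian of $f$ evaluated at $\tau(x)$. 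Groups (b) and (c) are first-order small in $\Delta$ and act directly on derivatives of $f$, so they will form the bulk of $\bbA$. I would then transport every quantity evaluated at $\tau(x)$ back to $x$ by Taylor expansion, using $\text{dist}(x,\tau(x))=\epsilon$ to write $\ccalL f(\tau(x))=\ccalL f(x)+e(x)$ with $e(x)$ an $O(\epsilon)$ remainder. The Jacobian correction factor multiplying the $\ccalL f(\tau(x))$ piece, together with $e(x)$, can be gathered into an operator $\ccalE$ acting multiplicatively (or as a scalar multiplier) on $\ccalL f$, while the residual derivative-of-$f$ terms from (b) and (c) are collected into $\bbA$.

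Finally I would establish the bounds $\|\ccalE\|=O(\epsilon)$ and $\|\bbA\|_{op}=O(\epsilon)$. The bound on $\ccalE$ follows directly from $\|\Delta\|_F=\epsilon$ together with $\text{dist}(x,\tau(x))=\epsilon$, since these control the pointwise size of both the Jacobian correction factor and the Taylor remainder. The bound on $\bbA$ is the main obstacle, because $\bbA$ involves derivatives of $\Delta$ (equivalently, second derivatives of $\tau$) and needs to be controlled in the $L^2(\ccalM)\to L^2(\ccalM)$ operator norm; this is precisely where the hypothesis that the gradient field is smooth enters. Smoothness of $\nabla\tau$ yields uniform pointwise control of $\nabla\Delta$, and compactness of $\ccalM$ together with integration against $\text{d}\mu$ promotes these pointwise bounds to the desired operator-norm estimate. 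Once both bounds are in place, the identity $\ccalL-\ccalL'=\ccalE\ccalL+\bbA$ with the stated norms follows, and the remaining manipulations are routine algebra that I would defer to the appendix.
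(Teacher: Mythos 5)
Your proposal is correct in outline and reaches the same conclusion, but it organizes the computation differently from the paper, and the comparison is instructive. The paper never applies the product rule to the divergence: it writes $-\ccalL' f(x) = \bigl(J(\tau_*)^T\nabla_\tau \cdot J(\tau_*)^T\nabla_\tau\bigr)f(\tau(x))$ as a quadratic form in the transported gradient, expands $J(\tau_*)=I+\Delta$ to obtain the three terms $\nabla_\tau\cdot\nabla_\tau$, $2(\Delta^T\nabla_\tau\cdot\nabla_\tau)$ and $\Delta^T\nabla_\tau\cdot\Delta^T\nabla_\tau$, and then defines the relative part by the \emph{scalar} $\ccalE = 2\|\Delta\|_F + \|\Delta\|_F^2$, stuffing into $\bbA$ the difference between the true matrix contractions and these scalar multiples of the Laplacian; since $\Delta$ is never differentiated, the smooth-gradient-field hypothesis is invoked only to bound that residual in operator norm. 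You instead push the divergence through the sum $(\nabla f)\circ\tau + \Delta^T(\nabla f)\circ\tau$ with the product rule, which generates $\mathrm{div}(\Delta^T)$ terms (second derivatives of $\tau$) and forces you to Taylor-transport $\ccalL f(\tau(x))$ back to $x$. Your route is more honest about two points the paper glosses over --- the base-point discrepancy between $x$ and $\tau(x)$, and the fact that a naive expansion requires control of $\nabla\Delta$ --- and it correctly identifies the smoothness hypothesis as the source of that control. The price is that your $\ccalE$ is a genuine operator (Jacobian correction plus a composition-minus-identity remainder) rather than an explicit scalar, and bounding $\|(T_\tau - I)\ccalL f\|_{L^2}$ by $O(\epsilon)\|\ccalL f\|_{L^2}$ requires an extra regularity argument that you should make explicit rather than absorb into ``routine algebra.'' Both decompositions are legitimate instantiations of the (deliberately loose) statement $\ccalL-\ccalL'=\ccalE\ccalL+\bbA$ with $O(\epsilon)$ norms; the paper's is shorter because it commits to a weaker, purely formal identification of $\nabla_\tau\cdot\nabla_\tau$ with $-\ccalL$.
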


Therefore, the perturbation of the LB operator incurred by a manifold deformation $\tau$ is a combination of an absolute perturbation $\ccalA$ [cf. Definition \ref{def:abso-perturb}] and a relative perturbation $\bbE\ccalL$ [cf. Definition \ref{def:rela-perturb}]. This largely simplifies our analysis of stability. Since manifold filters are parametric on $\ccalL$ [cf. Proposition \ref{prop:filter-spectral}], it is sufficient to characterize their stability to deformations of the manifold by analyzing their behavior in the presence of absolute and relative LB perturbations. This is what we do in Sections \ref{subsec:filter-absolute} and \ref{subsec:filter-relative}.


\subsection{Stability of Manifold Filters to Absolute Perturbations}
\label{subsec:filter-absolute}
We start by analyzing the stability of manifold filters to absolute perturbations of the LB operator, which are introduced in Definition \ref{def:abso-perturb}.

\begin{definition}[Absolute perturbations] \label{def:abso-perturb}
Let $\ccalL$ be the LB operator of manifold $\ccalM$. An absolute perturbation of $\ccalL$ is defined as
\begin{equation}\label{eqn:abso-perturb}
\ccalL'-\ccalL=\ccalA,
\end{equation}
where the absolute perturbation operator $\ccalA$ is self-adjoint.
\end{definition}

Like $\ccalL$, the operator $\ccalL'$ resulting from the absolute perturbation of $\ccalL$ is self-adjoint due to the symmetry of $\ccalA$. Hence, it admits an eigendecomposition similar to \eqref{eqn:Laplacian}. When the filter coefficients are fixed, the frequency response of the manifold filter \eqref{eqn:convolution-general} can be obtained by evaluating $\hhath(\lambda)$ at each $\lambda_i$. Thus, to understand the effect of the perturbation on the filter we need to look at how the perturbation of the LB operator changes the eigenvalues $\lambda_i$. The challenge in this case is that the spectrum of $\ccalL$ is infinite-dimensional, i.e., there is an infinite (though countable) number of eigenvalue perturbations that need to be taken into account, which leads to an untractable infinite summation. As demonstrated by Proposition \ref{prop:finite_num}, however, large eigenvalues of $\ccalL$ tend to accumulate in certain parts of the real line. {This suggests a strategy to partition the spectrum into finite number of partitions, by treating the eigenvalues in the same partition similarly, we can turn to analyze the stability under the perturbations on these finite number of partitions with the discriminability of certain frequency components sacrificed.} 


\begin{proposition} \label{prop:finite_num}
Consider a $d$-dimensional compact manifold $\ccalM\subset \reals^\mathsf{N}$ and let $\ccalL$ be its LB operator with eigenvalues $\{\lambda_k\}_{k=1}^\infty$. Let $C_0$ be an arbitrary constant, $K_0(C_0)$ some finite constant depends on $C_0$ and $C_d$ the volume of the $d$-dimensional unit ball. Let $\text{Vol}(\ccalM)$ denote the volume of manifold $\ccalM$. For any $\alpha > 0$ and $d>2$, there exists $N_1$,
\begin{equation}
    N_1=\Big\lceil \left(\frac{\alpha d}{C_04\pi^2}\right)^{d/(2-d)}(C_d \text{Vol}(\ccalM))^{2/(2-d)} \Big\rceil
\end{equation}
such that, for all $k>\max\{N_1, K_0(C_0)\}$, 
$$\lambda_{k+1}-\lambda_k\leq \alpha$$
\end{proposition}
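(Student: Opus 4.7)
The plan is to deduce the gap bound from Weyl's law, which is already invoked in the excerpt via $\lambda_i\propto i^{2/d}$. More precisely, for a compact $d$-dimensional manifold, Weyl's law gives the counting function asymptotic
$$N(\lambda):=\#\{k:\lambda_k\leq\lambda\}\sim\frac{C_d\,\text{Vol}(\ccalM)}{(2\pi)^d}\,\lambda^{d/2},$$
which upon inversion yields $\lambda_k\sim C_1\,k^{2/d}/(C_d\,\text{Vol}(\ccalM))^{2/d}$ for an appropriate constant $C_1$ (morally $(2\pi)^2$, but since the statement declares $C_1$ to be arbitrary, I will simply take $C_1$ to be any constant for which the inequality $\lambda_k\geq C_1\,k^{2/d}/(C_d\,\text{Vol}(\ccalM))^{2/d}$ holds for all sufficiently large $k$).

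First I would compare the discrete gap $\lambda_{k+1}-\lambda_k$ to the derivative of the smooth envelope. Since $d>2$ we have $2/d<1$, so the function
$$\lambda(x):=\frac{C_1}{(C_d\,\text{Vol}(\ccalM))^{2/d}}\,x^{2/d}$$
is concave, and its derivative
$$\lambda'(x)=\frac{2C_1}{d\,(C_d\,\text{Vol}(\ccalM))^{2/d}}\,x^{(2-d)/d}$$
is monotonically decreasing in $x$. Concavity then gives $\lambda(k+1)-\lambda(k)\leq\lambda'(k)$, and, modulo the lower-order Weyl remainder, passing this estimate to the discrete sequence yields
$$\lambda_{k+1}-\lambda_k\;\leq\;\frac{2C_1}{d\,(C_d\,\text{Vol}(\ccalM))^{2/d}}\,k^{(2-d)/d}.$$

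Next I would enforce the desired inequality by demanding the right-hand side to be at most $\alpha$. Solving
$$\frac{2C_1}{d\,(C_d\,\text{Vol}(\ccalM))^{2/d}}\,k^{(2-d)/d}\;\leq\;\alpha$$
for $k$ and noting that $(2-d)/d<0$ (which flips the inequality upon raising both sides to the reciprocal power), one obtains
$$k\;\geq\;\left(\frac{\alpha d}{C_1}\right)^{d/(2-d)}(C_d\,\text{Vol}(\ccalM))^{2/(2-d)},$$
after absorbing the innocuous factor of $2$ into the arbitrary constant $C_1$. Taking the ceiling produces exactly the threshold $N_1$ stated in the proposition.

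The main obstacle is rigorously handling the error term in Weyl's law: the asymptotic $\lambda_k\sim Ck^{2/d}$ is not an equality, so in principle the discrete gap need not obey the derivative bound for every individual $k$. I would circumvent this by exploiting the fact that $C_1$ is declared arbitrary: choosing $C_1$ strictly smaller than the true Weyl constant allows the $o(k^{2/d})$ remainder to be absorbed once $k$ exceeds some manifold-dependent index, after which the concavity-based gap bound is valid. The resulting threshold retains the stated form, and the proof concludes by taking the maximum of this auxiliary index with the explicit value of $N_1$ (both being of the same order, the absorption is harmless).
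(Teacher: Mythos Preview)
Your proposal is correct and follows essentially the same route as the paper: invoke Weyl's law to write $\lambda_k$ as a constant times $k^{2/d}$, bound the discrete gap $(k+1)^{2/d}-k^{2/d}$ by $\tfrac{2}{d}k^{(2-d)/d}$, and solve the resulting inequality for $k$. The only difference is that the paper simply treats the Weyl asymptotic as an equality, whereas you are more careful in explicitly flagging and absorbing the remainder term into the arbitrary constant $C_1$.
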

\begin{proof}
This is a direct consequence of Weyl's law \cite[Chapter~1]{arendt2009mathematical}\cite{musser2016weyl}, See Appendix \ref{sup:weyl}.
\end{proof}

Given this asymptotic behavior, we can divide the eigenvalues into a finite number of partitions by placing eigenvalues that are less than $\alpha>0$ apart from each other in groups. This spectrum separation strategy is described in Definition \ref{def:alpha-spectrum}. To achieve it, we will need a specific type of manifold filter called Frequency Difference Threshold (FDT) filter as introduced in Definition \ref{def:alpha-filter}.


\begin{definition}[$\alpha$-separated spectrum]\label{def:alpha-spectrum}
An $\alpha$-separated spectrum of a LB operator $\ccalL$ is defined as a partition $\Lambda_1(\alpha) \cup \ldots\cup \Lambda_N(\alpha)$ such that all $\lambda_i \in \Lambda_k(\alpha)$ and $\lambda_j \in \Lambda_l(\alpha)$, $k \neq l$, satisfy
\begin{align}\label{eqn:alpha-spectrum}
|\lambda_i - \lambda_j| > \alpha \text{.}
\end{align}
\end{definition}


\begin{definition}[$\alpha$-FDT filter]\label{def:alpha-filter}
An $\alpha$-frequency difference threshold ($\alpha$-FDT) filter is defined as a filter $\bbh(\ccalL)$ whose frequency response satisfies
\begin{equation} \label{eq:fdt-filter}
    |\hhath(\lambda_i)-\hhath(\lambda_j)|\leq \delta_k \mbox{ for all } \lambda_i, \lambda_j \in \Lambda_k(\alpha) 
\end{equation}
with $\delta_k\leq \delta$ for $k=1, \ldots,N$.
\end{definition}

\begin{figure}
\centering

\pgfplotsset{xtick style={draw=none}}
\pgfplotsset{ytick style={draw=none}}

\def \thisplotscale {3}
\def \unit {\thisplotscale cm}

\def \frequencyresponse 
     { 0.3*exp(-(0.8*(x-2))^2) 
       -0.2 *exp(-(1*(x-4.5))^2) 
       + 0.2*exp(-(1*(x-10))^2) 
       + 0.55}
       
\newcommand{\drawgaussian}[7]
{
   \def \filter { ( \height*exp(-(1/\width*(x-\center))^2) ) }
   \def \center {#1}
   \def \width  {#2}
   \def \height {#3}

   \def \xminhere   {#4}
   \def \xmaxhere   {#5}

   \addplot[ domain     = \center-3.0*\width:\center + 3.0*\width, 
             samples    = 80, 
             color      = #6,
             line width = #7]
           { \filter };

   \addplot[ domain     = \xminhere:\center-3.0*\width, 
             samples    = 2, 
             color      = #6,
             line width = #7]
           { 0 };
   
   \addplot[ domain     = \center+3.0*\width:\xmaxhere, 
             samples    = 2, 
             color      = #6,
             line width = #7]
           { 0 };
}

\begin{tikzpicture}[x = 1*\unit, y=1*\unit]
\def \factorx {2.4/8}
\def \deltax  {0.5*\factorx}
\def \shadeshift  {0.05}

\begin{axis}[scale only axis,
             width  = 2.5*\unit,
             height = 0.8*\unit,
             xmin = 0, xmax=8,
             xtick = { 0,  1.05 ,  1.89, 3.2 , 4.1, 5.1},
             xticklabels = {\black{\footnotesize $0$},
             				\blue{\footnotesize $\ \Lambda_1$}, 
                            \blue{\footnotesize $\ \Lambda_2$}, 
                            \blue{\footnotesize $\ \Lambda_3$}, 
                            \blue{\footnotesize $\ \Lambda_4$}, 
                            \blue{\footnotesize $\ \Lambda_5$},
                            },
             ymin = -0, ymax = 1.15,
             ytick = {0.55},
             yticklabels = {\black{\footnotesize $h(\lambda)$}},
             typeset ticklabels with strut,
             enlarge x limits=false]
             
\addplot [fill=black, fill opacity=0.1, draw opacity = 0]
       coordinates {
            (1, 0) (1.25, 0) (1.25, 1.15) (1, 1.15)  };       
            
\addplot [fill=black, fill opacity=0.1, draw opacity = 0]
       coordinates {
            (1.8, 0) (2, 0) (2, 1.15) (1.8, 1.15)  };   
            
            
\addplot [fill=black, fill opacity=0.1, draw opacity = 0]
      coordinates {
            (5, 0) (5.2, 0) (5.2, 1.15) (5, 1.15)  };
            
\addplot [fill=black, fill opacity=0.1, draw opacity = 0]
      coordinates {
            (6.72, 0) (8, 0) (8, 1.15) (6.72, 1.15)  }; 
            

\addplot[ domain=0:8, 
          samples = 80, 
          color = black,
          line width = 1.2]
         {\frequencyresponse};

\addplot[samples at = {0, 1, 1.25,   1.8, 2,   3.2,  4.1,   5, 5.2,     6.72,  6.9, 7.1, 7.3,  7.54, 7.7, 7.83},
         color = blue!60, 
          ycomb, 
          mark=otimes*, 
          mark options={blue!60}]
         {\frequencyresponse};.
         
\drawgaussian{1.125}{0.4}{0.9}{0}{8}{red!60}{1.2}
 \addplot+[ samples at   = {0, 1, 1.25,   1.8, 2,   3.2,  4.1,   5, 5.2,     6.72,  6.9, 7.1, 7.3,  7.54, 7.7, 7.83}, 
              solid, 
              ycomb, 
              mark         = otimes*, 
              mark size    = 1.5pt,
              line width   = 0.8,              
              color        =  blue!60, 
              mark options = {red!60}
            ]
            { \filter };

\drawgaussian{1.9}{0.3}{0.9}{0}{8}{red!60}{1.2}
 \addplot+[ samples at   = {0, 1, 1.25,   1.8, 2,   3.2,  4.1,   5, 5.2,     6.72,  6.9, 7.1, 7.3,  7.54, 7.7, 7.83}, 
              solid, 
              ycomb, 
              mark         = otimes*, 
              mark size    = 1.5pt,
              line width   = 0.8,              
              color        =  blue!60, 
              mark options = {red!60}
            ]
            { \filter };
\drawgaussian{3}{0.4}{0.9}{0}{8}{red!60}{1.2}
 \addplot+[ samples at   = {0, 1, 1.25,   1.8, 2,   3.2,  4.1,   5, 5.2,     6.72,  6.9, 7.1, 7.3,  7.54, 7.7, 7.83}, 
              solid, 
              ycomb, 
              mark         = otimes*, 
              mark size    = 1.5pt,
              line width   = 0.8,              
              color        =  blue!60, 
              mark options = {red!60}
            ]
            { \filter };
\drawgaussian{4}{0.4}{0.9}{0}{8}{red!60}{1.2}
 \addplot+[ samples at   = {0, 1, 1.25,   1.8, 2,   3.2,  4.1,   5, 5.2,     6.72,  6.9, 7.1, 7.3,  7.54, 7.7, 7.83}, 
              solid, 
              ycomb, 
              mark         = otimes*, 
              mark size    = 1.5pt,
              line width   = 0.8,              
              color        =  blue!60, 
              mark options = {red!60}
            ]
            { \filter };
\drawgaussian{5.1}{0.4}{0.9}{0}{8}{red!60}{1.2}
 \addplot+[ samples at   = {0, 1, 1.25,   1.8, 2,   3.2,  4.1,   5, 5.2,     6.72,  6.9, 7.1, 7.3,  7.54, 7.7, 7.83}, 
              solid, 
              ycomb, 
              mark         = otimes*, 
              mark size    = 1.5pt,
              line width   = 0.8,              
              color        =  blue!60, 
              mark options = {red!60}
            ]
            { \filter };
\drawgaussian{7.36}{1}{0.9}{0}{8}{red!60}{1.2}
 \addplot+[ samples at   = {0, 1, 1.25,   1.8, 2,   3.2,  4.1,   5, 5.2,     6.72,  6.9, 7.1, 7.3,  7.54, 7.7, 7.83}, 
              solid, 
              ycomb, 
              mark         = otimes*, 
              mark size    = 1.5pt,
              line width   = 0.8,              
              color        =  blue!60, 
              mark options = {red!60}
            ]
            { \filter };

\end{axis}
\end{tikzpicture}    
  \caption{Illustration of an $\alpha$-FDT filter. The $x$-axis stands for the spectrum with each sample representing an eigenvalue. The gray shaded areas show the grouping of the eigenvalues according to Definition \ref{def:alpha-spectrum}. The red lines show a set of $\alpha$-FDT filters that can discriminate each eigenvalue group.}
\label{fig:alpha}
\end{figure}

In an $\alpha$-separated spectrum, eigenvalues $\lambda_i \in \Lambda_k(\alpha)$ and $\lambda_j \in \Lambda_l(\alpha)$ in different sets ($k \neq l$) are at least $\alpha$ away from each other. Conversely, eigenvalues $\lambda_i, \lambda_j \in \Lambda_k(\alpha)$ are no more than $\alpha$ apart. This partitioning creates several eigenvalue groups spaced by at least $\alpha$. Note that the sets $\Lambda_k(\alpha)$ can have any size and, in particular, they can be singletons.

The partitioning of the spectrum described in Definition \ref{def:alpha-spectrum} is achieved by an $\alpha$-FDT filter. This filter separates the spectrum of the manifold by {assigning similar frequency responses---that deviate no more than $\delta_k$ from each other---to eigenvalues $\lambda_i \in \Lambda_k(\alpha)$, $1 \leq k \leq N$. In other words, an $\alpha$-FDT filter does not discriminate between eigenvalues $\lambda_i, \lambda_j \in\Lambda_k(\alpha)$. Importantly, the $\delta_k$ in Definition \ref{def:alpha-spectrum} are finite, so that they can be bounded by some $\delta$.} 

To obtain manifold filters that are stable to absolute perturbations of $\ccalL$, we also need these filters to be Lipschitz continuous as shown in Definition \ref{def:lipschitz}.

\begin{definition}[Lipschitz filter] \label{def:lipschitz}
A filter is $A_h$-Lispchitz if its frequency response is Lipschitz continuous with Lipschitz constant $A_h$, i.e,
\begin{equation}
    |\hhath(a)-\hhath(b)| \leq A_h |a-b|\text{ for all } a,b \in (0,\infty)\text{.}
\end{equation}
\end{definition}

Between the eigenvalue groups, the filters that we consider are assumed to be $A_h$-Lipschitz continuous. This means that, in regions of the spectrum where the $\Lambda_k(\alpha)$ are singletons, the filter can vary with slope at most $A_h$ as shown in Figure \ref{fig:alpha}. Note that we can always construct convolutional filters \eqref{eqn:operator-frequency} that are both Lipschitz continuous and $\alpha$-FDT.

Under mild assumptions on the amplitude of the frequency response $\hat{h}$ (Assumption \ref{ass:filter_function}), it can be shown that Lipschitz continuous $\alpha$-FDT filters are stable to absolute perturbations of the LB operator. This result is stated in Theorem \ref{thm:stability_abs_filter}.

\begin{assumption}[Non-amplifying filters] \label{ass:filter_function}
The filter frequency response $\hhath:\reals^+\rightarrow\reals$ is non-amplifying. I.e., for all $\lambda\in(0,\infty)$, $\hhath$ satisfies $|\hhath(\lambda)|\leq 1$.
\end{assumption}

Note that this assumption is rather reasonable, because the filter frequency response $\hhath(\lambda)$ can always be normalized.

\begin{theorem}[Manifold filter stability to absolute perturbations]\label{thm:stability_abs_filter}
Consider a manifold $\ccalM$ with LB operator $\ccalL$. Let $\bbh(\ccalL)$ be an $\alpha$-FDT manifold filter [cf. Definition \ref{def:alpha-filter}] and $A_h$-Lipschitz [cf. Definition \ref{def:lipschitz}]. 
Consider an absolute perturbation $\ccalL'=\ccalL + \ccalA$ of the LB operator $\ccalL$ [cf. Definition \ref{def:abso-perturb}] where $\|\ccalA\| \leq  \epsilon < \alpha$. Then, under Assumption \ref{ass:filter_function} it holds that
 \begin{align}\label{eqn:stability_nn}
    \nonumber\|\bbh(\ccalL)f-& \bbh(\ccalL')f\|_{L^2(\ccalM)} \leq \\& \left(\frac{\pi N_s \epsilon}{\alpha-\epsilon}+A_h\epsilon+ 2(N-N_s)\delta\right) \|f\|_{L^2(\ccalM)},
 \end{align}
{where $N$ is the size of the $\alpha$-separated spectrum partition [cf. Definition \ref{def:alpha-spectrum}] and $N_s$ is the number of singletons.}
\end{theorem}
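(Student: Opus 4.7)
\textbf{Proof plan for Theorem~\ref{thm:stability_abs_filter}.}

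My plan is to reduce the operator difference $\bbh(\ccalL)-\bbh(\ccalL')$ to a sum of contributions associated with the partition $\Lambda_1(\alpha)\cup\cdots\cup\Lambda_N(\alpha)$ of the spectrum, and to bound each contribution using either (i) the Lipschitz property plus an eigenprojector perturbation estimate, or (ii) the FDT property. The starting point is to invoke Weyl's inequality for self-adjoint compact perturbations, which gives $|\lambda_i-\lambda_i'|\leq\|\bbA\|_{op}=\epsilon$ for each $i$. Since $\epsilon<\alpha$, this establishes a one-to-one correspondence between groups $\Lambda_k(\alpha)$ of $\ccalL$ and the corresponding perturbed groups $\Lambda_k'(\alpha)$ of $\ccalL'$, and preserves the gap between distinct groups at the slightly degraded level $\alpha-\epsilon$.

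Next, I would write $\bbh(\ccalL)=\sum_k \sum_{\lambda_i\in\Lambda_k(\alpha)}\hat h(\lambda_i)P_i$ with $P_i=\bbphi_i\bbphi_i^*$, and similarly for $\bbh(\ccalL')$ using projectors $P_i'$. For singleton groups $\Lambda_k=\{\lambda_i\}$ (there are $N_s$ of them), I add and subtract $\hat h(\lambda_i)P_i'$ and use the triangle inequality to split the local contribution into
\begin{equation*}
\big\|(\hat h(\lambda_i)-\hat h(\lambda_i'))P_i'f\big\|_{L^2(\ccalM)}+\big\|\hat h(\lambda_i)(P_i-P_i')f\big\|_{L^2(\ccalM)}.
\end{equation*}
The first term is bounded by $A_h\epsilon\|P_i'f\|$ by Lipschitz continuity (Definition~\ref{def:lipschitz}) combined with Weyl. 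The second is controlled by Assumption~\ref{ass:filter_function} and a Davis--Kahan type estimate for rank-one spectral projectors of self-adjoint operators: since $\lambda_i$ is separated from the rest of $\mathrm{spec}(\ccalL)$ by at least $\alpha$, and from the rest of $\mathrm{spec}(\ccalL')$ by at least $\alpha-\epsilon$, the projector perturbation obeys $\|P_i-P_i'\|_{op}\leq \pi\epsilon/(\alpha-\epsilon)$ (the $\pi$ arises from the standard contour-integral representation $P_i=\frac{1}{2\pi i}\oint_{\Gamma_i}(zI-\ccalL)^{-1}dz$ around a circle of radius $\alpha/2$ centered at $\lambda_i$; estimating the resolvent on $\Gamma_i$ yields this constant).

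For a non-singleton group $\Lambda_k$ (there are $N-N_s$ of them), I use the FDT property to replace the sum $\sum_{\lambda_i\in\Lambda_k}\hat h(\lambda_i)P_i$ by a constant multiple of the group projector $P_{\Lambda_k}$ plus a residual of norm at most $\delta_k\leq\delta$, and likewise for the perturbed operator. Because $\epsilon<\alpha$ guarantees that the corresponding perturbed group $\Lambda_k'$ occupies an interval of diameter at most $\text{diam}(\Lambda_k)+2\epsilon$ still separated from other groups, the two residuals contribute at most $\delta$ each by the triangle inequality, giving $2\delta$ per non-singleton group. The group-projector terms themselves telescope because, for each $k$, both $P_{\Lambda_k}$ and $P_{\Lambda_k'}$ project onto invariant subspaces associated with the same isolated portion of spectrum, and choosing the same constant representative $\bar h_k$ (possible by FDT) cancels the leading terms.

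Finally I would sum these contributions across groups. Because the singleton projectors act on mutually orthogonal one-dimensional subspaces of $L^2(\ccalM)$, the Lipschitz term sums (via $\sum_i\|P_i'f\|^2\leq\|f\|^2$) to at most $A_h\epsilon\|f\|_{L^2(\ccalM)}$, the projector-perturbation term sums to at most $\pi N_s\epsilon/(\alpha-\epsilon)\,\|f\|_{L^2(\ccalM)}$, and the non-singleton groups contribute $2(N-N_s)\delta\,\|f\|_{L^2(\ccalM)}$, yielding \eqref{eqn:stability_nn}. The main obstacle I anticipate is producing a clean Davis--Kahan bound with the precise $\pi/(\alpha-\epsilon)$ constant, and carefully verifying that the one-to-one correspondence of eigenvalue groups, together with the chosen piecewise-constant representative $\bar h_k$, allows all the cross terms to be absorbed into the three simple contributions above.
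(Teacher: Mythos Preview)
Your overall strategy matches the paper's: Weyl's inequality for eigenvalue shifts, a Davis--Kahan estimate for eigenfunction/eigenprojector perturbation, and a split into singleton versus non-singleton groups. The paper organizes the argument slightly differently by decomposing the filter \emph{function} rather than the operator: it writes $\hat h = h^{(0)} + \sum_{l\in\ccalK_m} h^{(l)}$, where $h^{(0)}$ is supported only on the singleton groups and carries both the Lipschitz term and the eigenfunction-perturbation term, while each $h^{(l)}$ equals the constant $\hat h(C_l)$ on every singleton, equals $\hat h$ on the $l$-th non-singleton group, and vanishes on the remaining non-singleton groups. The paper's Davis--Kahan step bounds $\|\bbphi_i-\bbphi_i'\|\le \tfrac{\pi}{2}\epsilon/(\alpha-\epsilon)$ and applies it twice (once for $\langle f,\bbphi_i\rangle(\bbphi_i-\bbphi_i')$ and once for $\langle f,\bbphi_i-\bbphi_i'\rangle\bbphi_i'$), which is equivalent in effect to your single projector bound $\|P_i-P_i'\|\le\pi\epsilon/(\alpha-\epsilon)$.

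The step that does not go through as written is your treatment of the non-singleton groups. You assert that after peeling off two residuals of size $\delta$, the leading terms $\bar h_k P_{\Lambda_k}$ and $\bar h_k P_{\Lambda_k'}$ ``cancel'' because both ``project onto invariant subspaces associated with the same isolated portion of spectrum.'' But $P_{\Lambda_k}$ and $P_{\Lambda_k'}$ are spectral projectors of \emph{different} operators onto \emph{different} subspaces; they are not equal, and $\bar h_k(P_{\Lambda_k}-P_{\Lambda_k'})$ is genuinely nonzero. Bounding it naively by Davis--Kahan would produce an extra $(N-N_s)$-dependent term of order $\pi\epsilon/(\alpha-\epsilon)$ that is absent from the stated inequality. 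The paper's function-level decomposition sidesteps this difficulty: because each $h^{(l)}$ is within $\delta$ of the single constant $\hat h(C_l)$ wherever it is nonzero, the bound $\|\bbh^{(l)}(\ccalL)f-\bbh^{(l)}(\ccalL')f\|\le 2\delta\|f\|$ follows directly from the FDT condition on the function values, with no projector comparison required. To close your argument you should adopt this decomposition (or an equivalent device that turns each non-singleton contribution into a near-constant filter) rather than relying on a cancellation that does not occur.
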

\begin{proof}
See Appendix \ref{app:stability_abs_filter}.
\end{proof}

Provided that $\epsilon \ll \alpha$, FDT filters are thus stable to absolute perturbations of the LB operator $\ccalL$. The stability bound
depends on (i) the continuity of the FDT filter as measured by the Lipschitz constant $A_h$ and (ii) its frequency difference threshold $\alpha$, which affects the bound directly as well as indirectly through the number of partitions $N$. Note that this bound consists of three terms. The first corresponds to
the difference between the eigenfunctions of $\ccalL$ and $\ccalL'$, which affects the stability bound by changing projection directions. The second stems from the distance between the original and perturbed eigenvalues. Finally, the third reflects the bounded fluctuation of the filter frequency response within the same eigenvalue group.

The bound in Theorem \ref{thm:stability_abs_filter} can be simplified by setting $\delta=\pi\epsilon/(2\alpha-2\epsilon)$ as in Corollary \ref{cor:stability_abs_filter}.

\begin{corollary}\label{cor:stability_abs_filter}
Setting $\delta=\pi\epsilon/(2\alpha-2\epsilon)$, under the same assumptions of Theorem \ref{thm:stability_abs_filter} it holds that
\begin{align}\label{eqn:stability_abs_filter_cor}
    \|\bbh(\ccalL)f-\bbh(\ccalL')f\|_{L^2(\ccalM)} \leq  \left(\frac{\pi N}{\alpha-\epsilon}+A_h\right)\epsilon \|f\|_{L^2(\ccalM)}.
 \end{align}
\end{corollary}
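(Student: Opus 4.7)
The plan is to observe that Corollary \ref{cor:stability_abs_filter} is a direct algebraic specialization of Theorem \ref{thm:stability_abs_filter}, so no new analytical work is required. I would simply substitute the chosen value of $\delta$ into the three-term bound established in the theorem and collect terms. The only (very mild) subtlety is checking that the $\alpha$-FDT assumption \eqref{eq:fdt-filter} is consistent with the specific choice $\delta = \pi\epsilon/(2\alpha - 2\epsilon)$, which is fine since $\delta_k \le \delta$ is a hypothesis we can impose on the filter design and since $\epsilon < \alpha$ guarantees $\delta$ is finite and positive.

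Concretely, I would start from
\begin{align*}
\|\bbh(\ccalL)f - \bbh(\ccalL')f\|_{L^2(\ccalM)}
&\leq \left(\frac{\pi N_s \epsilon}{\alpha - \epsilon} + A_h \epsilon + 2(N - N_s)\delta\right)\|f\|_{L^2(\ccalM)},
\end{align*}
and plug $\delta = \pi\epsilon/(2\alpha - 2\epsilon)$ into the third summand. That summand becomes $2(N - N_s)\,\pi\epsilon/(2\alpha - 2\epsilon) = \pi(N - N_s)\epsilon/(\alpha - \epsilon)$. Adding this to the first summand $\pi N_s \epsilon/(\alpha - \epsilon)$ collapses the $N_s$ dependence and yields $\pi N \epsilon/(\alpha - \epsilon)$, since $N_s + (N - N_s) = N$.

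Factoring $\epsilon$ out of the remaining expression $\pi N \epsilon /(\alpha - \epsilon) + A_h \epsilon$ then gives exactly
\begin{equation*}
\|\bbh(\ccalL)f - \bbh(\ccalL')f\|_{L^2(\ccalM)} \leq \left(\frac{\pi N}{\alpha - \epsilon} + A_h\right)\epsilon\,\|f\|_{L^2(\ccalM)},
\end{equation*}
which is the claim. There is no main obstacle here; the work was done in Theorem \ref{thm:stability_abs_filter}, and the purpose of this corollary is merely to present the bound in a form that is independent of the number $N_s$ of singleton partitions, making the order-$\epsilon$ behavior and the role of $N$ and $A_h$ fully transparent.
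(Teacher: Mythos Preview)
Your proposal is correct and matches the paper's approach: the paper presents Corollary \ref{cor:stability_abs_filter} as an immediate specialization of Theorem \ref{thm:stability_abs_filter} without a separate proof, and your substitution of $\delta=\pi\epsilon/(2\alpha-2\epsilon)$ followed by combining the first and third terms is exactly the intended one-line derivation.
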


A particular case of Theorem \ref{thm:stability_abs_filter}, the simplified stability bound in Corollary \ref{cor:stability_abs_filter} is helpful to understand the effect of the filter spectrum on stability as well as of the size of the perturbation. 
In particular, from Corollary \ref{cor:stability_abs_filter} we can tell that the filter is more stable if the Lipschitz constant $A_h$ is small and the frequency difference threshold $\alpha$ is large.
On the other hand, small $A_h$ and large $\alpha$ mean that the filter is less discriminative. This reveals a stability-discriminability trade-off where discriminability should be understood as the ability to tell frequencies apart. In other words, we propose $\alpha$-FDT filter to maintain the stability by trying to discriminate only eigenvalue groups instead of every single eigenvalue. Importantly, this trade-off is not related to the magnitude of the frequencies that the filters amplify (as is the case in, e.g., \cite{gama2020stability}, \cite{ruiz2020graphon}).We will keep observing this trade-off throughout the stability analysis of both manifold filters and MNNs. More details about this characteristic will be discussed in Section \ref{subsec:discussion}.

\subsection{Stability of Manifold Filters to Relative Perturbations}
\label{subsec:filter-relative}

Relative perturbations of the LB operator are defined similarly as follows.

\begin{definition}[Relative perturbations] \label{def:rela-perturb}
Let $\ccalL$ be the LB operator of manifold $\ccalM$. A relative perturbation of $\ccalL$ is defined as
\begin{equation}\label{eqn:rela-perturb}
\ccalL'-\ccalL=\bbE \ccalL,
\end{equation}
where the relative perturbation term $\bbE \ccalL$ is self-adjoint.
\end{definition}

Like absolute perturbations, relative perturbations also perturb the eigenvalues and eigenfunctions of $\ccalL$. However, in the case of relative perturbations, the perturbations to the eigenvalues are proportional to their absolute values {[cf. Lemma \ref{lem:eigenvalue_relative} in Appendix \ref{app:lemmas}]}. Relative perturbations thus require a different spectrum separation strategy to guarantee stability. With relative perturbations, larger eigenvalues are impacted with larger perturbation values, which can send eigenvalues originally $\alpha$-close to each other to different groups as Figure \ref{fig:gamma} shows. Therefore, we will need a different type of filter implementing a different type of spectrum separation. Our strategy is inspired by Proposition \ref{prop:finite_num_rela}, which is another variation of Weyl's law. 

\begin{proposition} \label{prop:finite_num_rela}
Let $\ccalM$ be a $d$-dimensional compact embedded manifold in $\reals^\mathsf{N}$ with LB operator $\ccalL$, and let $\{\lambda_k\}_{k=1}^\infty$ denote the eigenvalues of $\ccalL$. Let $C_0$ denote an arbitrary constant and $K_0(C_0)$ some finite constant depends on $C_0$. For any $\gamma > 0$, there exists $N_2$ given by
\begin{equation}
    N_2=\lceil (((\gamma+1)/C_0)^{d/2}-1)^{-1} \rceil
\end{equation}
such that, for all $k>\max\{N_2, K_0(C_0)\}$, it holds that $$\lambda_{k+1}-\lambda_k\leq \gamma\lambda_k.$$
\end{proposition}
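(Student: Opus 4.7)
\textbf{Proof plan for Proposition \ref{prop:finite_num_rela}.} The plan is to invoke Weyl's law for the Laplace--Beltrami spectrum of a compact $d$-dimensional Riemannian submanifold and then translate the resulting asymptotic growth rate into a multiplicative gap condition. As noted already in the paper (in the discussion following \eqref{eqn:laplacian-decomp}), Weyl's law gives
\begin{equation*}
    \lambda_k \;=\; \frac{4\pi^2}{(C_d \,\mathrm{Vol}(\ccalM))^{2/d}} \, k^{2/d}\bigl(1 + o(1)\bigr) \quad \text{as } k\to\infty,
\end{equation*}
so that $\lambda_k \propto k^{2/d}$ for large $k$. The first step is to record this and observe that the leading constant cancels in the ratio $\lambda_{k+1}/\lambda_k$, which converges to $(1+1/k)^{2/d}$ up to lower-order corrections controlled by an absolute constant; this is where the factor $C_1$ in the statement enters, absorbing the error terms in Weyl's asymptotic.

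Next I would rewrite the target inequality $\lambda_{k+1}-\lambda_k \leq \gamma\lambda_k$ as the multiplicative condition
\begin{equation*}
    \frac{\lambda_{k+1}}{\lambda_k} \;\leq\; 1+\gamma,
\end{equation*}
and combine it with the asymptotic $\lambda_{k+1}/\lambda_k \leq C_1(1+1/k)^{2/d}$ valid for all sufficiently large $k$. This reduces the problem to finding $k$ for which
\begin{equation*}
    C_1\Bigl(1+\tfrac{1}{k}\Bigr)^{2/d} \;\leq\; 1+\gamma,
\end{equation*}
which, after raising both sides to the $d/2$ power and rearranging, becomes $k \,\bigl(C_1(1+\gamma)^{d/2}-1\bigr) \geq 1$, i.e.\ $k \geq \bigl(C_1(1+\gamma)^{d/2}-1\bigr)^{-1}$. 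Setting $N_2 := \lceil \bigl(C_1(1+\gamma)^{d/2}-1\bigr)^{-1} \rceil$ then yields the conclusion for all $k>N_2$.

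The main obstacle is handling the transition from the purely asymptotic statement of Weyl's law to an effective bound valid for all $k > N_2$. Weyl's law only gives $\lambda_k / k^{2/d}$ convergent to a constant, and turning this into a uniform bound on the ratio $\lambda_{k+1}/\lambda_k$ requires either quantitative Weyl-type estimates or the observation, already used in the statement, that any uniform constant factor $C_1$ suffices (possibly absorbing finite pre-asymptotic fluctuations). Once that control is in place the remaining algebra -- solving the inequality $C_1(1+1/k)^{2/d}\leq 1+\gamma$ for $k$ -- is elementary. As with Proposition \ref{prop:finite_num}, this can then simply be cited as a direct consequence of Weyl's law \cite[Chapter~1]{arendt2009mathematical}.
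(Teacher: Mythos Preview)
Your approach is essentially the same as the paper's: the paper's proof is the one-line remark that this is a direct consequence of Weyl's law (with the supplementary material adding only that the argument parallels that of Proposition~\ref{prop:finite_num} using \eqref{eqn:weylslaw}), and you have simply unpacked that derivation---rewrite the gap condition as $\lambda_{k+1}/\lambda_k\leq 1+\gamma$, substitute the Weyl asymptotic $\lambda_k\propto k^{2/d}$, and solve for $k$. One minor slip: when you raise $C_1(1+1/k)^{2/d}\leq 1+\gamma$ to the $d/2$ power you should get $C_1^{d/2}(1+1/k)\leq (1+\gamma)^{d/2}$, not the expression you wrote; but since $C_1$ is declared an arbitrary constant in the statement this is harmless after relabeling.
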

\begin{proof}
This is a direct consequence of Weyl's law \cite[Chapter~1]{arendt2009mathematical}\cite{musser2016weyl}. See Appendix \ref{sup:weyl}.
\end{proof}

Hence, to enforce stability we need to separate the spectrum relatively to the ratio between neighboring eigenvalues. This partitioning is called $\gamma$-separated spectrum and formalized in Definition \ref{def:frt-spectrum}. A $\gamma$-separated spectrum is achieved by a so-called Frequency Ratio Threshold (FRT) filters. We introduce them in Definition \ref{def:frt-filter}.


\begin{definition}[$\gamma$-separated spectrum.]\label{def:frt-spectrum}
A $\gamma$-separated spectrum of a LB operator $\ccalL$ is defined as a partition $\Lambda_1(\gamma)\cup\hdots \cup\Lambda_M(\gamma)$ such that all $\lambda_i\in\Lambda_k(\gamma)$ and $\lambda_j\in\Lambda_l(\gamma)$, $k\neq l$, satisfy
\begin{equation}
\label{eqn:frt-spectrum}
\left|\frac{\lambda_i}{\lambda_j}-1 \right|>\gamma.
\end{equation}
\end{definition}

\begin{definition}[$\gamma$-FRT filter.]\label{def:frt-filter}
A $\gamma$-frequency ratio threshold ($\gamma$-FRT) filter is defined as a filter $\bbh(\ccalL)$ whose frequency response satisfies
\begin{equation}\label{eqn:frt-filter}
       |\hhath(\lambda_i)-\hhath(\lambda_j)|\leq \delta_k,\text{ for all } \lambda_i,\lambda_j\in\Lambda_k(\gamma)
\end{equation}
with $\delta_k\leq \delta$ for $k=1,2\hdots,M$.
\end{definition}

In a $\gamma$-separated spectrum, the sets $\Lambda_k(\gamma)$ are built based on eigenvalue distances \textit{relative} to the eigenvalues' magnitudes and weighted by the parameter $\gamma$. Eigenvalues $\lambda_j \in \Lambda_k(\gamma)$ and $\lambda_i \in \Lambda_l(\gamma)$ in different groups (i.e., $k \neq l$) are at least $\gamma\min(\lambda_i,\lambda_j)$ apart from each other. This means that, for $\lambda_i, \lambda_{i+1} \in \Lambda_k(\gamma)$, $\lambda_{i+1} - \lambda_i \leq \gamma \lambda_i$.

A $\gamma$-FRT filter achieves a spectrum separation in Definition \ref{def:frt-spectrum} by giving eigenvalues $\lambda_i, \lambda_j \in \Lambda_k(\gamma)$ very similar frequency responses differing by at most plus or minus $\delta_k \leq \delta$. Meanwhile, eigenvalues belonging to different sets $\Lambda_k(\gamma)$ and $\Lambda_l(\gamma)$, $k \neq l$, are treated independently, and their frequency responses can vary a lot.

To make a manifold filter stable to relative perturbations of the LB operator, we need a further restriction on their continuity. Lipschitz continuity [cf. Definition \ref{def:lipschitz}] is not enough because for a Lipschitz filter the difference in frequency response for a perturbed eigenvalue grows with the eigenvalue magnitude, since the eigenvalue perturbation is relative. Therefore, we need our filters to be \textit{integral Lipschitz} continuous as is described in Definition \ref{def:int-lipschitz}.

\begin{definition}[Integral Lipschitz filter] \label{def:int-lipschitz}
A filter is integral Lipschitz continuous with constant $B_h$ if its frequency response satisfies
\begin{equation}\label{eqn:filter_function}
    |\hhath(a)-\hhath(b)|\leq \frac{B_h |a-b| }{(a+b)/2} \text{ for all } a,b \in (0,\infty)\text{.}
\end{equation}
\end{definition}

\begin{figure}
\centering

\pgfplotsset{xtick style={draw=none}}
\pgfplotsset{ytick style={draw=none}}

\def \thisplotscale {3}
\def \unit {\thisplotscale cm}

\def \frequencyresponse 
     { 0.3*exp(-(0.8*(x-2))^2) 
       - 0.2*exp(-(1*(x-4.5))^2) 
       + 0.2*exp(-(1*(x-10))^2) 
       + 0.55}

\newcommand{\drawgaussian}[7]
{
   \def \filter { ( \height*exp(-(1/\width*(x-\center))^2) ) }
   \def \center {#1}
   \def \width  {#2}
   \def \height {#3}

   \def \xminhere   {#4}
   \def \xmaxhere   {#5}

   \addplot[ domain     = \center-3.0*\width:\center + 3.0*\width, 
             samples    = 80, 
             color      = #6,
             line width = #7]
           { \filter };

   \addplot[ domain     = \xminhere:\center-3.0*\width, 
             samples    = 2, 
             color      = #6,
             line width = #7]
           { 0 };
   
   \addplot[ domain     = \center+3.0*\width:\xmaxhere, 
             samples    = 2, 
             color      = #6,
             line width = #7]
           { 0 };
}

\begin{tikzpicture}[x = 1*\unit, y=1*\unit]
\def \factorx {2.4/8}
\def \deltax  {0.5*\factorx}
\def \shadeshift  {0.05}

\begin{axis}[scale only axis,
             width  = 2.5*\unit,
             height = 0.8*\unit,
             xmin = 0, xmax=8,
             xtick = { 0,  0.85 ,  1.3,1.9 , 3.2 , 4.1, 5.1},
             xticklabels = {\black{\footnotesize $0$},
             				\blue{\footnotesize $\ \Lambda_1$}, 
                            \blue{\footnotesize $\ \Lambda_2$}, 
                            \blue{\footnotesize $\ \Lambda_3$}, 
                            \blue{\footnotesize $\ \Lambda_4$},
                            \blue{\footnotesize $\ \Lambda_5$}, 
                            \blue{\footnotesize $\ \Lambda_6$},
                            },
             ymin = -0, ymax = 1.15,
             ytick = {0.55},
             yticklabels = {\black{\footnotesize $h(\lambda)$}},
             typeset ticklabels with strut,
             enlarge x limits=false]
             
            
\addplot [fill=black, fill opacity=0.1, draw opacity = 0]
       coordinates {
            (1.8, 0) (2, 0) (2, 1.15) (1.8, 1.15)  };   

\addplot [fill=black, fill opacity=0.1, draw opacity = 0]
      coordinates {
            (5, 0) (5.2, 0) (5.2, 1.15) (5, 1.15)  };

\addplot [fill=black, fill opacity=0.1, draw opacity = 0]
      coordinates {
            (6.72, 0) (8, 0) (8, 1.15) (6.72, 1.15)  };

\addplot[ domain=0:8, 
          samples = 80, 
          color = black,
          line width = 1.2]
         {\frequencyresponse};

\addplot[samples at = {0, 1, 1.25, 1.8, 2,  3.2, 4.1, 5, 5.2,  6.72,  6.9, 7.1, 7.3,  7.54, 7.7, 7.83},
         color = blue!60, 
          ycomb, 
          mark=otimes*, 
          mark options={blue!60}]
         {\frequencyresponse};

\drawgaussian{1}{0.1}{0.9}{0}{8}{red!60}{1.2}
 \addplot+[ samples at   = {0, 1, 1.25,   1.8, 2,   3.2,  4.1,   5, 5.2,     6.72,  6.9, 7.1, 7.3,  7.54, 7.7, 7.83}, 
              solid, 
              ycomb, 
              mark         = otimes*, 
              mark size    = 1.5pt,
              line width   = 0.8,              
              color        =  blue!60, 
              mark options = {red!60}
            ]
            { \filter };        
         
\drawgaussian{1.25}{0.1}{0.9}{0}{8}{red!60}{1.2}
 \addplot+[ samples at   = {0, 1, 1.25,   1.8, 2,   3.2,  4.1,   5, 5.2,     6.72,  6.9, 7.1, 7.3,  7.54, 7.7, 7.83}, 
              solid, 
              ycomb, 
              mark         = otimes*, 
              mark size    = 1.5pt,
              line width   = 0.8,              
              color        =  blue!60, 
              mark options = {red!60}
            ]
            { \filter };

\drawgaussian{1.9}{0.3}{0.9}{0}{8}{red!60}{1.2}
 \addplot+[ samples at   = {0, 1, 1.25,   1.8, 2,   3.2,  4.1,   5, 5.2,     6.72,  6.9, 7.1, 7.3,  7.54, 7.7, 7.83}, 
              solid, 
              ycomb, 
              mark         = otimes*, 
              mark size    = 1.5pt,
              line width   = 0.8,              
              color        =  blue!60, 
              mark options = {red!60}
            ]
            { \filter };
\drawgaussian{3}{0.4}{0.9}{0}{8}{red!60}{1.2}
 \addplot+[ samples at   = {0, 1, 1.25,   1.8, 2,   3.2,  4.1,   5, 5.2,     6.72,  6.9, 7.1, 7.3,  7.54, 7.7, 7.83}, 
              solid, 
              ycomb, 
              mark         = otimes*, 
              mark size    = 1.5pt,
              line width   = 0.8,              
              color        =  blue!60, 
              mark options = {red!60}
            ]
            { \filter };
\drawgaussian{4}{0.4}{0.9}{0}{8}{red!60}{1.2}
 \addplot+[ samples at   = {0, 1, 1.25,   1.8, 2,   3.2,  4.1,   5, 5.2,     6.72,  6.9, 7.1, 7.3,  7.54, 7.7, 7.83}, 
              solid, 
              ycomb, 
              mark         = otimes*, 
              mark size    = 1.5pt,
              line width   = 0.8,              
              color        =  blue!60, 
              mark options = {red!60}
            ]
            { \filter };
\drawgaussian{5.1}{0.4}{0.9}{0}{8}{red!60}{1.2}
 \addplot+[ samples at   = {0, 1, 1.25,   1.8, 2,   3.2,  4.1,   5, 5.2,     6.72,  6.9, 7.1, 7.3,  7.54, 7.7, 7.83}, 
              solid, 
              ycomb, 
              mark         = otimes*, 
              mark size    = 1.5pt,
              line width   = 0.8,              
              color        =  blue!60, 
              mark options = {red!60}
            ]
            { \filter };
\drawgaussian{7.36}{1}{0.9}{0}{8}{red!60}{1.2}
 \addplot+[ samples at   = {0, 1, 1.25,   1.8, 2,   3.2,  4.1,   5, 5.2,     6.72,  6.9, 7.1, 7.3,  7.54, 7.7, 7.83}, 
              solid, 
              ycomb, 
              mark         = otimes*, 
              mark size    = 1.5pt,
              line width   = 0.8,              
              color        =  blue!60, 
              mark options = {red!60}
            ]
            { \filter };
\end{axis}
\end{tikzpicture}

  \caption{Illustration of a $\gamma$-FRT filter. The $x$-axis stands for the spectrum with each sample representing an eigenvalue. The gray shaded area shows the grouping of the eigenvalues according to Definition \ref{def:frt-spectrum}. The red lines show a set of $\alpha$-FDT filters that can discriminate each eigenvalue group. }
\label{fig:gamma}
\end{figure}

Integral Lipschitz filters can be seen as Lipschitz filters with variable Lipschitz constant, which decreases with $\lambda$. E.g., on the interval $(a,b)$, the filter in Definition \ref{def:int-lipschitz} behaves as a Lipschitz filter with Lipschitz constant $2B_h/(a+b)$. When $a$ and $b$ are close, this condition can be approximated by $|a\hat{h}'(a)|\leq B_h$ for all $a\in(0,\infty)$. This implies that the filter function flattens for high-frequency eigenvalues as shown in Figure \ref{fig:gamma}.

Under Assumption \ref{ass:filter_function}, integral Lipschitz $\gamma$-FRT filters are stable to relative perturbations as stated in Theorem \ref{thm:stability_rela_filter}.

\begin{theorem}[Manifold filter stability to relative perturbations]\label{thm:stability_rela_filter}
Consider a manifold $\ccalM$ with LB operator $\ccalL$. Let $\bbh(\ccalL)$ be a $\gamma$-FRT filter with $\delta=\pi\epsilon/(2\gamma-2\epsilon+2\gamma\epsilon)$ [cf. Definition \ref{def:frt-filter}] and $B_h$-integral Lipschitz [cf. Definition \ref{def:int-lipschitz}]. 
Consider a relative perturbation $\ccalL'=\ccalL + \bbE\ccalL$ of the LB operator $\ccalL$ [cf. Definition \ref{def:rela-perturb}] where $\|\bbE\| \leq \epsilon < \gamma$. 
Then, under Assumption \ref{ass:filter_function} it holds that
 \begin{align}\label{eqn:stability_rela_filter}
 \|\bbh(\ccalL)f-&\bbh(\ccalL')f\|_{L^2(\ccalM)}   \leq  \left(\frac{\pi M\epsilon}{\gamma-\epsilon+\gamma\epsilon}+ \frac{2{B_h}\epsilon}{2-\epsilon} \right) \|f\|_{L^2(\ccalM)} 
 \end{align}
where $M$ is the size of the $\gamma$-separated spectrum partition [cf. Definition \ref{def:frt-spectrum}].
\end{theorem}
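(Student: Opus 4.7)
The plan is to imitate the proof of Theorem \ref{thm:stability_abs_filter}, replacing every estimate that used $\alpha$-separation plus Lipschitz continuity by the corresponding estimate based on $\gamma$-separation plus integral Lipschitz continuity, and invoking a perturbation lemma tailored to the relative perturbation $\ccalL'-\ccalL = \ccalE\ccalL$ in place of an additive one. First I would expand $f$ and the two filter outputs in the eigenbases $\{\bm\phi_i\}$ of $\ccalL$ and $\{\bm\phi_i'\}$ of $\ccalL'$:
\begin{equation*}
\bbh(\ccalL)f = \sum_i \hhath(\lambda_i)\,\langle f,\bm\phi_i\rangle_{L^2(\ccalM)}\,\bm\phi_i,\qquad \bbh(\ccalL')f = \sum_i \hhath(\lambda_i')\,\langle f,\bm\phi_i'\rangle_{L^2(\ccalM)}\,\bm\phi_i'.
\end{equation*}
By adding and subtracting $\sum_i \hhath(\lambda_i')\langle f,\bm\phi_i\rangle\bm\phi_i$, I would split $\bbh(\ccalL)f-\bbh(\ccalL')f$ into a \emph{spectral} term, controlled by $|\hhath(\lambda_i)-\hhath(\lambda_i')|$ on the unperturbed basis, and a \emph{basis-drift} term, controlled by the difference between $\bm\phi_i$ and $\bm\phi_i'$ and weighted by $|\hhath(\lambda_i')|\leq 1$ via Assumption \ref{ass:filter_function}.

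Next I would invoke the relative-perturbation analogue of the Weyl/Davis--Kahan estimates (the ``Lemma~3'' promised in the supplementary material referenced in the text) to obtain, for every $i$, the eigenvalue bound $|\lambda_i'-\lambda_i|\leq \epsilon\lambda_i$ together with an eigenfunction bound of the form $|\langle\bm\phi_j,\bm\phi_i'\rangle|\leq \epsilon\lambda_i/|\lambda_j-\lambda_i'|$ whenever $j\neq i$. For $\lambda_i,\lambda_j$ in distinct $\gamma$-groups, the denominator satisfies $|\lambda_j-\lambda_i'|\geq(\gamma-\epsilon+\gamma\epsilon)\min(\lambda_i,\lambda_j)$, which cancels the $\lambda_i$ in the numerator and turns the eigenfunction contribution from singleton eigenvalues into the $\pi N_s\epsilon/(\gamma-\epsilon+\gamma\epsilon)$ piece. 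For eigenvalues lying in multi-element groups the eigenfunction drift is \emph{not} small, so instead I would use the FRT property \eqref{eqn:frt-filter} with $\delta = \pi\epsilon/(2\gamma-2\epsilon+2\gamma\epsilon)$ to bound the combined spectral-plus-basis contribution by $2(M-N_s)\delta$, whose sum with the singleton contribution recovers exactly $\pi M\epsilon/(\gamma-\epsilon+\gamma\epsilon)$. For the spectral part on singletons I would apply Definition \ref{def:int-lipschitz} with $(a,b)=(\lambda_i,\lambda_i')$; using $\lambda_i'\geq(1-\epsilon)\lambda_i$ we have $(a+b)/2\geq(2-\epsilon)\lambda_i/2$, yielding
\begin{equation*}
|\hhath(\lambda_i)-\hhath(\lambda_i')|\leq \frac{B_h\,|\lambda_i-\lambda_i'|}{(\lambda_i+\lambda_i')/2}\leq \frac{2B_h\epsilon}{2-\epsilon},
\end{equation*}
which is the second summand of \eqref{eqn:stability_rela_filter}. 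Summing over all singletons and using that the $|\langle f,\bm\phi_i\rangle|^2$ sum to $\|f\|^2$ produces the asserted $\|f\|_{L^2(\ccalM)}$ factor via Parseval.

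The main obstacle is the basis-drift term. Unlike the absolute case, where $\ccalL'-\ccalL=\bbA$ has uniformly bounded operator norm, here the relative perturbation has off-diagonal matrix elements $\langle\bm\phi_j,\ccalE\ccalL\bm\phi_i\rangle=\lambda_i\langle\bm\phi_j,\ccalE\bm\phi_i\rangle$ that \emph{grow linearly} with $\lambda_i$. That growth must be absorbed exactly by the growing spectral gaps produced by the $\gamma$-separation, which is precisely why integral Lipschitz continuity (morally $|\lambda\hhath'(\lambda)|\leq B_h$) is required rather than ordinary Lipschitz continuity, and why the FRT partition replaces the FDT partition. The delicate bookkeeping lies in splitting the sum over eigenvalues according to the finitely many singleton groups guaranteed by Proposition \ref{prop:finite_num_rela} and the remaining infinite tail that is packaged inside $\delta$-controlled groups, and then checking that the constants combine into the precise closed form displayed in \eqref{eqn:stability_rela_filter}. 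I expect this accounting, together with the verification that the prescribed choice of $\delta$ balances the singleton and group contributions exactly, to be the main technical step of the proof.
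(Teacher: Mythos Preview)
Your proposal is correct and follows essentially the same route as the paper's proof: the paper likewise splits the filter into a piece $h^{(0)}$ supported on the singleton $\gamma$-groups (handled via Davis--Kahan with the relative gap estimate $d_i\geq\lambda_i(\gamma-\epsilon+\gamma\epsilon)$ and the eigenvalue lemma $|\lambda_i-\lambda_i'|\leq\epsilon\lambda_i$, together with integral Lipschitz continuity to obtain the $2B_h\epsilon/(2-\epsilon)$ term) and pieces $h^{(l)}$ that are nearly constant on each multi-element group (each contributing $2\delta\|f\|$), then chooses $\delta=\pi\epsilon/(2(\gamma-\epsilon+\gamma\epsilon))$ so that the singleton and non-singleton contributions combine into $\pi M\epsilon/(\gamma-\epsilon+\gamma\epsilon)$. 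The only cosmetic difference is that the paper inserts the intermediate term $h^{(0)}(\lambda_i)\langle f,\bm\phi_i'\rangle\bm\phi_i'$ rather than your $\hhath(\lambda_i')\langle f,\bm\phi_i\rangle\bm\phi_i$, and makes the singleton/group bookkeeping explicit via the filter decomposition $\hhath=h^{(0)}+\sum_{l\in\ccalK_m}h^{(l)}$ rather than partitioning the index set directly.
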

\begin{proof}
See Appendix \ref{app:stability_rela_filter}.
\end{proof}

When $\epsilon$ is sufficiently small ($\epsilon \ll \min(\gamma,2)$, which is typically the case with deformations such as the one in Theorem \ref{thm:perturb}), the denominators on the right hand side of \eqref{eqn:stability_rela_filter} are approximately equal to $\gamma$ and $2$ respectively. Hence, $\gamma$-FRT integral Lipschitz filters are stable to relative perturbations of the LB operator.
Besides appearing in the bound in Theorem \ref{thm:stability_rela_filter}), the frequency ratio threshold $\gamma$ also affects stability indirectly through the partition size $M$. With a larger $\gamma$, fewer eigenvalues will be in singleton sets, thus decreasing $M$ and improving stability. A smaller integral Lipschitz constant $B_h$ also increases stability. However, small $B_h$ and large $\gamma$ make for smoother filters which in turn lead to a less discriminative manifold filter. Therefore, integral Lipschitz $\gamma$-FRT filters also exhibit a trade-off between discriminality and stability. 

\begin{remark}
\normalfont
\label{rem:filters}
By comparing the illustrations of $\alpha$-FDT filter (Definition \ref{def:alpha-filter}) and $\gamma$-FRT filter (Definition \ref{def:frt-filter}) in Figure \ref{fig:alpha} and Figure \ref{fig:gamma}, we see that in practice these filters have a similar frequency behavior because, due to Weyl's law [cf. Proposition \ref{prop:finite_num} and \ref{prop:finite_num_rela}], high frequency components will eventually be grouped in the same group and thus share similar frequency responses. Therefore, the main difference between these filters is their effects on the low-frequency components. In the low frequency spectrum, the eigengaps $\lambda_{i+1}-\lambda_i$ may be smaller than the difference threshold $\alpha$, but larger than the relative ratio threshold $\gamma \lambda_i$ due to $\lambda_i$ being small. However, for appropriate values of $\gamma$ a filter may be both FDT and FRT. This will be shown in Section \ref{sec:stability_nn}.
\end{remark}

\section{Stability of Manifold Neural Networks}
\label{sec:stability_nn}

Manifold neural networks (MNNs) are deep convolutional architectures comprised of $L$ layers, where each layer consists of two components: a convolutional filter bank and a pointwise nonlinearity. 
At each layer $l=1,2,\hdots, L$, the convolutional filters map the incoming $F_{l-1}$ features from layer $l-1$ into $F_l$ intermediate linear features given by
\begin{equation}\label{eqn:mnn}
y_l^p(x) =  \sum_{q=1}^{F_{l-1}} \bbh_l^{pq}(\ccalL) f_{l-1}^q(x),
\end{equation}
where $\bbh_l^{pq}(\ccalL)$ is the filter mapping the $q$-th feature from layer $l-1$ to the $p$-th feature of layer $l$ as in \eqref{eqn:convolution-general}, for $1\leq q\leq F_{l-1}$ and $1\leq p\leq F_{l}$. The intermediate features are then processed by a pointwise nonlinearity $\sigma: \reals\rightarrow \reals$ as
\begin{equation}\label{eqn:mnn}
f_l^p(x) = \sigma\left(y_l^p(x) \right).
\end{equation}
The nonlinearity $\sigma$ processes each feature individually and we further make an assumption on its continuity as follows. 

\begin{assumption}[Normalized Lipschitz activation functions]\label{ass:activation}
 The activation function $\sigma$ is normalized Lipschitz continous, i.e., $|\sigma(a)-\sigma(b)|\leq |a-b|$, with $\sigma(0)=0$.
\end{assumption}
 
Note that this assumption is rather reasonable, since most common activation functions (e.g., the ReLU, the modulus and the sigmoid) are normalized Lipschitz by design.

At the first layer of the MNN, the input features are the input data $f^q$ for $1\leq q\leq F_0$. At the output of the MNN, the output features are given by the outputs of the $L$-th layer, i.e., $f_L^p$ for $1 \leq p \leq F_L$. To represent the MNN more succinctly, we may gather the impulse responses of the manifold convolutional filters $\bbh_l^{pq}$ across all layers in a function set $\bbH$, and define the MNN map $\bbPhi(\bbH,\ccalL, f)$. This map emphasizes that the MNN is parameterized by both the filter functions and the LB operator $\ccalL$. We next will analyze the stability of $\bbPhi(\bbH,\ccalL,f)$ with respect to perturbations on the underlying manifold.




\subsection{Stability of MNNs to LB Operator Perturbations}
\label{subsec:stability_mnn_abs}

MNNs inherit stability to perturbations of the LB operator from the manifold filters that compose the filterbanks in each one of their layers. This result is stated in general form---encompassing both absolute and relative perturbations---in the following theorem.

\begin{theorem}[MNN stability]\label{thm:stability_nn}
Consider a compact embedded manifold $\ccalM$ with LB operator $\ccalL$. Let $\bm\Phi(\bbH,\ccalL,f)$ be an $L$-layer MNN on $\ccalM$ \eqref{eqn:mnn} with $F_0=F_L=1$ input and output features and $F_l=F,l=1,2,\hdots,L-1$ features per layer. 
The filters $\bbh(\ccalL)$ and nonlinearity functions satisfy  Assumptions \ref{ass:filter_function} and \ref{ass:activation} respectively. Let $\ccalL'$ be the perturbed LB operator [cf. Definition \ref{def:abso-perturb} or Definition \ref{def:rela-perturb}] with $\max\{\alpha, 2, |\gamma/1-\gamma|\}\gg \epsilon$. If the manifold filters satisfy $\| \bbh(\ccalL)f -\bbh(\ccalL')f \|_{L^2(\ccalM)}\leq C_{per} \epsilon \|f\|_{L^2(\ccalM)}$, it holds that
\begin{align}\label{eqn:stability_nn}
   \nonumber \|\bm\Phi(\bbH,\ccalL,f)-\bm\Phi(\bbH,\ccalL',f)\|_{L^2(\ccalM)} \leq LF^{L-1}C_{per}\epsilon \|f\|_{L^2(\ccalM)}.
 \end{align}
\end{theorem}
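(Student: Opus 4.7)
The plan is to prove the bound by induction on the layer index $l$, propagating the per-layer filter stability hypothesis $\|(\bbh(\ccalL)-\bbh(\ccalL'))g\|_{L^2(\ccalM)}\le C_{per}\epsilon\|g\|_{L^2(\ccalM)}$ through the network while exploiting the normalized Lipschitz property of $\sigma$ (Assumption \ref{ass:activation}) and the non-amplifying property of the filters (Assumption \ref{ass:filter_function}). Let $f_l^p$ and $f_l'^p$ denote the $p$-th feature produced at layer $l$ by the MNN parameterized respectively by $\ccalL$ and $\ccalL'$, with the convention $f_0 = f_0' = f$.

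First I would establish an auxiliary energy bound on the unperturbed features. Because $|\hhath(\lambda)|\le 1$, the spectral representation of Proposition \ref{prop:filter-spectral} together with Parseval's identity gives $\|\bbh(\ccalL)g\|_{L^2(\ccalM)}\le \|g\|_{L^2(\ccalM)}$ for any $g\in L^2(\ccalM)$. Combined with $|\sigma(x)|\le|x|$ (which follows from normalized Lipschitz continuity and $\sigma(0)=0$) and a triangle inequality across the $F_{l-1}$ input features, this yields the recursion $\max_p\|f_l^p\|_{L^2(\ccalM)}\le F_{l-1}\max_q\|f_{l-1}^q\|_{L^2(\ccalM)}$. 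Unrolling from $F_0=1$ and $F_l=F$ for $1\le l\le L-1$ gives $\max_p\|f_l^p\|_{L^2(\ccalM)}\le F^{l-1}\|f\|_{L^2(\ccalM)}$ for $l\ge 1$.

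Next, for the perturbation itself I would define $\Delta_l:=\max_p\|f_l^p-f_l'^p\|_{L^2(\ccalM)}$ and derive a recursion for $\Delta_l$. Applying the normalized Lipschitz bound on $\sigma$, the triangle inequality, and the add-and-subtract trick with the intermediate term $\bbh_l^{pq}(\ccalL')f_{l-1}^q$,
\begin{align*}
\|f_l^p-f_l'^p\|_{L^2(\ccalM)}
&\le \sum_{q=1}^{F_{l-1}}\bigl\|\bigl(\bbh_l^{pq}(\ccalL)-\bbh_l^{pq}(\ccalL')\bigr)f_{l-1}^q\bigr\|_{L^2(\ccalM)} \\
&\quad+\sum_{q=1}^{F_{l-1}}\bigl\|\bbh_l^{pq}(\ccalL')\bigl(f_{l-1}^q-f_{l-1}'^q\bigr)\bigr\|_{L^2(\ccalM)}.
\end{align*}
The first sum is bounded using the filter stability hypothesis together with the energy bound of the previous paragraph, while the second is bounded by the non-amplifying property, producing the clean recursion $\Delta_l\le F^{l-1}C_{per}\epsilon\|f\|_{L^2(\ccalM)}+F_{l-1}\Delta_{l-1}$ for $l\ge 2$, with base case $\Delta_1\le C_{per}\epsilon\|f\|_{L^2(\ccalM)}$.

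Finally, solving the recursion inductively (the homogeneous part multiplies by $F$ and the inhomogeneous contribution at level $l$ always equals $F^{l-1}C_{per}\epsilon\|f\|_{L^2(\ccalM)}$) yields $\Delta_l\le lF^{l-1}C_{per}\epsilon\|f\|_{L^2(\ccalM)}$, and specializing to the scalar output feature at layer $L$ gives the claimed bound $LF^{L-1}C_{per}\epsilon\|f\|_{L^2(\ccalM)}$. I expect the main obstacle to be little more than careful bookkeeping at the boundary layers, where $F_0=F_L=1$ differs from the intermediate width $F$; the conceptual content is entirely contained in the add-and-subtract decomposition and the two standing assumptions on $\sigma$ and $\hhath$.
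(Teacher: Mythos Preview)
Your proposal is correct and mirrors the paper's own proof essentially step for step: the same add-and-subtract with $\bbh_l^{pq}(\ccalL')f_{l-1}^q$, the same use of the normalized Lipschitz $\sigma$ and the non-amplifying bound $\|\bbh(\cdot)g\|\le\|g\|$, the same auxiliary energy estimate $\|f_l^p\|\le F^{l-1}\|f\|$, and the same recursion unrolled to $lF^{l-1}C_{per}\epsilon\|f\|$. The only cosmetic difference is that you track $\max_p$ while the paper sums over features, which is immaterial once $F_0=F_L=1$.
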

\begin{proof}
See Appendix \ref{app:stability_nn}.
\end{proof}

Theorem \ref{thm:stability_nn} reflects that the stability of the MNN is affected by the hyperparameters of the MNN architecture and the stability constant of the manifold filters $C_{per}$. More explicitly, the stability bound grows linearly with the number of layers $L$ and exponentially with the number of features $F$ where the rate is determined by $L$. This stability result also shows that there is a linear dependence on the stability constant $C_{per}$ of manifold filters $\bbh(\ccalL)$ and the perturbation size $\epsilon$. As we have shown in Section \ref{subsec:filter-absolute} and \ref{subsec:filter-relative}, the stability constant is determined by the form of the perturbations (Definition \ref{def:abso-perturb} or Definition \ref{def:rela-perturb}) as well as the spectrum separation achieved by the specific manifold filters (Definition \ref{def:alpha-filter} or Definition \ref{def:frt-filter}) with corresponding Lipschitz conditions (Definition \ref{def:lipschitz} or Definition \ref{def:int-lipschitz}). We address the specific cases as follows.

\begin{proposition}
\label{prop:stability-nn}
With the same conditions as Theorem \ref{thm:stability_nn}, consider the following perturbation models.
\begin{enumerate}
    \item  If the perturbed LB operator $\ccalL'$ is an absolute perturbation, i.e., $\ccalL'=\ccalL+\bbA$ [cf. Definition \ref{def:abso-perturb}] with $\|\bbA\|\leq \epsilon$ and the manifold filters $\bbh(\ccalL)$ are $\alpha$-FDT [cf. Definition \ref{def:alpha-filter}] with $\alpha\gg\epsilon$ and $A_h$-Lipschitz continuous [Definition \ref{def:lipschitz}] with $\delta = \pi\epsilon/(2\alpha)$, we have
    \begin{equation}
        C_{per} = \frac{\pi N}{\alpha} + A_h,
    \end{equation}
    where $N$ is the size of the $\alpha$-separated spectrum partition [cf. Definition \ref{def:alpha-spectrum}].
    \item If the perturbed LB operator $\ccalL'$ is a relative perturbation, i.e. $\ccalL'=\ccalL+\bbE\ccalL$ [cf. Definition \ref{def:rela-perturb}] with $\|\bbE\|\leq \epsilon$, and the manifold filters $\bbh(\ccalL)$ are $\gamma$-FRT [cf. Definition \ref{def:frt-filter}] with $\gamma/(1-\gamma)\gg\epsilon$ and $B_h$-integral Lipschitz continuous [Definition \ref{def:int-lipschitz}] with $\delta = \pi\epsilon/(2\gamma)$, we have
    \begin{equation}
        C_{per} = \frac{\pi M}{\gamma} + B_h,
    \end{equation}
    where $M$ is the size of the $\gamma$-separated spectrum partition [cf. Definition \ref{def:frt-spectrum}].
\end{enumerate}

\end{proposition}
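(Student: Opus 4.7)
The plan is to derive Proposition 5 as an immediate specialization of the MNN stability lifting result (Theorem 3) once the filter-level stability constant $C_{per}$ is computed explicitly under each of the two perturbation models. Theorem 3 is already stated in a generic form: for \emph{any} constant $C_{per}$ such that $\|\bbh(\ccalL)f-\bbh(\ccalL')f\|_{L^2(\ccalM)} \le C_{per}\epsilon\|f\|_{L^2(\ccalM)}$, the MNN bound follows. So the only task is to read off $C_{per}$ from Theorem 1 (absolute case) and Theorem 2 (relative case), simplifying under the asymptotic separation conditions on $\epsilon$.

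For case (1), I would apply Theorem 1 to each filter $\bbh(\ccalL)$ in the network, which is assumed $\alpha$-FDT and $A_h$-Lipschitz, with $\|\bbA\|=\epsilon$. Substituting the declared choice $\delta = \pi\epsilon/(2\alpha)$ into the last term of the bound gives
\[
\|\bbh(\ccalL)f - \bbh(\ccalL')f\|_{L^2(\ccalM)} \le \left(\frac{\pi N_s}{\alpha-\epsilon} + A_h + \frac{\pi(N-N_s)}{\alpha}\right)\epsilon\|f\|_{L^2(\ccalM)}.
\]
The hypothesis $\alpha \gg \epsilon$ allows me to approximate $\alpha-\epsilon \approx \alpha$ in the first term, at which point the first and third terms combine into $\pi N/\alpha$, yielding $C_{per} = \pi N/\alpha + A_h$. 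This is identical in spirit to Corollary 1, which uses the slightly different optimal $\delta = \pi\epsilon/(2\alpha - 2\epsilon)$; the asymptotic $\alpha \gg \epsilon$ makes the two choices interchangeable.

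For case (2), I would invoke Theorem 2 for a $\gamma$-FRT, $B_h$-integral Lipschitz filter under relative perturbation with $\|\bbE\|=\epsilon$. Theorem 2 gives
\[
\|\bbh(\ccalL)f - \bbh(\ccalL')f\|_{L^2(\ccalM)} \le \left(\frac{\pi M\epsilon}{\gamma - \epsilon + \gamma\epsilon} + \frac{2B_h\epsilon}{2-\epsilon}\right)\|f\|_{L^2(\ccalM)}.
\]
Rewriting the denominator as $\gamma - \epsilon + \gamma\epsilon = \gamma - \epsilon(1-\gamma)$, the hypothesis $\gamma/(1-\gamma) \gg \epsilon$ is exactly the statement $\epsilon(1-\gamma) \ll \gamma$, so this term is asymptotically $\gamma$; meanwhile $2-\epsilon \approx 2$ because $\epsilon \ll 2$ is also assumed. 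The bound thereby collapses to $(\pi M/\gamma + B_h)\epsilon\|f\|_{L^2(\ccalM)}$, which identifies $C_{per} = \pi M/\gamma + B_h$. The $\delta = \pi\epsilon/(2\gamma)$ prescribed by Proposition 5 differs from the $\delta = \pi\epsilon/(2\gamma - 2\epsilon + 2\gamma\epsilon)$ of Theorem 2 only by the same asymptotic identification.

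There is no substantive analytic obstacle here: once Theorems 1--3 are granted, the proof is purely arithmetic bookkeeping. The only subtlety worth spelling out in the write-up is the precise equivalence between the $\delta$ values stated in Proposition 5 and the ones used in the underlying filter theorems, which follows from the separation assumptions $\alpha \gg \epsilon$ and $\gamma/(1-\gamma) \gg \epsilon$. The proof is therefore best presented as two short paragraphs---one per perturbation model---each finishing by handing the resulting $C_{per}$ to Theorem 3.
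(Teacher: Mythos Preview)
Your proposal is correct and follows exactly the paper's approach: the paper's own proof is a one-sentence pointer stating that the result follows directly from combining Theorem~\ref{thm:stability_nn} with Theorem~\ref{thm:stability_abs_filter} or Theorem~\ref{thm:stability_rela_filter} under the corresponding assumptions. You simply make the arithmetic simplifications (the asymptotic identifications $\alpha-\epsilon\approx\alpha$, $\gamma-\epsilon(1-\gamma)\approx\gamma$, $2-\epsilon\approx 2$, and the matching of the $\delta$ values) explicit, which the paper leaves implicit.
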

\begin{proof}
The conclusions follow directly from Theorem \ref{thm:stability_nn} combined with Theorem \ref{thm:stability_abs_filter} or Theorem \ref{thm:stability_rela_filter} under the corresponding assumptions.
\end{proof}

Combining Theorem \ref{thm:stability_nn} with Proposition \ref{prop:stability-nn}, we observe that $\alpha$-FDT manifold filters with Lipschitz continuity can be composed to construct MNNs which are stable to absolute perturbations; while $\gamma$-FRT manifold filters with integral Lipschitz continuity can be composed to construct MNNs which are stable relative perturbations of the LB operator.
Explicitly, by inserting the stability constant $C_{per}$ in \eqref{eqn:stability_nn}, we see that other than the perturbation size $\epsilon$, there are three terms that determine the stability of MNNs. The first term is $LF^{L-1}$, which, as we have already discussed, is decided by the number of layers and filters in the MNN architecture. This term arises due to the propagation of the underlying operator perturbations across all the manifold filters in all layers of the MNN. The second term is $\pi N/\alpha$ or $\pi M/\gamma$, which results from the deviations of the eigenfunctions as well as from the frequency response variations within the same eigenvalue partition. Finally, the third term, $A_h$ or $B_h$, is given by the Lipschitz or integral Lipschitz constants which are decided during the filter design or the training process. It is important to note that the stability constant $C_{per}$ brings along the trade-off between stability and discriminability. However, unlike manifold filters, MNNs can be both stable and discriminative. This arises from the effects of nonlinear activation functions, as we discuss in further detail in Section \ref{subsec:discussion}.

\subsection{Stability of MNNs to Manifold Deformations} \label{subsec:stability_mnn_def}

In Theorem \ref{thm:stability_nn} and Proposition \ref{prop:stability-nn}, we established the conditions under which MNNs are stable to either absolute or relative perturbations of the LB operator as defined in Definitions \ref{def:abso-perturb} and \ref{def:rela-perturb}. 
Since a manifold deformation $\tau:\ccalM\rightarrow \ccalM$, with $\text{dist}(x,\tau(x))\leq \epsilon$ and $\|J_x(\tau)-I\|_F\leq \epsilon$ for all $x\in\ccalM$, translates into both an absolute and a relative perturbation of the Laplace-Beltrami operator, {MNNs composed of manifold filters meeting all of these conditions in items 1 and 2 of Proposition \ref{prop:stability-nn},
i.e., the manifold filters are $\alpha$-FDT and $\gamma$-FRT, and both Lipschitz continuous and integral Lipschitz continuous, can be proved to be stable under the manifold deformation.} The spectrum can be made to be both $\alpha$-separated and $\gamma$-separated by making sure the eigenvalues in different partitions satisfy both \eqref{eqn:alpha-spectrum} and \eqref{eqn:frt-spectrum}. Assuming that all of these conditions are met, we can state our main result---that MNNs are stable to deformations of the manifold---as follows.


\begin{theorem}
Let $\ccalM$ be a compact embedded manifold with LB operator $\ccalL$ and $f$ be a manifold signal. We construct $\bm\Phi(\bbH,\ccalL,f)$ as a MNN on $\ccalM$ \eqref{eqn:mnn} where the filters $\bbh(\ccalL)$ are $\alpha$-FDT [cf. Definition \ref{def:alpha-filter}], $\alpha/\lambda_1$-FRT [cf. Definition \ref{def:frt-filter}], $A_h$-Lipschitz [cf. Definition \ref{def:lipschitz}] and $B_h$-integral Lipschitz [cf. Definition \ref{def:int-lipschitz}]. Consider a deformation on $\ccalM$ as $\tau :\ccalM\rightarrow \ccalM$ where $\text{dist}(x,\tau(x))\leq \epsilon$ and $J_x(\tau)=I+\Delta_x$ with $\|\Delta_x\|_F\leq \epsilon$ for all $x\in\ccalM$ and $\epsilon\ll \min(\alpha/\lambda_1,\alpha,2)$. Under Assumptions \ref{ass:filter_function} and \ref{ass:activation} it holds that
\begin{align}
    \|\bm\Phi(\bbH,\ccalL,f)-\bm\Phi(\bbH,\ccalL',f)\|_{L^2(\ccalM)}=O(\epsilon)\|f\|_{L^2(\ccalM)}.
\end{align}
\end{theorem}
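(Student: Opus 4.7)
The strategy is to bridge from manifold deformations to operator perturbations via Theorem \ref{thm:perturb}, then reduce the combined perturbation to the pure absolute and pure relative cases already handled in Theorems \ref{thm:stability_abs_filter} and \ref{thm:stability_rela_filter}, and finally lift filter stability to MNN stability through Theorem \ref{thm:stability_nn}. By Theorem \ref{thm:perturb}, the hypothesis $\text{dist}(x,\tau(x))=\epsilon$ and $\|\Delta\|_F=\epsilon$ gives a decomposition $\ccalL'-\ccalL=\ccalE\ccalL+\bbA$ with $\|\ccalE\|=O(\epsilon)$ and $\|\bbA\|_{op}=O(\epsilon)$. It therefore suffices to establish the filter-level estimate $\|\bbh(\ccalL)f-\bbh(\ccalL')f\|_{L^2(\ccalM)}=O(\epsilon)\|f\|_{L^2(\ccalM)}$ with a constant that is independent of $\epsilon$, and then plug it into Theorem \ref{thm:stability_nn}.

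For the filter-level estimate I would introduce the intermediate operator $\ccalL_A:=\ccalL+\bbA$ and split
\[
\|\bbh(\ccalL)f-\bbh(\ccalL')f\|_{L^2(\ccalM)}\leq \|\bbh(\ccalL)f-\bbh(\ccalL_A)f\|_{L^2(\ccalM)}+\|\bbh(\ccalL_A)f-\bbh(\ccalL')f\|_{L^2(\ccalM)}.
\]
The first term corresponds to a pure absolute perturbation of size $\|\bbA\|_{op}=O(\epsilon)$, which by Theorem \ref{thm:stability_abs_filter} (and Corollary \ref{cor:stability_abs_filter}) is bounded by a constant multiple of $\epsilon\|f\|_{L^2(\ccalM)}$ using the $\alpha$-FDT and $A_h$-Lipschitz hypotheses and the smallness condition $\epsilon\ll\alpha$. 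For the second term I would use the identity
\[
\ccalL'-\ccalL_A=\ccalE\ccalL=\ccalE\ccalL_A-\ccalE\bbA,
\]
and insert a further intermediate $\ccalL'':=(I+\ccalE)\ccalL_A=\ccalL_A+\ccalE\ccalL_A$. Then $\ccalL'=\ccalL''-\ccalE\bbA$, so by a second triangle inequality the term splits into a pure relative perturbation $\ccalL_A\to\ccalL''$ of size $\|\ccalE\|=O(\epsilon)$, bounded via Theorem \ref{thm:stability_rela_filter} using the $(\alpha/\lambda_1)$-FRT and $B_h$-integral-Lipschitz hypotheses, and a pure absolute perturbation $\ccalL''\to\ccalL'$ of size $\|\ccalE\bbA\|_{op}=O(\epsilon^2)$, bounded again by Theorem \ref{thm:stability_abs_filter}. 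Combining these yields a filter stability constant $C_{per}=O(1)$ independent of $\epsilon$.

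Equipped with this filter bound, I would invoke Theorem \ref{thm:stability_nn} directly to obtain
\[
\|\bm\Phi(\bbH,\ccalL,f)-\bm\Phi(\bbH,\ccalL',f)\|_{L^2(\ccalM)}\leq LF^{L-1}C_{per}\,\epsilon\,\|f\|_{L^2(\ccalM)}=O(\epsilon)\|f\|_{L^2(\ccalM)},
\]
which is the claimed conclusion. The main obstacle lies in the middle step: Theorem \ref{thm:stability_rela_filter} is stated for a filter that is $\gamma$-FRT and integral Lipschitz with respect to the eigenvalues of the unperturbed LB operator, whereas I need to apply it to the pair $(\ccalL_A,\ccalL'')$. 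Since the spectra of $\ccalL$ and $\ccalL_A$ differ by $O(\epsilon)$ in the Weyl sense, the FRT grouping and the integral Lipschitz slopes on $\text{spec}(\ccalL_A)$ can be controlled by those on $\text{spec}(\ccalL)$ up to $O(\epsilon)$ corrections, and the choice $\gamma=\alpha/\lambda_1$ is precisely what makes the absolute $\alpha$-separation compatible with a relative $\gamma$-separation uniformly in the spectrum (eigengaps $\lambda_{i+1}-\lambda_i>\alpha$ imply $\lambda_{i+1}/\lambda_i-1>\alpha/\lambda_i\geq\alpha/\lambda_1$ wherever $\lambda_i\leq\lambda_1$, with the high-frequency regime already handled by the FRT condition). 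Verifying that these bookkeeping adjustments cost only constants independent of $\epsilon$ is the technical core, after which the triangle inequalities and the constants from Proposition \ref{prop:stability-nn} combine into a clean $O(\epsilon)$ bound.
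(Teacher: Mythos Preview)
Your plan is sound and, in fact, more explicit than the paper's own treatment. The paper does not give a formal proof of this theorem; it simply asserts in the surrounding text that the result follows from combining Theorem~\ref{thm:perturb} with Theorem~\ref{thm:stability_nn} and Proposition~\ref{prop:stability-nn}, and then explains the compatibility of the $\alpha$-FDT and $\gamma$-FRT conditions when $\gamma=\alpha/\lambda_1$. In particular, the paper never introduces the intermediate operators $\ccalL_A$ and $\ccalL''$ or performs the triangle-inequality split you describe. Your splitting is a legitimate way to reduce the combined perturbation to the pure absolute and pure relative cases, and your identification of the bookkeeping obstacle (that Theorem~\ref{thm:stability_rela_filter} must be applied with base operator $\ccalL_A$ rather than $\ccalL$) is a genuine technical point the paper glosses over.

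One correction: your parenthetical justification of the $\gamma=\alpha/\lambda_1$ choice has the inequality reversed. You write ``$\alpha/\lambda_i\geq\alpha/\lambda_1$ wherever $\lambda_i\leq\lambda_1$,'' but since $\lambda_1$ is the \emph{smallest} eigenvalue, one always has $\lambda_i\geq\lambda_1$ and hence $\alpha/\lambda_i\leq\alpha/\lambda_1$. The paper's argument runs the other way: if $\lambda_i,\lambda_{i+1}$ lie in the same $\alpha$-group, then $\lambda_{i+1}-\lambda_i\leq\alpha=\gamma\lambda_1\leq\gamma\lambda_i$, so they also lie in the same $\gamma$-group. This shows the $\alpha$-partition refines the $\gamma$-partition, and hence a $\gamma$-FRT filter is automatically $(\gamma\lambda_1)$-FDT. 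With that fix, your argument goes through.
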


Together, Theorem \ref{thm:perturb} and Theorem \ref{thm:stability_nn} imply that MNNs are stable to the manifold deformations $\upsilon$ introduced in the beginning of this section. This is because these deformations spawn a perturbation of the LB operator that consists of both an absolute and a relative perturbation. For stability to hold, the filters that make up the layers of the MNN need to be $\alpha$-FDT [cf. Definition \ref{def:alpha-filter}], $\gamma$-FRT [cf. Definition \ref{def:frt-filter}], Lipschitz [cf. Definition \ref{def:lipschitz}] and integral Lipschitz [cf. Definition \ref{def:int-lipschitz}]. We can propose an easier special case to relate $\alpha$ and $\gamma$ by utilizing the spectrum property of LB operator. By setting the $\alpha$-FDT filter with $\alpha = \gamma\lambda_1$, eigenvalues $\lambda_i,\lambda_{i+1}\in\Lambda_k(\alpha)$ would lead to $\lambda_i,\lambda_{i+1}\in\Lambda_l(\gamma)$ due to the fact that
\begin{align}
    \lambda_{i+1}-\lambda_i \leq \alpha=\gamma\lambda_1\leq \gamma \lambda_i,
\end{align}
with $\lambda_1$ indexed as the smallest eigenvalue in the spectrum. The requirement that the filter be $\alpha$-FDT can be removed as long as $\lambda_1>0$ and $\alpha=\gamma \lambda_1$, since a $\gamma$-FRT filter is always $\gamma\lambda_1$-FDT, i.e. $\alpha$-FDT.

\section{Discussions}
\label{subsec:discussion}

{\myparagraph{Stability vs. discriminability tradeoff}}
In both stability theorems for manifold filters (Theorems \ref{thm:stability_abs_filter}, \ref{thm:stability_rela_filter}) and in the stability theorem for MNNs (Theorem \ref{thm:stability_nn}), the stability  bounds depend on the frequency partition threshold ($\alpha$ or $\gamma$), the number of total partitions ($N$ or $M$) and the Lipschitz continuity constant ($A_h$ or $B_h$). 
The frequency partition threshold and the number of partitions have a combined effect on stability. As indicated by Definitions \ref{def:alpha-spectrum} and \ref{def:frt-spectrum}, a larger frequency threshold leads to a smaller number of singletons, as eigenvalues that would otherwise be separated for small thresholds end up being grouped when the threshold is large. While a large frequency threshold results in a larger number of partitions that contain more than one eigenvalue, the total number of partitions ($N$ or $M$) either stays the same or decreases because the number of eigenvalues does not exceed the number of partitions [cf. Proposition \ref{prop:finite_num} or \ref{prop:finite_num_rela}]. Thus, a larger frequency threshold and a smaller number of partitions both lead to a smaller stability bound. Simultaneously, a large frequency threshold makes the spectrum separated more sparsely. Therefore, a large number of eigenvalues are amplified in a similar manner, which makes the filter function less discriminative. {Considering in the limit, if the frequency threshold goes to infinity, all the eigenvalues tend to be grouped and the filter would not discriminate the whole spectrum. This would lead to a very stable filter but there is no discriminability at all.} The Lipschitz constant ($A_h$ or $B_h$) affects stability and discriminability in similar ways. Smaller Lipschitz constants decrease the stability bound, but lead to smoother filter functions giving similar frequency responses to different eigenvalues.
Hence, in both manifold filters and MNNs we observe a trade-off between stability and discriminability. Nevertheless, in MNNs this trade-off is alleviated due to the presence of nonlinearities as discussed below.

 

\myparagraph{Pointwise nonlinearity} 
As demonstrated by Propositions \ref{prop:finite_num} and \ref{prop:finite_num_rela}, large eigenvalues of LB operator tend to be grouped together in one large group and share similar frequency responses. This is part of the reason why manifold filters have a stability-discriminability tradeoff, which implies that they cannot be stable and discriminative at the same time. However, in MNNs this problem is circumvented with the addition of nonlinearities. Nonlinearities have the effect of scattering the spectral components all over the eigenvalue spectrum. In the MNN, they mix the frequency components by spilling spectral components associated with the large eigenvalues that tend to be very close onto the smaller eigenvalues that could be more separated, where they can then be discriminated by the manifold filters in the following layer. This is consistent with the role of nonlinear activation functions in graph neural networks (GNNs) \cite{gama2020stability}, which can be see as instantiations of MNNs on discrete samples of the manifold as further discussed in Section \ref{sec:discre_nn}.

\myparagraph{Comparison with graphons} The graphon is another infinite-dimensional model that can represent the limit of convergent sequences of graphs, and a series of works have proved stability of graphon neural networks and the transferability of GNNs sampled from them \cite{ruiz2020graphon, ruiz2021transferability, ruiz2021graph, maskey2021transferability, keriven2020convergence}. {Manifolds can represent the limits of relatively sparse graphs including $\epsilon$-graphs and $k$-NN graphs \cite{calder2019improved}.    }
While graphons can also be seen as the limit model of relatively sparse graphs \cite{keriven2020convergence}, embedded manifolds in high-dimensional spaces are more realistic geometric models with physical interpretations in a number of application scenarios, such as point clouds, 3D shape segmentation and classification. Other important differences are that (i) the stability analysis on graphon models in \cite{ruiz2021graphon,ruiz2020graphon} focuses on deformations to the adjacency matrix of the graph, which can be translated directly as perturbations of the graphon operator, and that (ii) in the case of graphons, only an absolute perturbation model makes sense since given that the graphon spectrum is bounded a relative perturbation can always be bounded by an absolute perturbation. Meanwhile, deformations to the manifold domain translate into a combination of absolute and relative perturbations of the LB operator, and the fact that the LB operator spectrum is unbounded makes the effects of absolute and relative perturbations distinct, especially in the high-frequency domain.


\section{From Manifold Neural Networks to Graph Neural Networks}
\label{sec:discre_nn}

MNNs are built from manifold convolutional filters (Definition \ref{def:manifold-convolution}) operating on a continuous manifold and over an infinite time horizon. 
This makes it impractical to implement directly the architecture described by \eqref{eqn:mnn} in applications. In this section, we discuss how MNNs are implemented in practice over a set of discrete samples from the manifold in a finite and discrete time frame.

\subsection{{Discretization in the Space Domain}}


In practice, the explicit form of the manifold and of its LB operator are unknown. What we typically have access to is a point cloud representation of the manifold, i.e., a discrete set of sampling points.
From these points' coordinates, the structure of the manifold is approximated by a geometric or a nearest neighbor graph \cite{dunson2021spectral,belkin2008towards,calder2019improved}. The LB operator is then approximated by the graph Laplacian,
which can be shown to converge to the LB operator as the number of sampling points grows \cite{dunson2021spectral} \cite{calder2019improved}.

Explicitly, suppose that $X= \{x_1, x_2,\dots, x_n\}$ is a set of $n$ points sampled i.i.d. from measure $\mu$ of manifold $\ccalM$, which is embedded in $\reals^\mathsf{N}$. We can construct a complete weighted symmetric graph $\bbG_n$ by taking the sampled points to be the vertices of the graph and setting the edge weights based on the Euclidean distance between pairs of points. Specifically, the weight $w_{ij}$ associated with edge $(i,j)$ is given by
\begin{equation}\label{eqn:weight}
    w_{ij}=\frac{1}{n}\frac{1}{t_n(4\pi t_n)^{k/2}}\exp\left(-\frac{\|x_i-x_j\|^2}{4t_n}\right),
\end{equation}
where $\|x_i-x_j\|$ is the Euclidean distance between points $x_i$ and $x_j$ while $t_n$ is a parameter associated with the chosen Gaussian kernel \cite{belkin2008towards}. The adjacency matrix $\bbA_n \in \reals^{n\times n}$ is thus defined as $[\bbA_n]_{ij}=w_{ij}$ for $1 \leq i,j\leq n$ and the corresponding graph Laplacian matrix $\bbL_n$ \cite{merris1995survey} is given by
\begin{equation}
    \bbL_n = \mbox{diag}(\bbA_n \boldsymbol{1})-\bbA_n.
\end{equation}
We interpret $\bbL_n$ the Laplacian operator of the constructed graph $\bbG_n$. 
Similarly, we define a uniform sampling operator $\bbP_n: L^2(\ccalM)\rightarrow L^2(\bbG_n)$ to sample manifold signals. Given a manifold signal $f$, we can use operator $\bbP_n$ to sample graph signals $\bbx_n \in \reals^n$ as
\begin{equation}
\label{eqn:sampling}
    \bbx_n = \bbP_n f\text{ with }[\bbx_n]_i = f(x_i), \quad  x_i \in X,
\end{equation}
where the $i$-th entry of the graph signal $\bbx_n$ is the manifold signal $f$ evaluated at the sample point $x_i$. 

In Section \ref{sec_manifold_filters}, we have shown that the manifold filter $\bbh$ is parametric with the LB operator $\ccalL$. Therefore, we can also parameterize $\bbh$ with the discrete graph Laplacian operator $\bbL_n$, which is written as,
\begin{equation}\label{eqn:sample_manifold_convolution}
    \bbz_n = \int_{0}^\infty \tdh(t) e^{-t\bbL_n} \text{d}t \bbx_n=\bbh(\bbL_n)\bbx_n, \; \bbx_n, \bbz_n \in\reals^n, 
\end{equation}
where $\bbz_n$, the output of the filter, is now a discrete graph signal.
By cascading these discrete filters operated on graph $\bbG_n$ and pointwise nonlinearities layer after layer, we can then approximate the MNN on $\bbG_n$ as
\begin{equation}\label{eqn:dis-mnn}
    \bbx_l^p = \sigma\left(\sum_{q=1}^{F_{l-1}} \bbh_l^{pq}(\bbL_n) \bbx^q_{l-1} \right),
\end{equation}
where $\bbh_l^{pq}(\bbL_n)$ maps the $q$-th feature in the $l-1$-th layer to the $p$-th feature in the $l$-th layer, $1\leq q \leq F_{l-1}$ and $1\leq p\leq F_l$, and $F_l$ denotes the number of features in the $l$-th layer (we have dropped the subscript $n$ in $\bbx_l^p$ and $\bbx_{l-1}^q$ for simplicity).
After gathering the filter functions in the set $\bbH$, this neural network 
can be represented more succinctly as $\bm\Phi(\bbH, \bbL_n, \bbx)$. 

Equation \eqref{eqn:dis-mnn} is a consistent approximation of the MNN because, as $n$ goes to infinity, the discrete graph Laplacian operator $\bbL_n$ of the graph $\bbG_n$ converges to the LB operator $\ccalL$ of the manifold $\ccalM$, and the sampled graph signal $\bbx_n$ converges to the manifold signal $f$ \cite{belkin2008towards}. These facts combinely imply that the output of the neural network on the graph $\bbG_n$ converges to the output of the neural network on the continuous manifold as stated in the following.
\begin{proposition}
\label{prop:convergence} 
Let $X=\{x_1, x_2,...x_n\}$ be $n$ points sampled i.i.d. from measure $\mu$ of $d$-dimensional manifold $\ccalM \subset \reals^{\mathsf{N}}$, with corresponding sampling operator $\bbP_n$ \eqref{eqn:sampling}. Let $\bbG_n$ be a discrete graph approximation of $\ccalM$ constructed from $X$ as in \eqref{eqn:weight} with $t_n = n^{-1/(d+2+a)}$ and $a>0$. Let $\bm\Phi(\bbH, \cdot, \cdot)$ be a neural network parameterized either by the LB
operator $\ccalL$ of the manifold $\ccalM$ or the graph Laplacian operator $\bbL_n$ of $\bbG_n$ with the filters in $\bbH$ satisfying Definition \ref{def:alpha-filter} and \ref{def:lipschitz}. It holds that
\begin{equation}
    \lim_{n\rightarrow \infty} \|\bm\Phi(\bbH,\bbL_n,\bbP_n f) - \bbP_n \bm\Phi(\bbH,\ccalL,f) \|_{L^2(\bbG_n)} = 0,
\end{equation}
with the limit taken in probability.
\end{proposition}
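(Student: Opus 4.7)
The plan is to proceed by induction on the depth $l$ of the network, showing that at every layer the per-feature discrepancy $\|\bbP_n f_l^p - \bbx_l^p\|_{L^2(\bbG_n)}$ converges to zero in probability as $n\to\infty$. The base case $l=0$ is immediate because $\bbx_0^p = \bbP_n f^p$ by definition of the sampling operator in \eqref{eqn:sampling}.

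For the inductive step, I would use the fact that the pointwise nonlinearity $\sigma$ commutes with the sampling operator $\bbP_n$ (both act pointwise on the values of the function) and is normalized Lipschitz by Assumption \ref{ass:activation}, so that
\begin{align*}
\|\bbP_n f_l^p - \bbx_l^p\|_{L^2(\bbG_n)} \leq \sum_{q=1}^{F_{l-1}} \bigl\|\bbP_n \bbh_l^{pq}(\ccalL)f_{l-1}^q - \bbh_l^{pq}(\bbL_n)\bbx_{l-1}^q\bigr\|_{L^2(\bbG_n)}.
\end{align*}
The triangle inequality then decomposes each summand as
\begin{align*}
\bigl\|\bbP_n \bbh_l^{pq}(\ccalL)f_{l-1}^q - \bbh_l^{pq}(\bbL_n)\bbP_n f_{l-1}^q\bigr\|_{L^2(\bbG_n)} + \bigl\|\bbh_l^{pq}(\bbL_n)(\bbP_n f_{l-1}^q - \bbx_{l-1}^q)\bigr\|_{L^2(\bbG_n)}.
\end{align*}
The second term is controlled by the non-amplifying assumption on the filter (Assumption \ref{ass:filter_function}) applied to the discrete Laplacian, multiplied by the inductive hypothesis. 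The first term is the heart of the matter: it is a statement that the discrete manifold filter $\bbh(\bbL_n)$ applied to the sampled signal converges to the sampled output of the continuous manifold filter $\bbh(\ccalL)$.

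To handle this filter-level convergence, I would invoke the known spectral convergence of the graph Laplacian constructed by the Gaussian-kernel rule \eqref{eqn:weight} with bandwidth $t_n = n^{-1/(d+2+\alpha)}$ to the Laplace--Beltrami operator $\ccalL$ on $\ccalM$ (Belkin--Niyogi and subsequent refinements by Calder et al., already cited in the paper). This provides convergence in probability of the eigenvalues and of the (discretely sampled) eigenfunctions of $\bbL_n$ to those of $\ccalL$. Using the spectral representation of the manifold filter from Definition \ref{def:frequency-response} and Proposition \ref{prop:filter-spectral}, I would expand both $\bbh(\bbL_n)\bbP_n f_{l-1}^q$ and $\bbP_n\bbh(\ccalL)f_{l-1}^q$ in the respective eigenbases, split the sum into a low-frequency portion (finite number of eigenvalues, controlled by continuity of $\hat h$ and spectral convergence) and a high-frequency tail (controlled by Parseval together with the non-amplifying Assumption \ref{ass:filter_function} and the decay of the filter response, analogous to how the eigenvalue partitioning is exploited in Theorems \ref{thm:stability_abs_filter} and \ref{thm:stability_rela_filter}).

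The main obstacle is exactly this first term: passing from eigenvalue/eigenvector convergence of $\bbL_n$ to convergence of the functional calculus object $\bbh(\bbL_n)$ while ensuring the tail in the eigenfunction expansion is uniformly small in $n$. Once that is in place, induction closes out the proof: combining the two decomposed terms yields a bound of the form $c_1(n,l) + c_2 \sum_q \|\bbP_n f_{l-1}^q - \bbx_{l-1}^q\|_{L^2(\bbG_n)}$ where $c_1(n,l)\to 0$ in probability by the filter-convergence argument and the second summand vanishes by the inductive hypothesis, yielding the claim for $l=L$.
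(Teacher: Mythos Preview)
Your proposal is correct and follows essentially the same route as the paper: a layer-wise recursion using the Lipschitz nonlinearity and the non-amplifying property to reduce to the single-filter discrepancy $\|\bbP_n\bbh(\ccalL)f-\bbh(\bbL_n)\bbP_n f\|_{L^2(\bbG_n)}$, which is then controlled via the Belkin--Niyogi spectral convergence of $(\lambda_i^n,\bm\phi_i^n)$ together with the $\alpha$-FDT eigenvalue partitioning from Theorems~\ref{thm:stability_abs_filter}--\ref{thm:stability_rela_filter}. The only minor refinement in the paper's argument that you do not spell out is an additional appeal to the convergence of the empirical inner products $\langle \bbP_n f,\bm\phi_i^n\rangle_{L^2(\bbG_n)}\to\langle f,\bm\phi_i\rangle_{L^2(\ccalM)}$ (a law-of-large-numbers step), which is needed alongside eigenfunction convergence to match the discrete and continuous spectral coefficients.
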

\begin{proof}
See Appendix \ref{app:convergence}.
\end{proof}

This proposition provides theoretical support to state that neural networks constructed from the discrete Laplacian $\bbL_n$ converge to MNN and thus can inherit the stability properties of the MNN.


\subsection{Discretization in the Time Domain}

In order to learn an MNN \eqref{eqn:mnn}, we need to learn the manifold convolutional filters $\bbh_l^{pq}$. This means that we need to learn the impulse responses $\tilde{h}(t)$ in Definition \ref{def:manifold-convolution}. However, learning continuous functions $\tilde{h}$ is computationally infeasible,
so we sample $\tilde{h}$ over fixed intervals of duration $T_s$ and parameterize the filter with coefficients $h_k = \tilde{h}(k T_s)$, $k =0 ,1, 2\dots$. Setting the sampling interval to $T_s=1$ for simplicity, the discrete-time manifold convolution can be written as
\begin{equation}
\label{eqn:manifold_convolution_discrete}
    \bbh(\ccalL) f(x)= \sum_{k=0}^{\infty} h_k e^{-k\ccalL}f(x) 
\end{equation}
where $\{h_k\}_{k=0}^\infty$ are called the filter coefficients or taps.

Yet, learning \eqref{eqn:manifold_convolution_discrete} is still impractical because there is an uncountable number of parameters $h_k$. To address this, we fix a time horizon of $K$ time steps and rewrite \eqref{eqn:manifold_convolution_discrete} as
\begin{equation}
\label{eqn:manifold_convolution_discrete_finite}
    \bbh(\ccalL) f(x)= \sum_{k=0}^{K-1} h_k e^{-k\ccalL}f(x)
\end{equation}
which can be seen as a finite impulse response (FIR) filter with shift operator $e^{-\ccalL}$. 
Indeed, the frequency response of this filter [cf. Proposition \ref{prop:filter-spectral}] is given by
\begin{equation}
    \hat{h}(\lambda)= \sum_{k=0}^{K-1} h_k e^{-k\lambda}.
\end{equation}

Combining \eqref{eqn:sample_manifold_convolution} and \eqref{eqn:manifold_convolution_discrete_finite}, we can bring the discretization over the spatial and time domains together to rewrite the convolution operation on the discretized manifold and in the discrete-time domain, explicitly,
\begin{equation}
\label{eqn:discrete_manifold_convolution_discrete}
  \bbz_n=  \bbh(\bbL_n) \bbx_n = \sum_{k=0}^{K-1} h_k e^{-k\bbL_n}\bbx_n.
\end{equation}
Equation \eqref{eqn:discrete_manifold_convolution_discrete} recovers the definition of the graph convolution \cite{gama2020graphs} with graph shift operator $e^{-\bbL_n}$. This means that in practice we implement MNNs as graph neural networks (GNNs). Therefore, the stability behavior of the GNN can be seen as a proxy for the stability behavior of the MNN. We will leverage this idea in the numerical experiments of Section \ref{sec:simu}.

\begin{remark}
\normalfont
\label{rem:timediscretization}
{We analyze the convergence of GNNs to MNNs when graphs are constructed based on uniformly sampled points from the manifold in Proposition \ref{prop:convergence}. The convergence holds when GNNs and MNNs share the same filter parameters while the filter functions are continuous in the time domain. We discuss the discretization of manifold filters in the time domain from a practical aspect that neural networks are usually trained and operated in digital systems with digital filters. }
\end{remark}

\section{Numerical Experiments}
\label{sec:simu}

\begin{figure*}[h]
\centering
\includegraphics[trim=130 0 80 0,clip,width=0.15\textwidth]{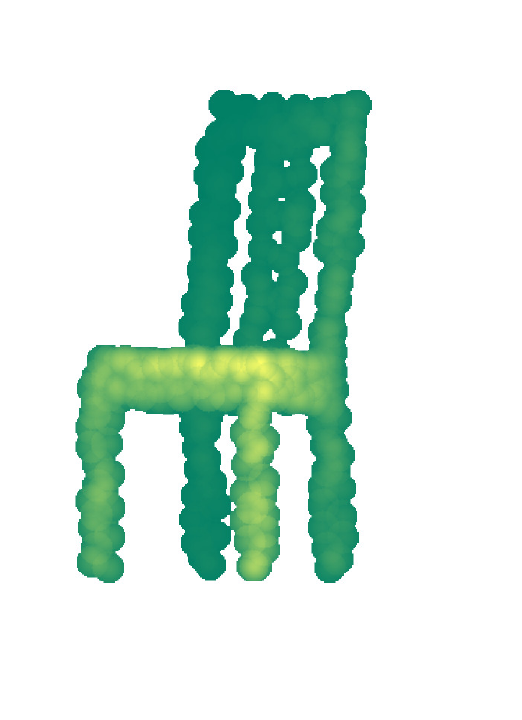}
\includegraphics[trim=90 0 100 0,width=0.15\textwidth]{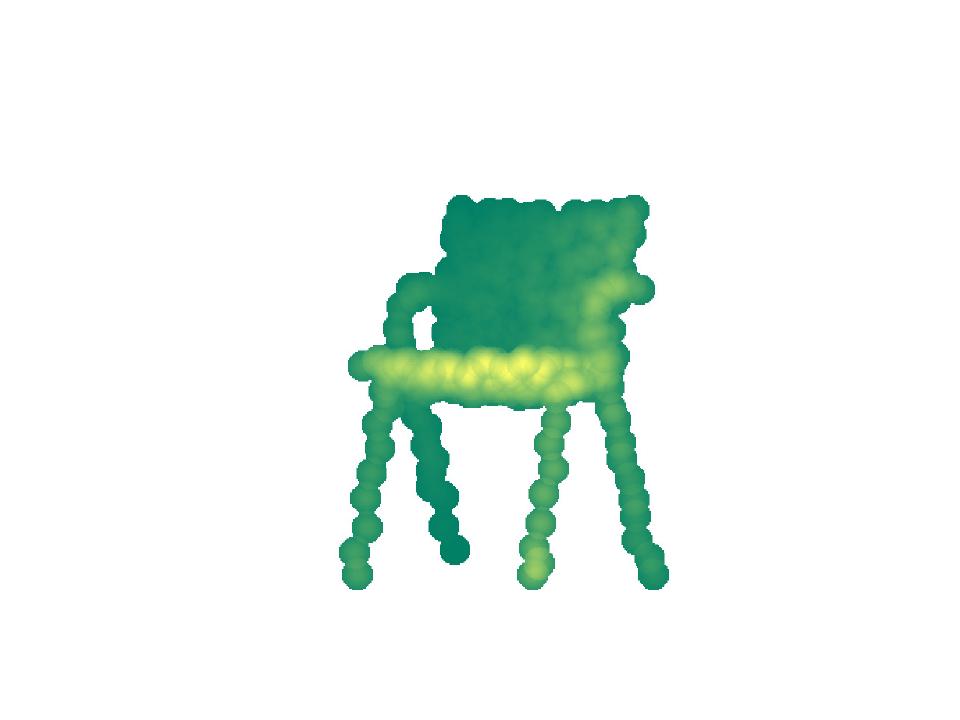}  
\includegraphics[trim=60 0 100 0,width=0.15\textwidth]{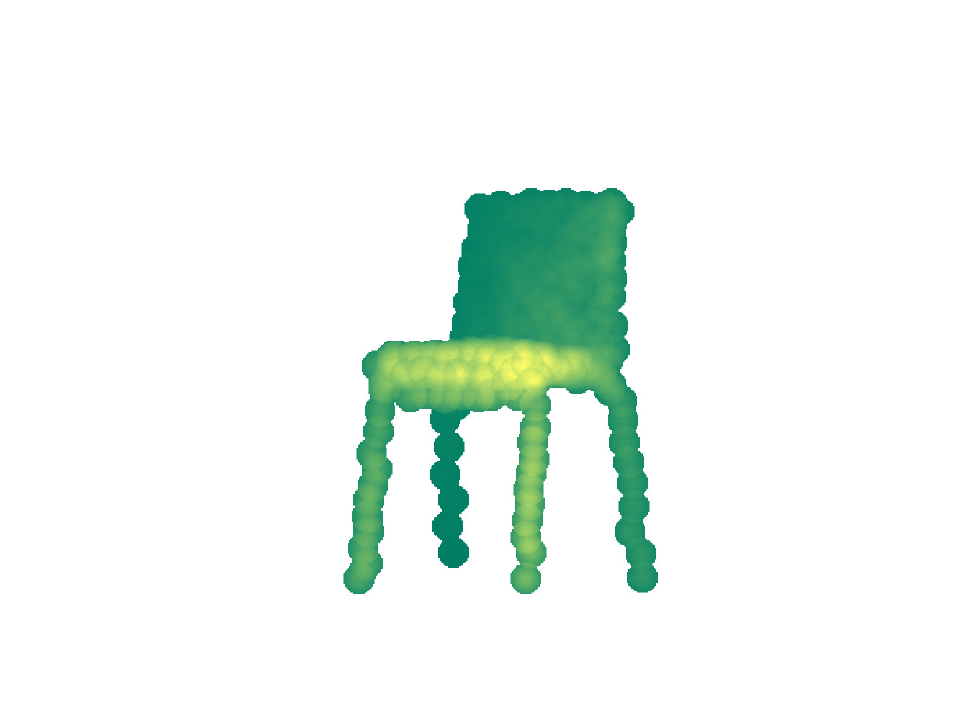} 
\includegraphics[trim=60 0 50 0,width=0.15\textwidth]{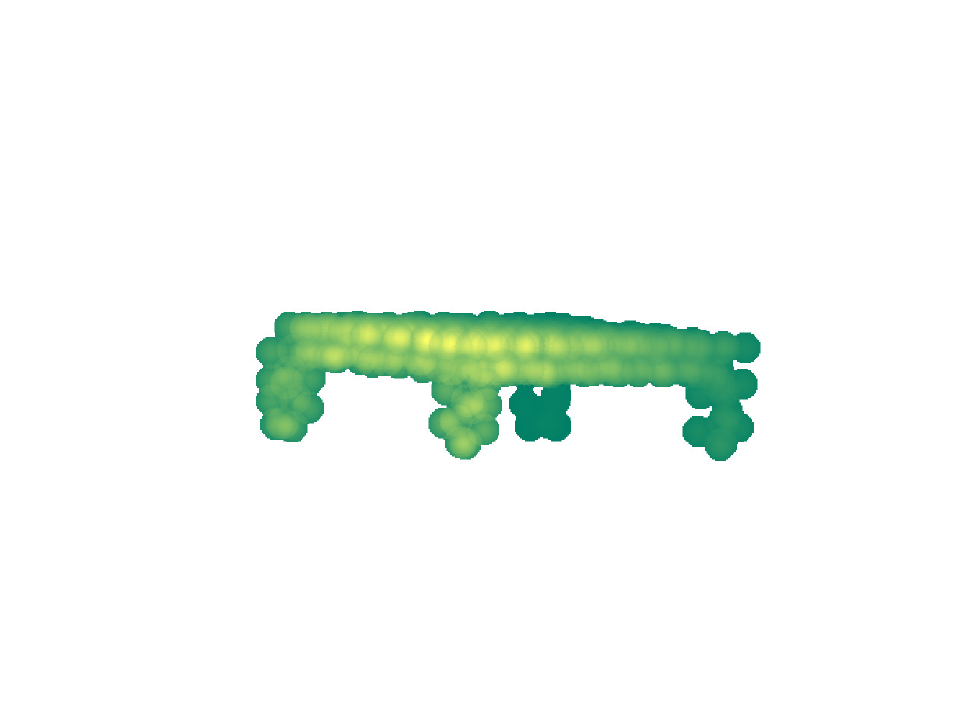}  
\includegraphics[trim=60 0 100 0,width=0.15\textwidth]{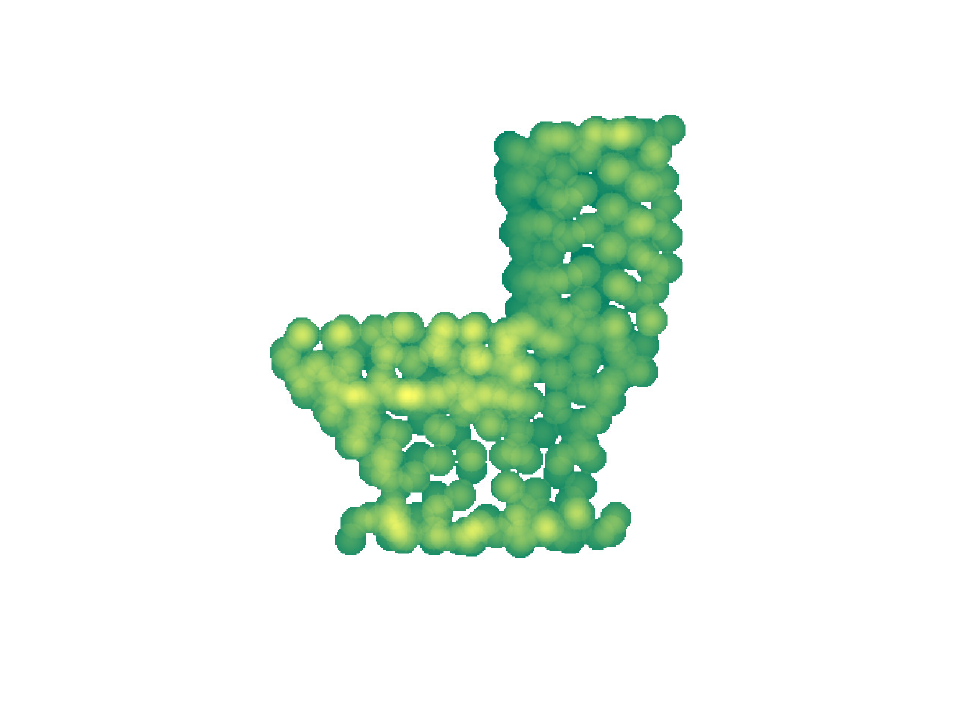}  
\includegraphics[trim=60 0 100 0,width=0.15\textwidth]{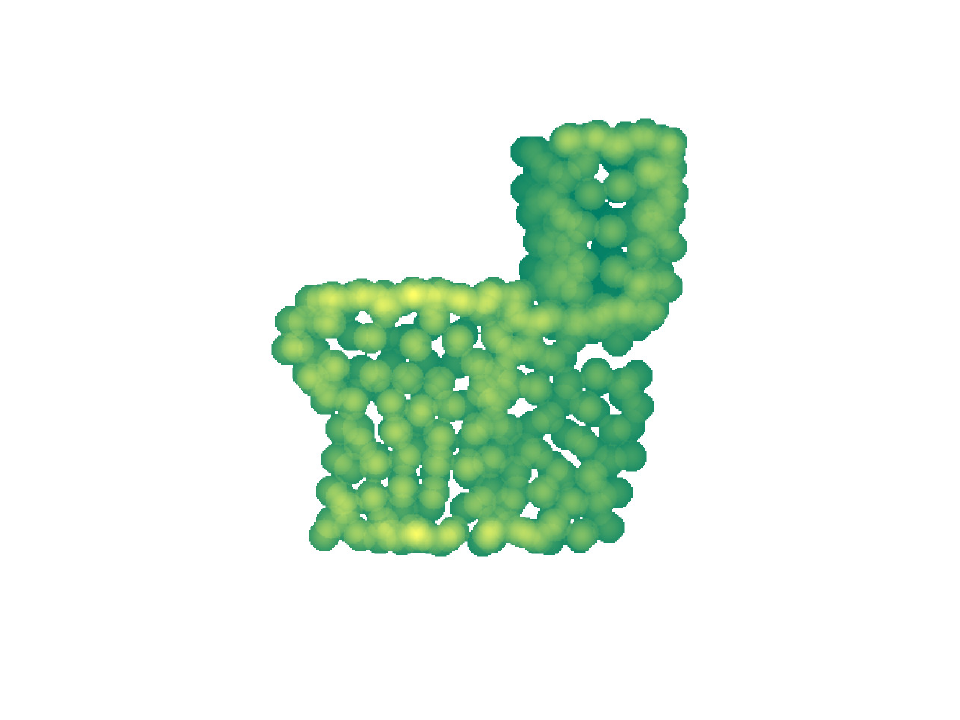} 

\caption{Point cloud models with 300 sampling points in each model. Our goal is to identify chair models from other models such as toilet and table. }
\label{fig:points}
\end{figure*}

\myparagraph{Dataset} We evaluate our MNN stability results on the ModelNet10 \cite{wu20153d} classification problem. The dataset contains 3991 meshed CAD models from 10 categories for training and 908 models for testing. For each model, 300 points are uniformly randomly sampled from all points of the model to form the point cloud. Each point is characterized by the 3D coordinates as features. We formulate the problem by modeling a dense graph neural network model to approximate MNN. Each node in the graph can be modeled as the sampling point and each edge weight is constructed based on the distance between each pair of nodes.  In this work our goal is to identify the CAD model for chairs as is illustrated in Figure \ref{fig:points} with the models for chair labeled as 1 and the others as 0. We deform the underlying manifold structure by adding random perturbations to the coordinates of the sampling points. By comparing the differences of the classification error rates, we aim to show that MNNs with Lipschitz continuous and integral Lipschitz continuous manifold filters are stable via looking into the performance of the approximated GNNs. 

\myparagraph{Neural network architectures} We build dense graphs to approximate the point cloud models. We use the coordinates of each point as node features. By connecting a point with all the other points in the point cloud, the edge weight is defined based the distance between every two points and a Gaussian kernel. The Laplacian matrix is calculated for each input point cloud model. We implement different architectures, including Graph Filters (GF) and Graph Neural Networks (GNN) with 1 and 2 layers,  to solve the classification problem. The architectures with a single layer contain $F_0=3$ input features which are the 3d coordinates of each point, $F_1=64$ output features and $K=5$ filter taps. While the architectures with 2 layers has another layer with $F_2= 32$ features and $5$ filter taps. We use the ReLU as nonlinearity. The learned graph filters are not regularized in architectures with `NoPel' while graph filters in the other architectures are both Lipschitz and integral Lipschitz. {We approximate the spectrum partitions with the continuous assumptions of the filter functions.} All architectures also include a linear readout layer mapping the final output features to a binary scalar that estimates the classification. 

\myparagraph{Discriminability experiment} We train all the architectures with an ADAM optimizer \cite{kingma2014adam} with learning rate set as 0.005 and decaying factors as 0.9, 0.999 by minimizing the entropy loss. The training point cloud models are divided in batches of 10 over 40 epochs. We run 5 random point samplings for all the architectures and we show the average classification error rates across these realizations as well as the standard deviation in Table \ref{tb:results}. We can observe that with the use of non-linearity, Graph Neural Networks perform better compared with Graph Filters. Architectures with more layers learn more accurate models which also leads to better performances.  

\begin{table}[h]
\centering
\begin{tabular}{l|c} \hline
Architecture    & error rates   \\ \hline
GNN1Ly	& $8.04 \% \pm 0.88\% $   \\ \hline
GNN2Ly		& $4.30\% \pm 2.64\%$   \\ \hline
GF1Ly		& $13.77\% \pm 6.87\%$   \\ \hline
GF2Ly	& $12.22\% \pm 7.89\%$   \\ \hline
\end{tabular}
\caption{Classification error rates for model `chair' in the test dataset. Average over 5 data realizations. The number of nodes is $n=300$.}
\label{tb:results}
\vspace{-3mm}
\end{table} 

\myparagraph{Stability experiment} We test the same trained Graph Neural Networks and Graph Filters with 2 layers on perturbed test point cloud models with different perturbation levels. We perturb the test point clouds by adding a Gaussian random variable with mean $\epsilon$ and variance $2\epsilon$ to each coordinate of every sampling point, which can be seen as a deformation of the underlying manifold. We measure the stability by computing the difference between the error rates achieved based on the original test point cloud models and the perturbed ones. In Figure \ref{fig:sim}, we see that this difference increases when the perturbations become larger, but overall the differences are small. We also observe that Graph Neural Network is more stable compared with Graph Filters. Furthermore, the Graph Neural Networks and Graph Filters with Lipschitz continuous and integral Lipschitz continuous filters are more stable. Both of these observations validate our stability results. 

\begin{figure}[h]
  \centering
  \includegraphics[height=4.5cm,width=6.5cm]{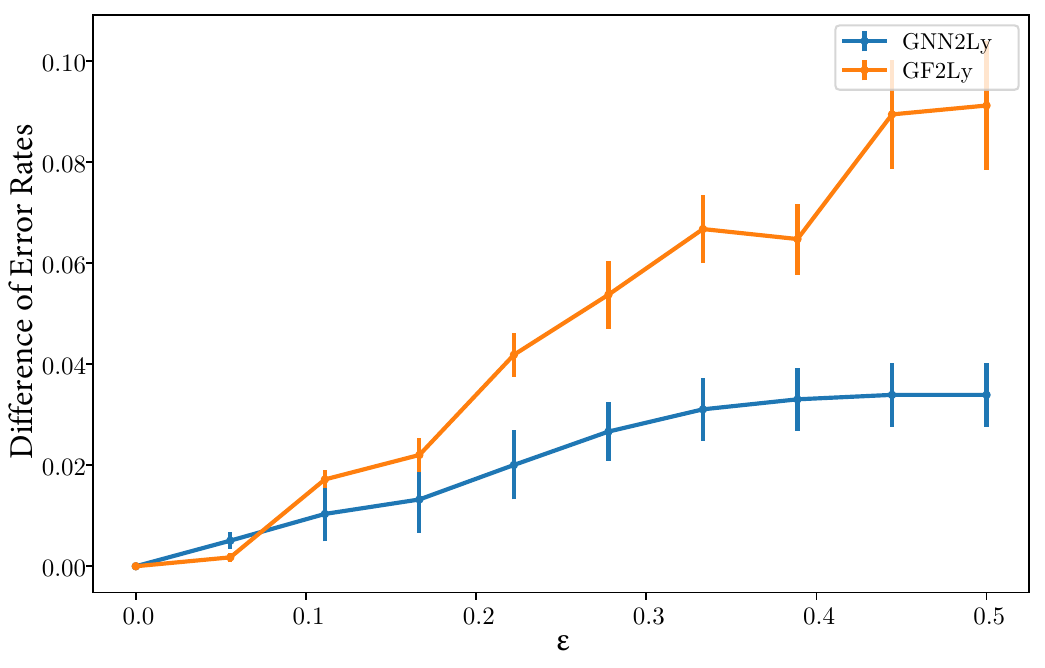}
\caption{Difference between error rates on the original test dataset and the deformed one. }
\label{fig:sim}
\end{figure}

\begin{figure}[h]
  \centering
  \includegraphics[height=4.5cm,width=6.5cm]{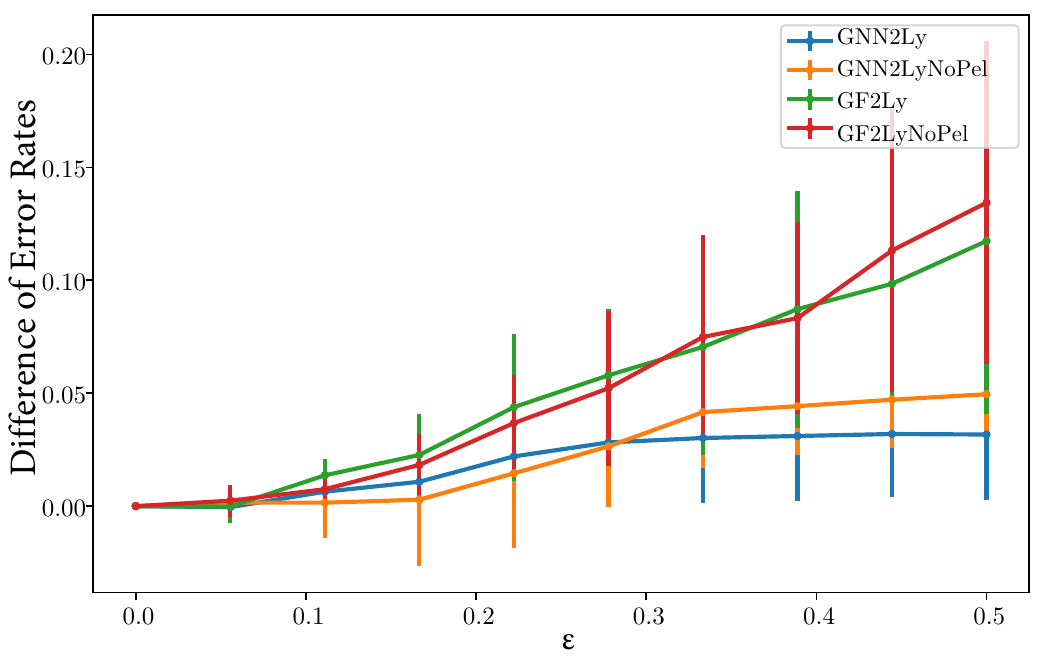}
\caption{Difference between error rates on the original test dataset and the deformed one. }
\label{fig:sim}
\end{figure}
To further verify the discriminability under perturbations, we trained and tested the architectures with perturbed dataset. We can see from Table \ref{tb:results-perturb} that both GNN and GF can identify the chair model with small error rates while the error rates grow slightly with the increase of perturbation levels. GNNs still outperform GFs in discriminablity with the help of nonlinearity.

\begin{table}[h]
\centering
\begin{tabular}{l|c |c  } \hline
Architecture    & $\epsilon = 0.2$ & 0.4   \\ \hline
GNN2Ly		& $7.37\% \pm 1.43\%$ &  $7.71\% \pm 3.96\%$ \\\hline
GF2Ly	& $13.76\% \pm 6.82\%$  & $13.54\% \pm 7.16\%$  \\ \hline\hline
Architecture    & $\epsilon = 0.6$ & 0.8   \\ \hline
GNN2Ly& $8.04\% \pm 2.83\%$ & $11.01\% \pm 6.33\%$  \\ \hline
GF2Ly	& $14.76\% \pm 5.67\%$ & $16.04\% \pm 6.34\%$ \\ \hline
\end{tabular}
\caption{Classification error rates for model `chair' with perturbed training and test dataset. Average over 5 data realizations. The number of nodes is $n=300$.}
\label{tb:results-perturb}
\vspace{-3mm}
\end{table} 

{We further look into the stability of GNNs with respect to the size of the graph with different perturbation levels. We study the stability of a 2-layer GNN with Lipschitz and integral Lipschitz continuous filters on graphs with a maximum of $n=896$ points. Then we randomly sub-sample to generate graphs with the number of nodes $n = 128, 256,...,896$. We plot the difference of the error rates achieved by the GNN between the original point cloud models and the perturbed point cloud models. The position of each point in the perturbed cloud is displaced by $\epsilon/ \sqrt[3]{n}$. Since this is a point cloud in three dimensions, the normalization by $1 / \sqrt[3]{n}$ is such that the gradient of the deformation stays constant across different sub-samplings. 
In Figure \ref{fig:sim-stability}, we show the stability of a 2-layer GNN with Lipschitz and integral Lipschitz continuous filters as we vary the number of points in the point cloud. We observe from the figure that the stability bound does not grow with the number of nodes.
This experiment shows that MNN stability bounds are a more accurate model of the behavior of GNNs when graphs are sampled from a manifold while generic stability bounds predict a stability error that can grow with $\sqrt{n}$ \cite{gama2020stability}. We emphasize that this growing stability bounds of \cite{gama2020stability} are due to the fact that the graph and perturbation model are generic. Our bounds are tighter with a specific assumption on the graph -- it is sampled from a manifold -- and the perturbation model -- it is a bounded manifold deformation with bounded gradient norm.}

\begin{figure}[h]
  \centering
  \includegraphics[height=4.5cm]{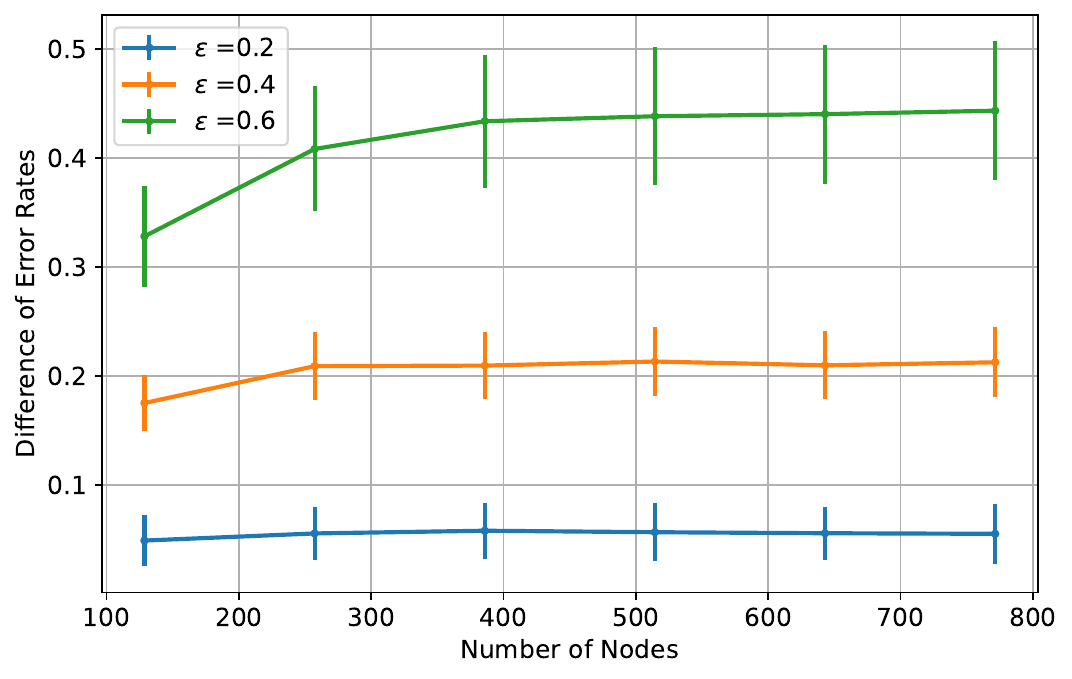}
\caption{Difference between error rates on the original test dataset and the deformed one on growing size of graphs. }
\label{fig:sim-stability}
\end{figure}

\section{Conclusions}
\label{sec:conclusion}
In this paper, we have defined manifold convolutions and manifold neural networks. We prove that the deformations on the embedded submanifolds can be represented as a form of perturbations to the Laplace-Beltrami operator. Considering the infinite dimensionality of LB operators, we import the definition of frequency difference threshold filters and frequency ratio threshold filters to help separate the spectrum. By assigning similar frequency responses to the eigenvalues that are close enough, these filters can be proved to be stable under absolute and relative perturbations to the LB operator respectively with Lipschitz continuous assumptions. While the manifold filters need to trade-off between the stability and discriminability.  MNNs composed with layers of manifold filters and pointwise nonlinearities can be proved to be stable to absolute and relative perturbations to the LB operators. While the frequency mixing brought by pointwise nonlinearity can help with the discriminability. We conclude that the MNNs are thus both stable to deformations and discriminative. We also show the discretizations of MNNs in both spatial and time domains to make our proposed MNNs implementable. 
We finally verified our results numerically with a point cloud classification problem with ModelNet10 dataset.

\urlstyle{same}
\bibliographystyle{IEEEtran}
\bibliography{references}

\begin{thebibliography}{10}
\providecommand{\url}[1]{#1}
\csname url@samestyle\endcsname
\providecommand{\newblock}{\relax}
\providecommand{\bibinfo}[2]{#2}
\providecommand{\BIBentrySTDinterwordspacing}{\spaceskip=0pt\relax}
\providecommand{\BIBentryALTinterwordstretchfactor}{4}
\providecommand{\BIBentryALTinterwordspacing}{\spaceskip=\fontdimen2\font plus
\BIBentryALTinterwordstretchfactor\fontdimen3\font minus
  \fontdimen4\font\relax}
\providecommand{\BIBforeignlanguage}[2]{{%
\expandafter\ifx\csname l@#1\endcsname\relax
\typeout{** WARNING: IEEEtran.bst: No hyphenation pattern has been}%
\typeout{** loaded for the language `#1'. Using the pattern for}%
\typeout{** the default language instead.}%
\else
\language=\csname l@#1\endcsname
\fi
#2}}
\providecommand{\BIBdecl}{\relax}
\BIBdecl

\bibitem{wang2021stability}
Z.~Wang, L.~Ruiz, and A.~Ribeiro, ``Stability of neural networks on riemannian
  manifolds,'' in \emph{2021 29th European Signal Processing Conference
  (EUSIPCO)}.\hskip 1em plus 0.5em minus 0.4em\relax IEEE, 2021, pp.
  1845--1849.

\bibitem{wang2022stability}
------, ``Stability of neural networks on manifolds to relative
  perturbations,'' in \emph{ICASSP 2022-2022 IEEE International Conference on
  Acoustics, Speech and Signal Processing (ICASSP)}.\hskip 1em plus 0.5em minus
  0.4em\relax IEEE, 2022, pp. 5473--5477.

\bibitem{wang2022convolutional}
------, ``Convolutional neural networks on manifolds: From graphs and back,''
  in \emph{2022 56th Asilomar Conference on Signals, Systems, and
  Computers}.\hskip 1em plus 0.5em minus 0.4em\relax IEEE, 2022, pp. 356--360.

\bibitem{ortega2018graph}
A.~Ortega, P.~Frossard, J.~Kova{\v{c}}evi{\'c}, J.~M. Moura, and
  P.~Vandergheynst, ``Graph signal processing: Overview, challenges, and
  applications,'' \emph{Proceedings of the IEEE}, vol. 106, no.~5, pp.
  808--828, 2018.

\bibitem{isufi2016autoregressive}
E.~Isufi, A.~Loukas, A.~Simonetto, and G.~Leus, ``Autoregressive moving average
  graph filtering,'' \emph{IEEE Transactions on Signal Processing}, vol.~65,
  no.~2, pp. 274--288, 2016.

\bibitem{gama2020graphs}
F.~Gama, E.~Isufi, G.~Leus, and A.~Ribeiro, ``Graphs, convolutions, and neural
  networks: From graph filters to graph neural networks,'' \emph{IEEE Signal
  Processing Magazine}, vol.~37, no.~6, pp. 128--138, 2020.

\bibitem{scarselli2008graph}
F.~Scarselli, M.~Gori, A.~C. Tsoi, M.~Hagenbuchner, and G.~Monfardini, ``The
  graph neural network model,'' \emph{IEEE transactions on neural networks},
  vol.~20, no.~1, pp. 61--80, 2008.

\bibitem{gama2019convolutional}
F.~Gama, A.~G. Marques, G.~Leus, and A.~Ribeiro, ``Convolutional neural network
  architectures for signals supported on graphs,'' \emph{IEEE Transactions on
  Signal Processing}, vol.~67, no.~4, pp. 1034--1049, 2019.

\bibitem{zhou2020graph}
J.~Zhou, G.~Cui, S.~Hu, Z.~Zhang, C.~Yang, Z.~Liu, L.~Wang, C.~Li, and M.~Sun,
  ``Graph neural networks: A review of methods and applications,'' \emph{AI
  Open}, vol.~1, pp. 57--81, 2020.

\bibitem{fan2019graph}
W.~Fan, Y.~Ma, Q.~Li, Y.~He, E.~Zhao, J.~Tang, and D.~Yin, ``Graph neural
  networks for social recommendation,'' in \emph{The World Wide Web
  Conference}, 2019, pp. 417--426.

\bibitem{wu2019session}
S.~Wu, Y.~Tang, Y.~Zhu, L.~Wang, X.~Xie, and T.~Tan, ``Session-based
  recommendation with graph neural networks,'' in \emph{Proceedings of the AAAI
  Conference on Artificial Intelligence}, vol.~33, no.~01, 2019, pp. 346--353.

\bibitem{chowdhury2021unfolding}
A.~Chowdhury, G.~Verma, C.~Rao, A.~Swami, and S.~Segarra, ``Unfolding wmmse
  using graph neural networks for efficient power allocation,'' \emph{IEEE
  Transactions on Wireless Communications}, 2021.

\bibitem{wang2020unsupervised}
Z.~Wang, M.~Eisen, and A.~Ribeiro, ``Unsupervised learning for asynchronous
  resource allocation in ad-hoc wireless networks,'' in \emph{ICASSP 2021-2021
  IEEE International Conference on Acoustics, Speech and Signal Processing
  (ICASSP)}.\hskip 1em plus 0.5em minus 0.4em\relax IEEE, 2021, pp. 8143--8147.

\bibitem{bronstein2017geometric}
M.~M. Bronstein, J.~Bruna, Y.~LeCun, A.~Szlam, and P.~Vandergheynst,
  ``Geometric deep learning: going beyond euclidean data,'' \emph{IEEE Signal
  Processing Magazine}, vol.~34, no.~4, pp. 18--42, 2017.

\bibitem{bronstein2021geometric}
M.~M. Bronstein, J.~Bruna, T.~Cohen, and P.~Veli{\v{c}}kovi{\'c}, ``Geometric
  deep learning: Grids, groups, graphs, geodesics, and gauges,'' \emph{arXiv
  preprint arXiv:2104.13478}, 2021.

\bibitem{shuman2013emerging}
D.~I. Shuman, S.~K. Narang, P.~Frossard, A.~Ortega, and P.~Vandergheynst, ``The
  emerging field of signal processing on graphs: Extending high-dimensional
  data analysis to networks and other irregular domains,'' \emph{IEEE signal
  processing magazine}, vol.~30, no.~3, pp. 83--98, 2013.

\bibitem{sandryhaila2013discrete}
A.~Sandryhaila and J.~M. Moura, ``Discrete signal processing on graphs,''
  \emph{IEEE transactions on signal processing}, vol.~61, no.~7, pp.
  1644--1656, 2013.

\bibitem{oppenheim1997signals}
A.~V. Oppenheim, A.~S. Willsky, S.~H. Nawab, G.~M. Hern{\'a}ndez \emph{et~al.},
  \emph{Signals \& systems}.\hskip 1em plus 0.5em minus 0.4em\relax Pearson
  Educaci{\'o}n, 1997.

\bibitem{mallat2012group}
S.~Mallat, ``Group invariant scattering,'' \emph{Communications on Pure and
  Applied Mathematics}, vol.~65, no.~10, pp. 1331--1398, 2012.

\bibitem{gama2020stability}
F.~Gama, J.~Bruna, and A.~Ribeiro, ``Stability properties of graph neural
  networks,'' \emph{IEEE Transactions on Signal Processing}, vol.~68, pp.
  5680--5695, 2020.

\bibitem{gama2019stability}
F.~Gama, A.~Ribeiro, and J.~Bruna, ``Stability of graph scattering
  transforms,'' \emph{Advances in Neural Information Processing Systems},
  vol.~32, pp. 8038--8048, 2019.

\bibitem{verma2019stability}
S.~Verma and Z.-L. Zhang, ``Stability and generalization of graph convolutional
  neural networks,'' in \emph{Proceedings of the 25th ACM SIGKDD International
  Conference on Knowledge Discovery \& Data Mining}, 2019, pp. 1539--1548.

\bibitem{zou2020graph}
D.~Zou and G.~Lerman, ``Graph convolutional neural networks via scattering,''
  \emph{Applied and Computational Harmonic Analysis}, vol.~49, no.~3, pp.
  1046--1074, 2020.

\bibitem{ruiz2021graph}
L.~Ruiz, F.~Gama, and A.~Ribeiro, ``Graph neural networks: Architectures,
  stability, and transferability,'' \emph{Proceedings of the IEEE}, vol. 109,
  no.~5, pp. 660--682, 2021.

\bibitem{ruiz2021graphon}
L.~Ruiz, Z.~Wang, and A.~Ribeiro, ``Graphon and graph neural network
  stability,'' in \emph{ICASSP 2021-2021 IEEE International Conference on
  Acoustics, Speech and Signal Processing (ICASSP)}.\hskip 1em plus 0.5em minus
  0.4em\relax IEEE, 2021, pp. 5255--5259.

\bibitem{maskey2021transferability}
S.~Maskey, R.~Levie, and G.~Kutyniok, ``Transferability of graph neural
  networks: an extended graphon approach,'' \emph{Applied and Computational
  Harmonic Analysis}, vol.~63, pp. 48--83, 2023.

\bibitem{ruiz2021transferability}
L.~Ruiz, L.~F. Chamon, and A.~Ribeiro, ``{Transferability Properties of Graph
  Neural Networks},'' \emph{arXiv preprint arXiv:2112.04629}, 2021.

\bibitem{keriven2020convergence}
N.~Keriven, A.~Bietti, and S.~Vaiter, ``Convergence and stability of graph
  convolutional networks on large random graphs,'' \emph{Advances in Neural
  Information Processing Systems}, vol.~33, pp. 21\,512--21\,523, 2020.

\bibitem{calder2019improved}
J.~Calder and N.~G. Trillos, ``Improved spectral convergence rates for graph
  laplacians on $\varepsilon$-graphs and k-nn graphs,'' \emph{Applied and
  Computational Harmonic Analysis}, vol.~60, pp. 123--175, 2022.

\bibitem{levie2019transferability}
R.~Levie, W.~Huang, L.~Bucci, M.~Bronstein, and G.~Kutyniok, ``Transferability
  of spectral graph convolutional neural networks,'' \emph{Journal of Machine
  Learning Research}, vol.~22, no. 272, pp. 1--59, 2021.

\bibitem{gallier2020differential}
J.~Gallier and J.~Quaintance, \emph{Differential geometry and Lie groups: a
  computational perspective}.\hskip 1em plus 0.5em minus 0.4em\relax Springer
  Nature, 2020, vol.~12.

\bibitem{canzani2013analysis}
Y.~Canzani, \emph{Analysis on manifolds via the Laplacian}.\hskip 1em plus
  0.5em minus 0.4em\relax Lecture Notes available at: http://www. math.
  harvard. edu/canzani/docs/Laplacian. pdf, 2013, ch.~4, pp. 41--44.

\bibitem{moon2012field}
P.~Moon and D.~E. Spencer, \emph{Field theory handbook: including coordinate
  systems, differential equations and their solutions}.\hskip 1em plus 0.5em
  minus 0.4em\relax Springer, 2012.

\bibitem{arendt2009weyl}
W.~Arendt, R.~Nittka, W.~Peter, and F.~Steiner, ``Weyl’s law: Spectral
  properties of the laplacian in mathematics and physics,'' \emph{Mathematical
  analysis of evolution, information, and complexity}, pp. 1--71, 2009.

\bibitem{musser2016weyl}
S.~Musser, ``Weyl’s law on riemannian manifolds,'' 2016, available
  http://math.uchicago.edu/~may/REU2016/REUPapers/Musser.pdf.

\bibitem{bishop2006pattern}
C.~M. Bishop and N.~M. Nasrabadi, \emph{Pattern recognition and machine
  learning}.\hskip 1em plus 0.5em minus 0.4em\relax Springer, 2006, vol.~4,
  no.~4.

\bibitem{loring2011introduction}
L.~W. Tu, \emph{An introduction to manifolds}.\hskip 1em plus 0.5em minus
  0.4em\relax Springer., 2011.

\bibitem{arendt2009mathematical}
W.~Arendt and W.~P. Schleich, \emph{Mathematical analysis of evolution,
  information, and complexity}.\hskip 1em plus 0.5em minus 0.4em\relax John
  Wiley \& Sons, 2009.

\bibitem{ruiz2020graphon}
L.~Ruiz, L.~Chamon, and A.~Ribeiro, ``Graphon neural networks and the
  transferability of graph neural networks,'' \emph{Advances in Neural
  Information Processing Systems}, vol.~33, pp. 1702--1712, 2020.

\bibitem{dunson2021spectral}
D.~B. Dunson, H.-T. Wu, and N.~Wu, ``Spectral convergence of graph laplacian
  and heat kernel reconstruction in $l^\infty$ from random samples,''
  \emph{Applied and Computational Harmonic Analysis}, vol.~55, pp. 282--336,
  2021.

\bibitem{belkin2008towards}
M.~Belkin and P.~Niyogi, ``Towards a theoretical foundation for laplacian-based
  manifold methods,'' \emph{Journal of Computer and System Sciences}, vol.~74,
  no.~8, pp. 1289--1308, 2008.

\bibitem{merris1995survey}
R.~Merris, ``A survey of graph laplacians,'' \emph{Linear and Multilinear
  Algebra}, vol.~39, no. 1-2, pp. 19--31, 1995.

\bibitem{wu20153d}
Z.~Wu, S.~Song, A.~Khosla, F.~Yu, L.~Zhang, X.~Tang, and J.~Xiao, ``3d
  shapenets: A deep representation for volumetric shapes,'' in
  \emph{Proceedings of the IEEE conference on computer vision and pattern
  recognition}, 2015, pp. 1912--1920.

\bibitem{kingma2014adam}
D.~P. Kingma and J.~Ba, ``Adam: A method for stochastic optimization,''
  \emph{arXiv preprint arXiv:1412.6980}, 2014.

\bibitem{gross2023manifolds}
G.~Gross and E.~Meinrenken, \emph{Manifolds, vector fields, and differential
  forms: an introduction to differential geometry}.\hskip 1em plus 0.5em minus
  0.4em\relax Springer Nature, 2023.

\bibitem{seelmann2014notes}
A.~Seelmann, ``Notes on the $sin 2\theta$ theorem,'' \emph{Integral Equations
  and Operator Theory}, vol.~79, no.~4, pp. 579--597, 2014.

\end{thebibliography}

\appendix
 {\section{Appendix}


\subsection{Proof of Theorem \ref{thm:perturb}}
 \label{app:perturb}
 We first introduce the concepts of the tangent map and the Jacobian matrix \cite{gross2023manifolds}.
 \begin{definition}
     Let $\ccalM$ be the manifold and $\tau:\ccalM\rightarrow \ccalM$. For any $x\in \ccalM$, we define the tangent map to be the linear map 
     \begin{equation}
         \tau_{*,x} :T_x\ccalM \rightarrow T_{\tau(x)}\ccalM.
     \end{equation}
When $\tau$ is a smooth map and $\ccalM$ is an embedded manifold in $\reals^\mathsf{N}$, for $x\in\ccalM$, the tangent space $T_x\ccalM$ can be canonically identified with $\reals^\mathsf{N}$, using the basis 
\begin{equation}
    \frac{\partial}{\partial x^1}\Bigr|_x, \frac{\partial}{\partial x^2}\Bigr|_x\cdots \frac{\partial}{\partial x^\mathsf{N}}\Bigr|_x \in T_x\ccalM,
\end{equation}
similarly the tangent space $T_{\tau(x)}\ccalM$ is equipped with the basis
\begin{equation}
    \frac{\partial}{\partial y^1}\Bigr|_{\tau(x)}, \frac{\partial}{\partial y^2}\Bigr|_{\tau(x)}\cdots \frac{\partial}{\partial y^\mathsf{N}}\Bigr|_{\tau(x)} \in T_{\tau(x)}\ccalM,
\end{equation}
and is just the Jacobian matrix $J_x(\tau)$ of $\tau$ at $x$,
\begin{equation}
    \tau_*\left( \frac{\partial}{\partial x^i} \Bigr|_x \right) = \sum_{j=1}^\mathsf{N} \frac{\partial \tau^j}{\partial x^i}\Bigr|_x \cdot \frac{\partial }{\partial y^j}\Bigr|_{\tau(x)},
\end{equation}
where $\frac{\partial \tau^j}{\partial x^i}\Bigr|_x$ is the entry of the Jacobian matrix $[J_x(\tau)]_{i,j}$ and $\tau^j$ is the $j$-th component of $\tau$.
 \end{definition}
 
 With the tangent map $\tau_{*,x}$  as the linear map between the tangent spaces $T_x\ccalM$ to $T_{\tau(x)}\ccalM$ and
 based on the definition of $\ccalL'$ in equation \eqref{eqn:deform} together with the definition of Laplace-Beltrami operator in \eqref{eqn:Laplacian}, the operation carried out on the deformed manifold data $f$ can be written as
\begin{align}
   - \ccalL' f(x)&= (\nabla\cdot \nabla)  f(\tau(x))
   \\ \label{eqn:changevariable} &=   (J_x(\tau)^T \nabla_\tau \cdot J_x(\tau)^T \nabla_\tau)  f(\tau(x)).
\end{align}
The equality in \eqref{eqn:changevariable} results from the chain rule of gradient operator where $\nabla_\tau$ is denoted as the intrinsic gradient around $\tau(x)$ in the tangent space $T_{\tau(x)}\ccalM$.
By replacing $J_x(\tau) = I + \Delta_x$ the inner product term,  \eqref{eqn:changevariable} can be rewritten as
\begin{align}
    \nonumber J_x(\tau)^T\nabla_\tau\cdot  J_x(\tau)^T \nabla_\tau  =  \nabla_\tau \cdot & \nabla_\tau  + 2(\Delta_x^T \nabla_\tau \cdot \nabla_\tau ) \\
    &+  \Delta_x^T\nabla_\tau \cdot \Delta_x^T\nabla_\tau.
\end{align}
With $\ccalL=- \nabla_\tau \cdot \nabla_\tau$, the perturbed operator is
\begin{align}
   \label{eqn:E1} \ccalL-\ccalL'&= 2(\Delta_x^T \nabla_\tau \cdot \nabla_\tau )  + \Delta_x^T\nabla_\tau \cdot \Delta_x^T\nabla_\tau \\
   \label{eqn:E2} & = 2\|\Delta_x\|_F (\nabla_\tau \cdot \nabla_\tau) + \|\Delta_x\|_F^2 (\nabla_\tau \cdot \nabla_\tau)+\ccalA.
\end{align}
From \eqref{eqn:E1} to \eqref{eqn:E2}, we extract the relative term and use $\ccalA$ to represent the compliment terms.
This leads to $\bbE  = \|\Delta_x\|_F^2 + 2\|\Delta_x\|_F$,
as the relative perturbation term, the norm of which is bounded by the leading term as $O(\epsilon)$. The norm of the compliment term therefore can be written as
\vspace{-1mm}
    \begin{align}
    \|\ccalA\|& = \|\bbE (\nabla_\tau \cdot \nabla_\tau)- 2(\Delta_x^T \nabla_\tau \cdot \nabla_\tau )  - \Delta_x^T\nabla_\tau \cdot \Delta_x^T\nabla_\tau \|\\
   \nonumber &\leq \left\lVert2\|\Delta_x\|_F (\nabla_\tau \cdot \nabla_\tau) -  2(\Delta_x^T \nabla_\tau \cdot \nabla_\tau )\right\rVert\\
    &\qquad \quad +\left\lVert \|\Delta_x\|_F^2 (\nabla_\tau \cdot \nabla_\tau) - \Delta_x^T\nabla_\tau \cdot \Delta_x^T\nabla_\tau\right\rVert,
\end{align}
which can be also bounded by the leading terms as $O(\epsilon)$ combining with the boundedness of the gradient field.

\subsection{Lemmas}
\label{app:lemmas}
Now we need to include two important lemmas to analyze the influence on eigenvalues and eigenfunctions caused by the perturbation.
\begin{lemma}\label{lem:eigenvalue_absolute}[Weyl's Theorem]
The eigenvalues of LB operators $\ccalL$ and perturbed $\ccalL'=\ccalL+\ccalA$ satisfy
\begin{equation}
|\lambda_i-\lambda'_i|\leq \|\ccalA\|, \text{ for all }i=1,2\hdots
\end{equation}
\end{lemma}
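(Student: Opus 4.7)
The plan is to prove this via the Courant--Fischer--Weyl min--max characterization of the eigenvalues of a self-adjoint operator with discrete spectrum. Since $\ccalM$ is compact, $\ccalL$ has a purely discrete spectrum $\lambda_1 \leq \lambda_2 \leq \ldots$, and since $\bbA$ is assumed self-adjoint (Definition \ref{def:abso-perturb}), so is $\ccalL' = \ccalL + \bbA$, so it also admits such a variational characterization.

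First I would recall (or cite) the min--max formula
\begin{equation*}
\lambda_i = \min_{\substack{V \subset L^2(\ccalM) \\ \dim V = i}} \; \max_{\substack{v \in V \\ \|v\|=1}} \; \langle \ccalL v, v \rangle_{L^2(\ccalM)},
\end{equation*}
and analogously for $\lambda'_i$ with $\ccalL$ replaced by $\ccalL'$. Next I would observe that for every unit vector $v$,
\begin{equation*}
\langle \ccalL' v, v \rangle = \langle \ccalL v, v \rangle + \langle \bbA v, v \rangle,
\end{equation*}
and by Cauchy--Schwarz combined with the definition of the operator norm,
\begin{equation*}
|\langle \bbA v, v \rangle| \leq \|\bbA v\|\, \|v\| \leq \|\bbA\|.
\end{equation*}
Hence, uniformly over unit $v$, one has the sandwich
\begin{equation*}
\langle \ccalL v, v \rangle - \|\bbA\| \;\leq\; \langle \ccalL' v, v \rangle \;\leq\; \langle \ccalL v, v \rangle + \|\bbA\|.
\end{equation*}

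Then I would simply take the $\min$-$\max$ over $i$-dimensional subspaces on all three sides; since adding a constant to a functional commutes with the $\min$-$\max$, this yields
\begin{equation*}
\lambda_i - \|\bbA\| \;\leq\; \lambda'_i \;\leq\; \lambda_i + \|\bbA\|,
\end{equation*}
i.e.\ $|\lambda_i - \lambda'_i| \leq \|\bbA\|$, which is the claim. There is no real obstacle here beyond confirming the validity of the min--max principle in the infinite-dimensional self-adjoint setting; compactness of $\ccalM$ makes the resolvent of $\ccalL$ compact, which in turn ensures the spectrum is discrete with finite multiplicities and the variational formula applies directly. The same regularity passes to $\ccalL'$ because $\bbA$ is a bounded self-adjoint perturbation, so no essential spectrum is introduced.
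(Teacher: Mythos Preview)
Your proposal is correct and follows essentially the same route as the paper: both invoke the Courant--Fischer--Weyl min--max principle and use that $\langle \bbA v, v\rangle$ is uniformly bounded by $\|\bbA\|$ over unit vectors to sandwich $\lambda_i'$ between $\lambda_i \pm \|\bbA\|$. The only cosmetic differences are that the paper uses the dual $\max_{\text{codim}\,T\leq i-1}\min$ formulation and phrases the bound on the quadratic form via the extreme eigenvalues $\lambda_1(\bbA),\,\max_k\lambda_k(\bbA)$ rather than Cauchy--Schwarz; your added remark on why the variational principle applies (compact resolvent, discrete spectrum) is a welcome clarification the paper omits.
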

\begin{proof}[Proof of Lemma \ref{lem:eigenvalue_absolute}]

The minimax principle asserts that
\begin{align}
    \lambda_i(\ccalL)=\max_{codim T = i-1}\lambda[\ccalL, T]=\max_{codim T \leq i-1} \min_{u\in T, \|u\|=1} \langle \ccalL u, u \rangle.
\end{align}

Then for any $1\leq i $, we have
\begin{align}
    \lambda_i(\ccalL') &=\max_{codim T\leq i-1} \min_{ u\in T, \|u\|=1}  \langle (\ccalL+\ccalA) u, u\rangle \\
    & = \max_{codim T\leq i-1} \min_{ u\in T, \|u\|=1} \left( \langle  (\ccalL  u, u\rangle + \langle \ccalA u, u\rangle   \right)\\
    & \geq \max_{codim T\leq i-1} \min_{  u\in T, \|u\|=1}  \left\langle  \ccalL  u, u\rangle   + \lambda_1(\ccalA) \right)\\
    & = \lambda_1(\ccalA)+ \max_{codim T\leq i-1} \min_{  u\in T, \|u\|=1 } \langle  \ccalL  u, u\rangle  \\
    & = \lambda_i(\ccalL)+\lambda_1(\ccalA).
\end{align}
Similarly, we can have $\lambda_i(\ccalL') \leq \lambda_i(\ccalL)+ \max_k\lambda_k(\ccalA)$. This leads to $\lambda_1(\ccalA)\leq \lambda_i(\ccalL' )-\lambda_i(\ccalL) \leq \max_k\lambda_k(\ccalA)$. This leads to the conclusion that:
\begin{equation}
    |\lambda'_i-\lambda_i|\leq \|\ccalA\|.
\end{equation}
\end{proof}

To measure the difference of eigenfunctions, we introduce the Davis-Kahan $\sin\theta$ theorem as follows.
\begin{lemma}\label{lem:davis-kahan}[Davis-Kahan $\sin\theta$ Theorem]
Suppose the spectra of operators $\ccalL$ and $\ccalL'$ are partitioned as $\sigma\bigcup\Sigma$ and $\omega\bigcup \Omega$ respectively, with $\sigma\bigcap \Sigma=\emptyset$ and $\omega\bigcap\Omega=\emptyset$. Then we have
\begin{equation}
\|E_\ccalL(\sigma)-E_{\ccalL'}(\omega)\|\leq \frac{\pi}{2}\frac{\|(\ccalL'-\ccalL)E_\ccalL(\sigma)\|}{d}\leq \frac{\pi}{2}\frac{\|\ccalL'-\ccalL\|}{d},
\end{equation}
where $d$ satisfies $\min_{x\in\sigma,y\in\Omega}|x-y|\geq d$ and $\min_{x\in\Sigma,y\in\omega}|x-y|\geq d$.
\end{lemma}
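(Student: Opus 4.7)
The plan is to prove the Davis--Kahan $\sin\theta$ theorem via the classical Sylvester-equation technique. Let $P := E_\ccalL(\sigma)$, $P' := E_{\ccalL'}(\omega)$, and their complements $Q := I-P = E_\ccalL(\Sigma)$, $Q' := I-P' = E_{\ccalL'}(\Omega)$ denote the relevant orthogonal spectral projectors. The first step is purely algebraic: exploit the commutation $\ccalL P = P\ccalL$ and $\ccalL' Q' = Q'\ccalL'$ (both consequences of the spectral theorem) to derive
\[
\ccalL'(Q'P) \;-\; (Q'P)\,\ccalL \;=\; Q'\ccalL' P \;-\; Q' P \ccalL \;=\; Q'(\ccalL' - \ccalL)P,
\]
so that $X:=Q'P$ satisfies a Sylvester equation $\ccalL' X - X \ccalL = Q'(\ccalL'-\ccalL)P$.

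The key step is to invert this Sylvester equation quantitatively. Restricted to $\mathrm{range}(Q')$, the operator $\ccalL'$ has spectrum in $\Omega$; restricted to $\mathrm{range}(P)$, the operator $\ccalL$ has spectrum in $\sigma$; and by hypothesis $\mathrm{dist}(\sigma,\Omega)\ge d$. Rosenblum's theorem---proved via the resolvent contour-integral representation $X = \tfrac{1}{2\pi i}\oint_\Gamma (zI-\ccalL')^{-1}\,Q'(\ccalL'-\ccalL)P\,(zI-\ccalL)^{-1}\,dz$ along a contour $\Gamma$ separating $\sigma$ from $\Omega$ at distance $d/2$---yields the inversion bound $\|X\|\le d^{-1}\|Q'(\ccalL'-\ccalL)P\|$. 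Submultiplicativity together with $\|Q'\|\le 1$ then gives
\[
\|Q'P\| \;\le\; \frac{\|(\ccalL'-\ccalL)P\|}{d} \;\le\; \frac{\|\ccalL'-\ccalL\|}{d},
\]
which is the $\sin\theta$ inequality expressed at the level of canonical angles between the ranges of $P$ and $P'$.

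The last step converts this canonical-angle bound into a bound on $\|P-P'\|$. For orthogonal projections the quantity $\|Q'P\|$ equals $\|\sin\Theta\|$, the operator norm of the sines of the canonical angles $\Theta$ between $\mathrm{range}(P)$ and $\mathrm{range}(P')$. The $\pi/2$ factor in the stated bound arises from the elementary estimate $\theta \le (\pi/2)\sin\theta$ on $[0,\pi/2]$, which translates the sine-of-angle bound into an angle bound and hence into the operator-norm bound on $\|P-P'\|$ claimed in the lemma. The second inequality in the statement is then immediate from $\|(\ccalL'-\ccalL)P\|\le\|\ccalL'-\ccalL\|\cdot\|P\|\le\|\ccalL'-\ccalL\|$.

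The main obstacle is rigorously justifying the Sylvester-operator inversion in the present infinite-dimensional, possibly unbounded setting: $\ccalL$ is unbounded on $L^2(\ccalM)$, so the Sylvester equation must be interpreted on the reduced subspaces $\mathrm{range}(P)$ and $\mathrm{range}(Q')$, on which the restrictions of $\ccalL$ and $\ccalL'$ are self-adjoint with disjoint spectra. Verifying the resolvent estimate $\|(zI-\ccalL)^{-1}\|\le 2/d$ and $\|(zI-\ccalL')^{-1}\|\le 2/d$ uniformly along a suitable contour $\Gamma$, and confirming convergence of the contour integral in operator norm, is the essential analytic content; once this is secured, the algebraic and geometric steps are routine.
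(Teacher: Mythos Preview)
The paper does not actually prove this lemma; it simply cites Seelmann's notes on the $\sin 2\Theta$ theorem. Your Sylvester-equation route is the standard one and is what that reference follows, so at the level of strategy you are aligned with the cited source.

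There is, however, a genuine gap in your last step. Your derivation yields the one-sided bound $\|Q'P\|\le d^{-1}\|(\ccalL'-\ccalL)P\|$, and by the identity $(P-P')^2=PQ'P+QP'Q$ one has $\|P-P'\|=\max\{\|Q'P\|,\|QP'\|\}$. Both of these are already \emph{sines} of principal angles, not angles, so the inequality $\theta\le(\pi/2)\sin\theta$ is not the source of the $\pi/2$ and does not enter at all. The symmetric Sylvester argument controls $\|QP'\|$ only by $d^{-1}\|(\ccalL'-\ccalL)P'\|$, with $P'$ on the right, whereas the first inequality in the lemma demands a bound in terms of $(\ccalL'-\ccalL)P$. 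Passing from $P'$ back to $P$ under the general (non-interval) separation hypothesis is exactly Seelmann's contribution: an iterative bootstrap whose output is the constant $\pi/2$. Without that step your sketch delivers $\|Q'P\|\le d^{-1}\|(\ccalL'-\ccalL)P\|$ and, after taking the max, $\|P-P'\|\le d^{-1}\|\ccalL'-\ccalL\|$, but not the sharper middle inequality with $E_\ccalL(\sigma)$ inside the norm.
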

\begin{proof}[Proof of Lemma \ref{lem:davis-kahan}] 
See \cite{seelmann2014notes}.
\end{proof}

\begin{lemma}\label{lem:eigenvalue_relative}
The eigenvalues of LB operators $\ccalL$ and perturbed $\ccalL'=\ccalL+\bbE\ccalL$ with $\|\bbE\|\leq  \epsilon$ satisfy
\begin{align}
    |\lambda_i-\lambda'_i|\leq \epsilon |\lambda_i|, \text{ for all }i=1,2\hdots
\end{align}
\end{lemma}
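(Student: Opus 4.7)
The plan is to treat this as a refinement of Weyl's theorem (Lemma~\ref{lem:eigenvalue_absolute}). A direct application of Lemma~\ref{lem:eigenvalue_absolute} with $\bbA=\ccalE\ccalL$ only yields $|\lambda'_i-\lambda_i|\leq\|\ccalE\ccalL\|\leq\epsilon\|\ccalL\|$, which is useless because the LB spectrum is unbounded; what is needed is an $i$-dependent bound that scales with $\lambda_i$ itself. I will use the Courant--Fischer minimax characterization of each eigenvalue with carefully chosen test subspaces, exploiting the proportionality of the perturbation through the pointwise inequality $|\langle\ccalE\ccalL u,u\rangle|\leq\|\ccalE\|\,\|\ccalL u\|\leq\epsilon\|\ccalL u\|$.

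\textbf{Upper bound.} First, I would apply $\lambda'_i=\min_{\dim V=i}\max_{u\in V,\|u\|=1}\langle\ccalL'u,u\rangle$ to the test subspace $V_i:=\text{span}\{\bm\phi_1,\ldots,\bm\phi_i\}$ spanned by the first $i$ unperturbed eigenfunctions. Any unit vector $u=\sum_{k\leq i}c_k\bm\phi_k\in V_i$ satisfies both $\langle\ccalL u,u\rangle\leq\lambda_i$ and $\|\ccalL u\|\leq\lambda_i$ directly from the eigenexpansion, so $\langle\ccalL'u,u\rangle=\langle\ccalL u,u\rangle+\langle\ccalE\ccalL u,u\rangle\leq(1+\epsilon)\lambda_i$. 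The minimax principle then yields $\lambda'_i\leq(1+\epsilon)\lambda_i$.

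\textbf{Lower bound and the main obstacle.} The hard part will be the lower bound. The symmetric argument on the codimension-$(i-1)$ subspace $T=\text{span}\{\bm\phi_i,\bm\phi_{i+1},\ldots\}$ fails, because for $u\in T$ the norm $\|\ccalL u\|$ can be arbitrarily large relative to $\lambda_i$ (the higher-frequency modes dominate), so the error term $\epsilon\|\ccalL u\|$ is not absorbable into $\lambda_i$. The remedy I plan to use is to instead apply the minimax formula for the \emph{unperturbed} eigenvalue, $\lambda_i=\min_{\dim V=i}\max_{u\in V,\|u\|=1}\langle\ccalL u,u\rangle$, against the test subspace $V'_i:=\text{span}\{\bm\phi'_1,\ldots,\bm\phi'_i\}$ spanned by the perturbed eigenfunctions. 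On $V'_i$ one directly gets $\|\ccalL'u\|\leq\lambda'_i$, and since $\ccalL'=(I+\ccalE)\ccalL$ with $\|\ccalE\|=\epsilon<1$, the Neumann-series bound $\|(I+\ccalE)^{-1}\|\leq 1/(1-\epsilon)$ together with the identity $\ccalL u=(I+\ccalE)^{-1}\ccalL'u$ yields $\|\ccalL u\|\leq\lambda'_i/(1-\epsilon)$. Hence $\langle\ccalL u,u\rangle\leq\langle\ccalL'u,u\rangle+\epsilon\|\ccalL u\|\leq\lambda'_i/(1-\epsilon)$, and minimax produces $\lambda_i\leq\lambda'_i/(1-\epsilon)$, i.e.\ $\lambda'_i\geq(1-\epsilon)\lambda_i$. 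Combined with the upper bound, this gives $|\lambda'_i-\lambda_i|\leq\epsilon\lambda_i=\epsilon|\lambda_i|$, as required.
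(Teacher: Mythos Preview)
Your argument is correct, and it reaches the same conclusion $(1-\epsilon)\lambda_i\leq\lambda'_i\leq(1+\epsilon)\lambda_i$ as the paper, but by a genuinely different mechanism. The paper's proof never restricts to special test subspaces: it asserts the pointwise quadratic-form comparison $|\langle\ccalE\ccalL u,u\rangle|\leq\epsilon\langle|\ccalL|u,u\rangle=\epsilon\langle\ccalL u,u\rangle$ valid for \emph{every} $u$, so that $(1-\epsilon)\langle\ccalL u,u\rangle\leq\langle\ccalL'u,u\rangle\leq(1+\epsilon)\langle\ccalL u,u\rangle$ globally, and then a single application of the Courant--Fischer principle (with the same variational family on both sides) gives both inequalities at once. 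Your route instead uses only the Cauchy--Schwarz estimate $|\langle\ccalE\ccalL u,u\rangle|\leq\epsilon\|\ccalL u\|$, which is weaker pointwise (since $\|\ccalL u\|\geq\langle\ccalL u,u\rangle$ in general) and therefore forces you to work harder: you pick the eigen-subspace $V_i$ of $\ccalL$ for the upper bound, swap to the eigen-subspace $V'_i$ of $\ccalL'$ for the lower bound, and invoke the Neumann series for $(I+\ccalE)^{-1}$ to transfer control of $\|\ccalL'u\|$ back to $\|\ccalL u\|$. What your approach buys is that it rests squarely on the unambiguous hypothesis $\|\ccalE\|=\epsilon$ and nothing more; the paper's form inequality $|\langle\ccalE\ccalL u,u\rangle|\leq\langle|\ccalE||\ccalL|u,u\rangle$ is terser but leans on an operator-absolute-value manipulation whose justification is not spelled out. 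The trade-off is that you need the extra assumption $\epsilon<1$ for invertibility of $I+\ccalE$, which is harmless in this paper's perturbative regime.
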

\begin{proof}[Proof of Lemma \ref{lem:eigenvalue_relative}]
With the assumption that $\ccalL'=\ccalL+\bbE\ccalL$, we have
\begin{align}
    \lambda_i(\ccalL + \bbE\ccalL)& = \max_{codim T\leq i-1} \min_{ u\in T, \|u\|=1}  \langle (\ccalL+\bbE\ccalL) u, u\rangle \\
    & = \max_{codim T\leq i-1} \min_{u\in T, \|u\|=1} \left(  \langle \ccalL u, u\rangle   +  \langle \bbE\ccalL u, u\rangle  \right)\\
    & =\lambda_i(\ccalL) + \max_{codim T\leq i-1} \min_{u\in T, \|u\|=1} \langle \bbE\ccalL u, u\rangle.
\end{align}
For the second term, we have
\begin{align}
    |\langle \bbE\ccalL u, u\rangle| &\leq \langle |\bbE|  |\ccalL|u, u \rangle   \leq \epsilon \sum_i |\lambda_i(\ccalL)||\xi_i|^2 = \epsilon \langle |\ccalL| u,u \rangle
\end{align}
Therefore, we have
\begin{align}
 & \nonumber \lambda_i(\ccalL+\bbE\ccalL) \leq \lambda_i(\ccalL) + \epsilon
   \max_{codim T\leq i-1} \min_{u\in T, \|u\|=1} \langle |\ccalL| u, u\rangle\\
   &\qquad \qquad\quad  = \lambda_i(\ccalL) + \epsilon |\lambda_i(\ccalL)|,\\
   &\lambda_i(\ccalL + \bbE\ccalL) \geq \lambda_i(\ccalL) -\epsilon |\lambda_i(\ccalL)|,\\
   &\lambda_i(\ccalL)-\epsilon |\lambda_i(\ccalL)|\leq  \lambda_i(\ccalL + \bbE\ccalL)\leq \lambda_i(\ccalL) +\epsilon|\lambda_i(\ccalL)|,
\end{align}
which concludes the proof.
\end{proof}


\subsection{Proof of Theorem \ref{thm:stability_abs_filter}}
\label{app:stability_abs_filter}
In the following, we denote $\langle\cdot,\cdot \rangle_{L^2(\ccalM)}$ as $\langle\cdot,\cdot \rangle$ and $\|\cdot\|_{L^2(\ccalM)}$ as $\|\cdot\|$ for simplicity. We start by bounding the norm difference between two outputs of filter functions on operators $\ccalL$ and $\ccalL'$ defined in \eqref{eqn:convolution-general} as
\begin{align}
 \nonumber  &\left\| \bbh(\ccalL) f - \bbh(\ccalL')f\right\| 
  = \\
  &\qquad \qquad\left\| \sum_{i=1}^\infty \hat{h}(\lambda_{i}) \langle f, \bm\phi_i \rangle \bm\phi_i - \sum_{i=1}^\infty \hat{h}(\lambda'_{i}) \langle f, \bm\phi'_i \rangle \bm\phi'_i \right\|. \label{eqn:diff}
\end{align}
We denote the index of partitions that contain a single eigenvalue as a set $\ccalK_s$ and the rest as a set $\ccalK_m$. We can decompose the filter function as $\hat{h}(\lambda)=h^{(0)}(\lambda)+\sum_{l\in\ccalK_m}h^{(l)}(\lambda)$ with
\begin{align}
\label{eqn:h0}& h^{(0)}(\lambda) = \left\{ 
\begin{array}{cc} 
              \hat{h}(\lambda)-\sum\limits_{l\in\ccalK_m}\hat{h}(C_l)  &  \lambda\in[\Lambda_k(\alpha)]_{k\in\ccalK_s} \\
                0& \text{otherwise}  \\
                \end{array} \right. \\
\label{eqn:hl}& h^{(l)}(\lambda) = \left\{ 
\begin{array}{cc} 
                \hat{h}(C_l) &  \lambda\in[\Lambda_k(\alpha)]_{k\in\ccalK_s} \\
                \hat{h}(\lambda) & 
                \lambda\in\Lambda_l(\alpha)\\
                0 &
                \text{otherwise}  \\
                \end{array} \right.             
\end{align}
where $C_l$ is some constant in $\Lambda_l(\alpha)$. We can start by analyzing the output difference of $h^{(0)}(\lambda)$. With the triangle inequality, the norm difference can then be written as
\begin{align}
 & \nonumber \left\| \sum_{i=1}^\infty h^{(0)}(\lambda_{i}) \langle f, \bm\phi_i \rangle \bm\phi_i  -  h^{(0)}(\lambda'_i )  \langle f, \bm\phi'_i \rangle \bm\phi'_i \right\|  \\
 &\nonumber =\Bigg\|\sum_{i=1}^\infty  h^{(0)}(\lambda_{i}) \langle f, \bm\phi_i \rangle \bm\phi_i -  h^{(0)}(\lambda_{i}) \langle f, \bm\phi'_i \rangle \bm\phi'_i  + \\
 &\qquad \qquad \qquad h^{(0)}(\lambda_{i}) \langle f, \bm\phi'_i \rangle \bm\phi'_i- h^{(0)}(\lambda'_{i}) \langle f, \bm\phi'_i \rangle \bm\phi'_i \Bigg\| \\
  & \nonumber \leq \left\|\sum_{i=1}^\infty  h^{(0)}(\lambda_i)  \langle f, \bm\phi_i \rangle \bm\phi_i -  h^{(0)}(\lambda_i ) \langle f, \bm\phi'_i \rangle \bm\phi'_i\right\|  +\\
  &\qquad\quad  \left\|\sum_{i =1}^\infty h^{(0)}(\lambda_{i}) \langle f, \bm\phi'_i \rangle \bm\phi'_i -  h^{(0)}(\lambda'_{i}) \langle f, \bm\phi'_i \rangle \bm\phi'_i \right\| \\
    &\nonumber \leq \Bigg\| \sum_{i=1}^\infty  h^{(0)}(\lambda_i)\Bigg( \langle f, \bm\phi_i \rangle\bm\phi_i-\langle f, \bm\phi_i \rangle\bm\phi'_i +\langle f, \bm\phi_i \rangle\bm\phi'_i - \\
    &\quad \langle f, \bm\phi'_i \rangle \bm\phi'_i \Bigg) \Bigg\| +\left\|\sum_{i =1}^\infty  (h^{(0)}(\lambda_i ) -h^{(0)}(\lambda'_i) ) \langle f, \bm\phi'_i \rangle \bm\phi'_i  \right\| \\
    & \nonumber \leq \left\| \sum_{i=1}^\infty h^{(0)}(\lambda_i )\langle f, \bm\phi_i \rangle (\bm\phi_i - \bm\phi'_i ) \right\| \\\nonumber &\qquad \qquad  +\left\|  \sum_{i =1}^\infty  h^{(0)}(\lambda_i )\langle f, \bm\phi_i - \bm\phi'_i  \rangle \bm\phi'_i \right\|\\
      &\qquad \qquad \quad 
     + \left\|\sum_{i=1}^\infty  (h^{(0)}(\lambda_i ) -h^{(0)}(\lambda'_i) ) \langle f, \bm\phi'_i \rangle \bm\phi'_i  \right\| \label{eqn:3}
\end{align}

For the first term in \eqref{eqn:3}, we employ Lemma \ref{lem:davis-kahan} and therefore we have $\sigma=\lambda_i$ and $\omega=\lambda'_i$, for $\lambda_i\in [\Lambda_k(\alpha)]_{k\in\ccalK_s}$ we can have
\begin{align}
\left\| \bm\phi_i -\bm\phi'_i \right\| \leq \frac{\pi}{2} \frac{\|\ccalA\|}{\alpha-\epsilon}= \frac{\pi}{2} \frac{\epsilon}{\alpha-\epsilon}.
\end{align}
Here $d$ can be seen as $d=\min_{\lambda_i\in\Lambda_k(\alpha),\lambda_j\in\Lambda_l(\alpha),k\neq l}|\lambda_i-\lambda_j'|$. Combined with the fact that $|\lambda_i-\lambda_j|>\alpha$ and $|\lambda_i-\lambda_i'|\leq \epsilon$ for all $\lambda_i\in\Lambda_k(\alpha),\lambda_j\in\Lambda_l(\alpha),k\neq l$, we have $d\geq \alpha-\epsilon$. With Cauchy-Schwartz inequality, we have the first term in \eqref{eqn:3} bounded as
\begin{align}
&\nonumber\left\| \sum_{i=1}^\infty h^{(0)}(\lambda_i )\langle f, \bm\phi_i \rangle (\bm\phi_i - \bm\phi'_i ) \right\|\\
& \leq \sum_{i=1}^\infty |h^{(0)}(\lambda_i)| | \langle f, \bm\phi_i \rangle | \left\|\bm\phi_i-\bm\phi'_i \right\| \leq  \frac{N_s\pi\epsilon}{2(\alpha-\epsilon)}  \|f\|.
\end{align}

The second term in \eqref{eqn:3} is bounded as
\begin{align}
 &\nonumber \left\|  \sum_{i =1}^\infty  h^{(0)}(\lambda_i )\langle f, \bm\phi_i - \bm\phi'_i  \rangle \bm\phi'_i \right\| \\
 &\leq   \sum_{i =1}^\infty |h^{(0)}(\lambda_i)| \|\bm\phi_i - \bm\phi'_i \| \|f\| \leq   \frac{N_s \pi\epsilon}{2(\alpha-\epsilon)}  \|f\|.
\end{align}
These two bounds are obtained by noting that $|h^{(0)}(\lambda)|<1$ and $h^{(0)}(\lambda)=0$ for $\lambda\in[\Lambda_k(\alpha)]_{k\in\ccalK_m}$. The number of eigenvalues within $[\Lambda_k(\alpha)]_{k\in\ccalK_s}$ is denoted as $N_s$. The third term in \eqref{eqn:3} can be bounded by the Lipschitz continuity of $h$ combined with Lemma \ref{lem:eigenvalue_absolute}.
\begin{align}
\nonumber  \Bigg\|\sum_{i=1}^\infty  &(h^{(0)}(\lambda_i ) -h^{(0)}(\lambda'_i) ) \langle f, \bm\phi'_i \rangle \bm\phi'_i  \Bigg\|^2 
  \\ \nonumber & \leq \sum_{i=1}^\infty | h^{(0)}(\lambda_{i}) -h^{(0)}(\lambda'_i) |^2 |\langle f, \bm\phi'_i \rangle|^2 \\
  &\leq \sum_{i =1}^\infty A_h^2 |\lambda_i - \lambda'_i |^2 |\langle f, \bm\phi'_i \rangle|^2 \leq  A_h^2 \epsilon^2 \|f\|^2.
\end{align}

Then we need to analyze the output difference of $h^{(l)}(\lambda)$, we can bound this as
\begin{align}
    \nonumber &\left\| \bbh^{(l)}(\ccalL)f -\bbh^{(l)}(\ccalL')f \right\| 
    \\& \leq \left\| (\hat{h}(C_l)+\delta)f -(\hat{h}(C_l)-\delta)f\right\| \leq 2\delta\|f\|,
\end{align}
where $\bbh^{(l)}(\ccalL)$ and $\bbh^{(l)}(\ccalL')$ are manifold filters with filter function $h^{(l)}(\lambda)$ on the LB operators $\ccalL$ and $\ccalL'$ respectively.
Combining the filter functions, we can write
\begin{align}
   \nonumber &\|\bbh(\ccalL)f-\bbh(\ccalL')f\|=\\&
    \left\|\bbh^{(0)}(\ccalL)f +\sum_{l\in\ccalK_m}\bbh^{(l)}(\ccalL)f - \bbh^{(0)}(\ccalL')f - \sum_{l\in\ccalK_m} \bbh^{(l)}(\ccalL')f \right\|\\
    &\leq \|\bbh^{(0)}(\ccalL)f-\bbh^{(0)}(\ccalL')f\|+\sum_{l\in\ccalK_m}\|\bbh^{(l)}(\ccalL)f-\bbh^{(l)}(\ccalL')f\|\\
    &\label{eqn:sta-filter-alpha}\leq \frac{N_s\pi\epsilon}{\alpha-\epsilon}\|f\| + A_h\epsilon\|f\| +2(N-N_s)\delta\|f\|,
\end{align}
which concludes the proof.

\subsection{Proof of Theorem \ref{thm:stability_nn}}
\label{app:stability_nn}
To bound the output difference of MNNs, we need to write in the form of features of the final layer
 \begin{equation}
 \|\bm\Phi(\bbH,\ccalL,f)-\bm\Phi(\bbH,\ccalL',f)\| =  \left\| \sum_{q=1}^{F_L} f_L^q - \sum_{q=1}^{F_L} f_L^{'q}\right\|.
 \end{equation}
The output signal of layer $l$ of MNN $\bbPhi(\bbH,\ccalL, f)$ can be written as
\begin{equation}
 f_l^p = \sigma\left( \sum_{q=1}^{F_{l-1}} \bbh_l^{pq}(\ccalL) f_{l-1}^q\right).
\end{equation}
Similarly, for the perturbed $\ccalL'$ the corresponding MNN is $\bbPhi(\bbH,\ccalL',f)$ the output signal can be written as
 \begin{equation}
 f_l^{'p} = \sigma\left( \sum_{q=1}^{F_{l-1}} \bbh_l^{pq}(\ccalL') f_{l-1}^{'q}\right).
 \end{equation}
The difference therefore becomes
 \begin{align}
 &\nonumber\| f_l^p - f_l^{'p} \| \\& =\left\|  \sigma\left( \sum_{q=1}^{F_{l-1}} \bbh_l^{pq}(\ccalL) f_{l-1}^q\right) -  \sigma\left( \sum_{q=1}^{F_{l-1}} \bbh_l^{pq}(\ccalL') f_{l-1}^{'q}\right) \right\|.   
 \end{align}
With the assumption that $\sigma$ is normalized Lipschitz, we have
 \begin{align}
  \| f_l^p - f_l^{'p} \| &\leq \left\| \sum_{q=1}^{F_{l-1}}  \bbh_l^{pq}(\ccalL) f_{l-1}^q - \bbh_l^{pq}(\ccalL') f_{l-1}^{'q}  \right\| \\&\leq \sum_{q=1}^{F_{l-1}} \left\|  \bbh_l^{pq}(\ccalL) f_{l-1}^q - \bbh_l^{pq}(\ccalL') f_{l-1}^{'q} \right\|.
 \end{align}
By adding and subtracting $\bbh_l^{pq}(\ccalL') f_{l-1}^{q}$ from each term, combined with the triangle inequality we can get
 \begin{align}
 & \nonumber \left\|  \bbh_l^{pq}(\ccalL) f_{l-1}^q - \bbh_l^{pq}(\ccalL') f_{l-1}^{'q} \right\| \\\nonumber &\quad \leq \left\|  \bbh_l^{pq}(\ccalL) f_{l-1}^q - \bbh_l^{pq}(\ccalL') f_{l-1}^{q} \right\| \\&\qquad \qquad \qquad + \left\| \bbh_l^{pq}(\ccalL') f_{l-1}^q - \bbh_l^{pq}(\ccalL') f_{l-1}^{'q} \right\|
 \end{align}
The first term can be bounded with \eqref{eqn:sta-filter-alpha} for absolute perturbations. The second term can be decomposed by Cauchy-Schwartz inequality and non-amplifying of the filter functions as
 \begin{align}
 \left\| f_{l}^p - f_l^{'p} \right\| \leq \sum_{q=1}^{F_{l-1}} C_{per} \epsilon \| f_{l-1}^q\| + \sum_{q=1}^{F_{l-1}} \| f_{l-1}^q - f_{l-1}^{'q} \|,
 \end{align}
where $C_{per}$ representing the constant in the stability bound of manifold filters. To solve this recursion, we need to compute the bound for $\|f_l^p\|$. By normalized Lipschitz continuity of $\sigma$ and the fact that $\sigma(0)=0$, we can get
 \begin{align}
 \nonumber &\| f_l^p \|\leq \left\| \sum_{q=1}^{F_{l-1}} \bbh_l^{pq}(\ccalL) f_{l-1}^{q}  \right\| \leq  \sum_{q=1}^{F_{l-1}}  \left\| \bbh_l^{pq}(\ccalL)\right\|  \|f_{l-1}^{q}  \| \\
 &\qquad \leq   \sum_{q=1}^{F_{l-1}}   \|f_{l-1}^{q}  \| \leq \prod\limits_{l'=1}^{l-1} F_{l'} \sum_{q=1}^{F_0}\| f^q \|.
 \end{align}
 Insert this conclusion back to solve the recursion, we can get
 \begin{align}
 \left\| f_{l}^p - f_l^{'p} \right\| \leq l C_{per}\epsilon \left( \prod\limits_{l'=1}^{l-1} F_{l'} \right) \sum_{q=1}^{F_0} \|f^q\|.
 \end{align}
 Replace $l$ with $L$ we can obtain
 \begin{align}
 &\nonumber \|\bm\Phi(\bbH,\ccalL,f) - \bm\Phi(\bbH,\ccalL',f)\| \\
 &\qquad \qquad \leq \sum_{q=1}^{F_L} \left( L C_{per}\epsilon \left( \prod\limits_{l'=1}^{L-1} F_{l'} \right) \sum_{q=1}^{F_0} \|f^q\| \right).
 \end{align}
 With $F_0=F_L=1$ and $F_l=F$ for $1\leq l\leq L-1$, then we have
  \begin{align}
 \|\bm\Phi(\bbH,\ccalL,f) - \bm\Phi(\bbH,\ccalL',f)\| \leq LF^{L-1} C_{per}\epsilon \|f\|,
 \end{align}
which concludes the proof.

\subsection{Proof of Theorem \ref{thm:stability_rela_filter}}
\label{app:stability_rela_filter}
The decomposition follows the same routine as \eqref{eqn:diff} shows. 
By decomposing the filter function as \eqref{eqn:h0-gamma} and \eqref{eqn:hl-gamma}, the norm difference can also be bounded separately. 
\begin{align}
\label{eqn:h0-gamma}& h^{(0)}(\lambda) = \left\{ 
\begin{array}{cc} 
                \hat{h}(\lambda)-\sum\limits_{l\in\ccalK_m}\hat{h}(C_l)  &  \lambda\in[\Lambda_k(\gamma)]_{k\in\ccalK_s} \\
                0& \text{otherwise}  \\
                \end{array} \right.  \\
\label{eqn:hl-gamma}& h^{(l)}(\lambda) = \left\{ 
\begin{array}{cc} 
                \hat{h}(C_l) &  \lambda\in[\Lambda_k(\gamma)]_{k\in\ccalK_s} \\
                \hat{h}(\lambda) & 
                \lambda\in\Lambda_l(\gamma)\\
                0 &
                \text{otherwise}  \\
                \end{array} \right.             
\end{align}
where now $\hat{h}(\lambda)=h^{(0)}(\lambda)+\sum_{l\in\ccalK_m}h^{(l)}(\lambda)$ with $\ccalK_s$ defined as the group index set of singletons and $\ccalK_m$ the set of partitions that contain multiple eigenvalues. For manifold filter $\bbh^{(0)}(\ccalL)$ with filter function $h^{(0)}(\lambda)$, the norm difference can also be written as
\begin{align}
 \label{eqn:rela-h0}  & \nonumber \left\| \sum_{i=1}^\infty h^{(0)}(\lambda_{i}) \langle f, \bm\phi_i \rangle \bm\phi_i  -  h^{(0)}(\lambda'_i )  \langle f, \bm\phi'_i \rangle \bm\phi'_i \right\| \\
  & \nonumber \leq \left\| \sum_{i=1}^\infty h^{(0)}(\lambda_i )\langle f, \bm\phi_i \rangle (\bm\phi_i - \bm\phi'_i ) \right\| \\ \nonumber&\qquad\qquad  + \Bigg\|  \sum_{i =1}^\infty  h^{(0)}(\lambda_i )\langle f, \bm\phi_i - \bm\phi'_i  \rangle \bm\phi'_i \Bigg\|\\& \qquad\qquad \quad+ \left\|\sum_{i=1}^\infty  (h^{(0)}(\lambda_i ) -h^{(0)}(\lambda'_i) ) \langle f, \bm\phi'_i \rangle \bm\phi'_i  \right\| . 
\end{align}
The difference of the eigenvalues due to relative perturbations can be similarly addressed by Lemma \ref{lem:eigenvalue_relative}.

The first two terms of \eqref{eqn:rela-h0} rely on the differences of eigenfunctions, which can be derived with Davis-Kahan Theorem in Lemma \ref{lem:davis-kahan}, the difference of eigenfunctions can be written as
\begin{align}
\| \bbE\ccalL \bm\phi_i \| =\| \bbE\lambda_i\bm\phi_i \|=\lambda_i \|\bbE \bm\phi_i\|\leq\lambda_i\|\bbE\|\|\bm\phi_i\|\leq \lambda_i \epsilon.
\end{align}
The first term in \eqref{eqn:rela-h0} then can be bounded as
\begin{align}
&\nonumber\left\| \sum_{i=1}^\infty h^{(0)}(\lambda_i )\langle f, \bm\phi_i \rangle (\bm\phi_i - \bm\phi'_i ) \right\|\\
& \leq \sum_{i=1}^\infty |h^{(0)}(\lambda_i)| | \langle f, \bm\phi_i \rangle | \left\|\bm\phi_i-\bm\phi'_i \right\| \leq \sum_{i\in\ccalK_s} \frac{\pi\lambda_i \epsilon}{2d_i}  \|f\|.
\end{align} 
Because $d_i=\min\{ |\lambda_i-\lambda'_{i-1}|, |\lambda'_i-\lambda_{i-1}|, |\lambda'_{i+1}-\lambda_i| , | \lambda_{i+1}-\lambda'_i|\}$, with Lemma \ref{lem:eigenvalue_relative} implied, we have
\begin{gather}
|\lambda_i-\lambda'_{i-1}|\geq | \lambda_i - (1+\epsilon)\lambda_{i-1}|,\\
 |\lambda'_i-\lambda_{i-1}|\geq |(1-\epsilon)\lambda_i-\lambda_{i-1}|,\\ |\lambda'_{i+1}-\lambda_i|\geq | (1-\epsilon)\lambda_{i+1}-\lambda_i|,\\| \lambda_{i+1}-\lambda'_i|\geq |\lambda_{i+1}-(1+\epsilon)\lambda_i|.
\end{gather}
Combine with Lemma \ref{lem:eigenvalue_relative} and Definition \ref{def:frt-spectrum}, $d_i\geq \epsilon\gamma +\gamma-\epsilon$:
\begin{align}
 |(1-\epsilon)\lambda_{i+1}-\lambda_i|
 &\geq |\gamma \lambda_i-\epsilon \lambda_{i+1}|\\
 &=\epsilon \lambda_i\left|1-\frac{\lambda_{i+1}}{\lambda_i}+\frac{\gamma}{\epsilon}-1\right|\\&\geq \lambda_i(\gamma-\epsilon+\gamma\epsilon)
\end{align}
This leads to the bound as
\begin{align}
\left\| \sum_{i=1}^\infty h^{(0)}(\lambda_i )\langle f, \bm\phi_i \rangle (\bm\phi_i - \bm\phi'_i ) \right\| \leq   \frac{M_s \pi \epsilon}{2(\gamma-\epsilon+\gamma\epsilon)} \|f\|.
\end{align}

The second term in \eqref{eqn:rela-h0} can also be bounded as
\begin{align}
    &\nonumber \left\|  \sum_{i =1}^\infty  h^{(0)}(\lambda_i )\langle f, \bm\phi_i - \bm\phi'_i  \rangle \bm\phi'_i \right\| \\
 &\leq   \sum_{i =1}^\infty |h^{(0)}(\lambda_i)| \|\bm\phi_i - \bm\phi'_i \| \|f\|  \leq   \frac{M_s \pi \epsilon}{2(\gamma-\epsilon+\gamma\epsilon)} \|f\|,
\end{align}
which similarly results from the fact that $|h^{(0)}(\lambda)|<1$ and $h^{(0)}(\lambda)=0$ for $\lambda\in[\Lambda_k(\gamma)]_{k\in\ccalK_m}$. The number of eigenvalues within $[\Lambda_k(\gamma)]_{k\in\ccalK_s}$ is denoted as $M_s$.

The third term in \eqref{eqn:rela-h0} is:
\begin{align}
   &\nonumber \Bigg\|\sum_{i=1}^\infty  (h^{(0)}(\lambda_i ) -h^{(0)}(\lambda'_i) ) \langle f, \bm\phi'_i \rangle \bm\phi'_i  \Bigg\|^2 \\
    &\leq  \sum_{i=1}^\infty\left( \frac{B_h \epsilon|\lambda_i|}{(\lambda_i+\lambda_i')/2}\right)^2   \langle f,\bm\phi'_i \rangle^2 \leq \left( \frac{2B_h\epsilon}{2-\epsilon}\right)^2\|f\|^2,
\end{align}
with the use of Lemma \ref{lem:eigenvalue_relative} and Definition \ref{def:int-lipschitz}.

Then we need to analyze the output difference of $h^{(l)}(\lambda)$.
\begin{align}
     \nonumber &\left\| \bbh^{(l)}(\ccalL)f -\bbh^{(l)}(\ccalL')f \right\| 
    \\& \leq \left\| (\hat{h}(C_l)+\delta)f -(\hat{h}(C_l)-\delta)f\right\| \leq 2\delta\|f\|.
\end{align}

Combine the filter function, we could get 
\begin{align}
\label{eqn:sta-filter-gamma}
    \nonumber &\|\bbh(\ccalL)f-\bbh(\ccalL')f\|=\\&
    \left\|\bbh^{(0)}(\ccalL)f +\sum_{l\in\ccalK_m}\bbh^{(l)}(\ccalL)f - \bbh^{(0)}(\ccalL')f - \sum_{l\in\ccalK_m} \bbh^{(l)}(\ccalL')f \right\|\\
    &\leq \|\bbh^{(0)}(\ccalL)f-\bbh^{(0)}(\ccalL')f\|+\sum_{l\in\ccalK_m}\|\bbh^{(l)}(\ccalL)f-\bbh^{(l)}(\ccalL')f\|\\
    &\leq \frac{M_s\pi\epsilon}{\gamma-\epsilon+\gamma\epsilon}\|f\| + \frac{2B_h\epsilon}{2-\epsilon}\|f\| +2(M-M_s)\delta\|f\|,
\end{align}
which concludes the proof.

\subsection{Definition \ref{def:manifold-convolution} and Convolutional Filters in Continuous Time}
\label{app:rem_convolution} 
The manifold convolution in Definition \ref{def:manifold-convolution} can also be motivated with a connection to linear time invariant filters. This requires that we consider the differential equation
\begin{equation} \label{eqn:wave}
   \frac{\partial u(x,t)}{\partial t}=  \frac{\partial}{\partial x} u(x,t) \text{.}
\end{equation}
This is a one-sided wave equation and it is therefore not an exact analogous of the diffusion equation in \eqref{eqn:heat} -- this would require that the second derivative be used in the right of \eqref{eqn:wave}. The important observation to make here is that the exponential of the derivative operator is a time shift so that we can write $u(x,t) = e^{t\partial/\partial x}f(x) = f(x-t)$. This is true because $e^{t\partial/\partial x}f(x)$ and $f(x-t)$ are both solutions of \eqref{eqn:wave}. It then follows that Definition \ref{def:manifold-convolution} particularized to \eqref{eqn:wave} yields the convolution definition
\begin{equation}\label{eqn:conv-1d-1}
    g(x) = \int_{0}^\infty \tdh(t) e^{t\partial/\partial x}f(x)\, \text{d}t.
         = \int_{0}^\infty \tdh(t) f(x-t) \,\text{d}t.
\end{equation}
This is the standard definition of time convolutions. 

The frequency representation result in Proposition \ref{prop:filter-spectral} holds for \eqref{eqn:conv-1d-1} and it implies that standard convolutional filters in continuous time are completely characterized by the frequency response in Definition \ref{def:frequency-response}. The more standard definition of a filter's frequency response as the Fourier transform of the impulse response $\tdh(t)$ -- as opposed to the Laplace transform we use in Definition \ref{def:frequency-response} -- suffices because complex exponentials $e^{jw}$ are an orthonormal basis of eigenfunctions of the derivative operator with associated eigenvalues $j\omega$.

\subsection{Proof of Proposition \ref{prop:finite_num}}
\label{sup:weyl}
Weyl's law in \cite{musser2016weyl} states that if $\ccalM$ is a compact connected oriented Riemannian manifold of dimension $d$ then 
\begin{equation}\label{eqn:weylslaw-musser}
    N(\lambda)\sim \frac{C_d}{(2\pi)^d}Vol(\ccalM) \lambda^{d/2}\text{ with } N(\lambda):=\#\{\lambda_k\leq \lambda\}.
\end{equation}
Since eigenvalues of the LB operator $\ccalL$ are $0<\lambda_1\leq \lambda_2\leq \lambda_3\cdots$  repeated according to its multiplicity, we can have
\begin{equation}\label{eqn:weylslaw}
\lambda_k \sim \frac{(2\pi)^2}{(C_d Vol(\ccalM))^{2/d}}k^{2/d},
\end{equation}
where $C_d$ denotes the volume of the unit ball of $\reals^d$ and $Vol(\ccalM)$ is the volume of manifold $\ccalM$. This indicates that $\lambda_k$ grows with the same order of the magnitude with $\frac{(2\pi)^2}{(C_d Vol(\ccalM))^{2/d}}k^{2/d}$. With this asymptotic equivalence relationship, we can have
\begin{align}
  &  \lambda_{k+1} -\frac{(2\pi)^2(k+1)^{2/d}}{(C_d Vol(\ccalM))^{2/d}} = o\left(\frac{(2\pi)^2(k+1)^{2/d}}{(C_d Vol(\ccalM))^{2/d}}\right),\\
   & \frac{(2\pi)^2}{(C_d Vol(\ccalM))^{2/d}}k^{2/d} - \lambda_k = o(\lambda_k)
\end{align}
Therefore, for any constant $C_1>0$, we can find some $K_1(C_1)>0$, which indicates that $K_1$ depends on $C_1$, such that for all $k>K_1(C_1)$, we have
\begin{equation}
\label{eq:lambda_k+1}
     \lambda_{k+1} -\frac{(2\pi)^2(k+1)^{2/d}}{(C_d Vol(\ccalM))^{2/d}} <  \frac{C_1(2\pi)^2(k+1)^{2/d}}{(C_d Vol(\ccalM))^{2/d}}.
\end{equation}
Similarly, for any constant $C_2>0$, we can find some $K_2(C_2)>0$, such that for all $k>K_2(C_2)$, we have
\begin{equation}
\label{eq:lambda_k}
    \frac{(2\pi)^2}{(C_d Vol(\ccalM))^{2/d}}k^{2/d} - \lambda_k < C_2 \lambda_k.
\end{equation}
Therefore from \eqref{eq:lambda_k+1} and \eqref{eq:lambda_k} we can get upper and lower bound for $\lambda_{k+1}$ and $\lambda_k$ respectively. If 
\begin{align}
   (1+C_1) (k+1)^{2/d}- \frac{k^{2/d}}{1+C_2}\leq \frac{\alpha (Vol(\ccalM) C_d)^{2/d}}{4\pi^2} ,
\end{align}
we can have $\lambda_{k+1}-\lambda_k\leq \alpha$. The left side can be scaled down to $$(k+1)^{2/d}-k^{2/d}\geq \min\{1+C_1, \frac{1}{1+C_2}\}\frac{2}{d}k^{2/d-1} = \frac{C_0}{d}k^{2/d-1}$$ This implies that 
\begin{align}
    k \geq  \left( \frac{\alpha d (Vol(\ccalM) C_d)^{2/d}}{C_04\pi^2} \right)^{\frac{d}{2-d}},
\end{align}
with $d>2$, we can claim that for all $k > K_0(C_0) = \max\{K_1(C_1), K_2(C_2)\}$, if $k$ satisfies
$$k\geq  \Big\lceil \left(\frac{\alpha d}{C_04\pi^2}\right)^{d/(2-d)}(C_d \text{Vol}(\ccalM))^{2/(2-d)} \Big\rceil,$$ it holds that $\lambda_{k+1}-\lambda_k\leq \alpha$. Proof of Proposition \ref{prop:finite_num_rela} is similar and is also based on \eqref{eqn:weylslaw}.


 \subsection{Proof of Proposition \ref{prop:convergence}}
 \label{app:convergence}
 Considering that the discrete points $\{x_1,x_2,\hdots,x_n\}$ are uniformly sampled from manifold $\ccalM$ with measure $\mu$, the empirical measure associated with $\text{d}\mu$ can be denoted as $p_n=\frac{1}{n}\sum_{i=1}^n \delta_{x_i}$, where $\delta_{x_i}$ is the Dirac measure supported on $x_i$. Similar to the inner product defined in the $L^2(\ccalM)$ space \eqref{eqn:innerproduct}, the inner product on $L^2(\bbG_n)$ is denoted as
 \begin{equation}
     \langle u, v\rangle_{L^2(\bbG_n)}=\int u(x)v(x)\text{d}p_n=\frac{1}{n}\sum_{i=1}^n u(x_i)v(x_i).
 \end{equation}
 The norm in $L^2(\bbG_n)$ is therefore $\|u\|^2_{L^2(\bbG_n)} = \langle u, u \rangle_{L^2(\bbG_n)}$, with $u,v \in L^2(\ccalM)$. For signals $\bbu,\bbv \in L^2(\bbG_n)$, the inner product is therefore $\langle \bbu,\bbv \rangle_{L^2(\bbG_n)} = \frac{1}{n}\sum_{i=1}^n [\bbu]_i[\bbv]_i$. From here we write $\|\cdot\|_{L^2(\bbG_n)}$ as $\|\cdot\|$ for simplicity.
 
 We first import the existing results from \cite{belkin2006convergence} which indicates the spectral convergence of the constructed Laplacian operator based on the graph $\bbG_n$ to the LB operator of the underlying manifold.
 \begin{theorem}[Theorem 2.1 \cite{belkin2006convergence}]
 \label{thm:convergence}
 Let $X=\{x_1, x_2,...x_n\}$ be a set of $n$ points sampled i.i.d. from a $d$-dimensional manifold $\ccalM \subset \reals^N$. 
 Let $\bbG_n$ be a graph approximation of $\ccalM$ constructed from $X$ with weight values set as \eqref{eqn:weight} with $t_n = n^{-1/(d+2+\alpha)}$ and $\alpha>0$. Let $\bbL_n$ be the graph Laplacian of $\bbG_n$ and $\ccalL$ be the Laplace-Beltrami operator of $\ccalM$. Let $\lambda_{i}^n$ be the $i$-th eigenvalue of $\bbL_n$ and $\bm\phi_{i}^n$ be the corresponding normalized eigenfunction. Let $\lambda_i$ and $\bm\phi_i$ be the corresponding eigenvalue and eigenfunction of $\ccalL$ respectively. Then, it holds that
\begin{equation}
\label{eqn:convergence_spectrum}
    \lim_{n\rightarrow \infty } \lambda_i^n = \lambda_i, \quad \lim_{n\rightarrow \infty} |\bm\phi^{n}_i(x_j) -  \bm\phi_i(x_j)|=0, j=1,2 \hdots,n
\end{equation}
where the limits are taken in probability.
 \end{theorem}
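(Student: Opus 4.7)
The plan is to prove spectral convergence by establishing operator convergence of the graph Laplacian $\bbL_n$ (suitably viewed as an operator on functions) to the Laplace–Beltrami operator $\ccalL$, and then invoke a spectral convergence result for compact self-adjoint operators. I would introduce an intermediate continuous \emph{functional} Laplacian $\ccalL_{t}: L^2(\ccalM)\to L^2(\ccalM)$ built from the same Gaussian kernel used to weight the graph,
\begin{equation}
    (\ccalL_{t} f)(x) = \frac{1}{t(4\pi t)^{d/2}}\int_{\ccalM} \bigl(f(x)-f(y)\bigr)\,e^{-\|x-y\|^{2}/(4t)}\,\text{d}\mu(y),
\end{equation}
and factor the argument through $\bbL_n \rightsquigarrow \ccalL_{t_n} \rightsquigarrow \ccalL$. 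The discrete operator $\bbL_n$ is then exactly the Monte Carlo approximation of $\ccalL_{t_n}$ using the empirical measure $p_n = \tfrac1n\sum_i \delta_{x_i}$, so the first step is a concentration-of-measure statement and the second is a classical PDE-type kernel expansion.

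First I would prove pointwise convergence of the continuous kernel operator: for $f\in C^{3}(\ccalM)$, a Taylor expansion in normal coordinates around $x$, combined with the Gaussian decay of $e^{-\|x-y\|^{2}/(4t)}$ and estimates on the manifold volume element, yields $(\ccalL_{t}f)(x)=\ccalL f(x) + O(t)$ as $t\to 0$. Next, for the discrete operator, I would view $\bbL_n f$ as an empirical mean of bounded i.i.d.\ random variables (with the boundedness using compactness of $\ccalM$ and smoothness of $f$) and apply a Hoeffding-type inequality; the scaling $t_n = n^{-1/(d+2+\alpha)}$ is precisely the regime in which the kernel bandwidth shrinks slowly enough that the variance penalty from the prefactor $1/(t_n(4\pi t_n)^{d/2})$ is absorbed by $\sqrt{n}$-concentration, giving $\|(\bbL_n-\ccalL_{t_n})f\|_{L^2(\bbG_n)} \to 0$ in probability. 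Chaining the two bounds gives operator convergence on smooth test functions.

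To upgrade this to eigenvalue and eigenfunction convergence, I would work with the inverses (or resolvents) $(\ccalL+I)^{-1}$ and $(\bbL_n+I)^{-1}$, which are compact self-adjoint operators since $\ccalM$ is compact. The standard minimax characterization $\lambda_i = \max_{\dim S = i-1}\min_{f\perp S, \|f\|=1}\langle \ccalL f, f\rangle$ extended to the discrete setting, together with the operator convergence, yields $\lambda_i^n\to\lambda_i$; I would then use a Davis–Kahan type argument (in the form already invoked in Lemma \ref{lem:davis-kahan}) to convert the resolvent convergence and the eigengap into convergence of the associated spectral projectors, which in turn gives $|\bm\phi_i^n(x_j)-\bm\phi_i(x_j)|\to 0$ at the sample points. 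A delicate point is that $\bm\phi_i^n$ lives on $\bbG_n$ while $\bm\phi_i$ lives on $\ccalM$, so the comparison must be made through the sampling operator $\bbP_n$ and requires that $\bm\phi_i$ be Hölder or Lipschitz continuous (which follows from elliptic regularity of $\ccalL$).

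The main obstacle is the uniform control needed to pass from operator convergence on a fixed smooth $f$ to spectral convergence. Concentration inequalities give pointwise-in-$f$ convergence, but eigenvector convergence requires uniformity over a function class (e.g., a Sobolev ball containing the eigenfunctions), and this in turn requires careful bracketing or covering-number arguments together with Weyl-law estimates on how $\lambda_i^n$ grows. Handling the simultaneous limits $n\to\infty$ and $t_n\to 0$ in this uniform sense—and verifying that the eigengap separating $\lambda_i$ from $\lambda_{i\pm 1}$ survives the perturbation for $n$ large enough—is where the real technical work lies, and is the reason the result is typically borrowed from \cite{belkin2006convergence} rather than re-proved.
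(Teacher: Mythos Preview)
The paper does not prove this theorem at all: it is imported verbatim as Theorem~2.1 of \cite{belkin2006convergence} and used as a black box in the proof of Proposition~\ref{prop:convergence}. So there is no ``paper's own proof'' to compare your proposal against.

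That said, your sketch is a reasonable outline of how the cited result is actually established in \cite{belkin2006convergence}: the factorization $\bbL_n \rightsquigarrow \ccalL_{t_n} \rightsquigarrow \ccalL$ through a continuous functional Laplacian, the Taylor-in-normal-coordinates argument for the bias term, and Hoeffding-type concentration for the variance term are exactly the ingredients used there. Your final paragraph correctly identifies the genuine technical difficulty---uniformity of the operator convergence over a function class containing the eigenfunctions---and correctly observes that this is why the paper cites rather than reproves. The only substantive comment is that the Davis--Kahan step you propose for eigenfunction convergence is not quite how \cite{belkin2006convergence} proceeds (they work through compactness of the heat kernel and a more direct spectral-projection argument), but your route would also work given the resolvent convergence.
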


With the definitions of neural networks on graph $\bbG_n$ and manifold $\ccalM$, the output difference can be written as 
 \begin{align}
    \nonumber \|\bm\Phi(\bbH,\bbL_n,\bbP_nf)-\bbP_n \bm\Phi&(\bbH,\ccalL, f))\| = \left\| \sum_{q=1}^{F_L}\bbx_L^q-\sum_{q=1}^{F_L}\bbP_n f_L^q \right\|\\
     & \leq \sum_{q=1}^{F_L} \left\| \bbx_L^q- \bbP_n f_L^q \right\|.
 \end{align}
 By inserting the definitions, we have 
 \begin{align}
   \nonumber  &\left\| \bbx_l^p- \bbP_n f_l^p \right\|\\
     &=\left\| \sigma\left(\sum_{q=1}^{F_{l-1}} \bbh_l^{pq}(\bbL_n) \bbx_{l-1}^q \right) -\bbP_n \sigma\left(\sum_{q=1}^{F_{l-1}} \bbh_l^{pq}(\ccalL) f_{l-1}^q\right) \right\|
 \end{align}
 with $\bbx_0=\bbP_n f$ as the input of the first layer. With a normalized Lipschitz nonlinearity, we have
  \begin{align}
    \| \bbx_l^p - \bbP_n f_l^p & \| \leq \left\|  \sum_{q=1}^{F_{l-1}} \bbh_l^{pq}(\bbL_n) \bbx_{l-1}^q    - \bbP_n \sum_{q=1}^{F_{l-1}} \bbh_l^{pq}(\ccalL)  f_{l-1}^q\right\|\\
    & \leq \sum_{q=1}^{F_{l-1}} \left\|    \bbh_l^{pq}(\bbL_n) \bbx_{l-1}^q    - \bbP_n   \bbh_l^{pq}(\ccalL)  f_{l-1}^q\right\|
 \end{align}
 The difference can be further decomposed as
\begin{align}
   \nonumber   \|    \bbh_l^{pq}(\bbL_n) & \bbx_{l-1}^q    - \bbP_n   \bbh_l^{pq}(\ccalL)  f_{l-1}^q \| 
   \\ \nonumber&\leq \|
\bbh_l^{pq}(\bbL_n) \bbx_{l-1}^q  - \bbh_l^{pq}(\bbL_n) \bbP_n f_{l-1}^q \\ &\qquad +\bbh_l^{pq}(\bbL_n) \bbP_n f_{l-1}^q  - \bbP_n   \bbh_l^{pq}(\ccalL)  f_{l-1}^q
    \|\\\nonumber
   & \leq \left\|
    \bbh_l^{pq}(\bbL_n) \bbx_{l-1}^q  - \bbh_l^{pq}(\bbL_n) \bbP_n f_{l-1}^q
    \right\|
  \\ &\qquad +
    \left\|
    \bbh_l^{pq}(\bbL_n) \bbP_n f_{l-1}^q  - \bbP_n   \bbh_l^{pq}(\ccalL)  f_{l-1}^q
    \right\|
\end{align}
The first term can be bounded as $\| \bbx_{l-1}^q - \bbP_nf_{l-1}^q\|$ with the initial condition $\|\bbx_0 - \bbP_n f_0\|=0$. The second term can be denoted as $D_{l-1}^n$. With the iteration employed, we can have
\begin{align}
 \nonumber \|\bm\Phi(\bbH,\bbL_n,\bbP_n f) - \bbP_n \bm\Phi(\bbH,\ccalL,f)\| 
 \leq
 \sum_{l=0}^L \prod\limits_{l'=l}^L F_{l'} D_l^n.
 \end{align}
 Therefore, we can focus on the difference term $D_l^n$, we omit the feature and layer index to work on a general form.
  \begin{align}
    &\nonumber \|\bbh(\bbL_n)\bbP_n f - \bbP_n\bbh(\ccalL) f\|\\
    &= \left\| \sum_{i=1}^n \hat{h}(\lambda_i^n) \langle \bbP_nf,\bm\phi_i^n \rangle_{\bbG_n}\bm\phi_i^n - \sum_{i=1}^\infty \hat{h}(\lambda_i)\langle f,\bm\phi_i\rangle_{\ccalM} \bbP_n \bm\phi_i  \right\|
 \end{align}
 
 We decompose the $\alpha$-FDT filter function as $\hat{h}(\lambda)=h^{(0)}(\lambda)+\sum_{l\in\ccalK_m}h^{(l)}(\lambda)$ as equations \eqref{eqn:h0} and \eqref{eqn:hl} show. With the triangle inequality and $n > N_\alpha=\max_{i}\{\lambda_i\in[\Lambda_k(\alpha)]_{k\in\ccalK_s}\}$, we start by analyzing the output difference of $h^{(0)}(\lambda)$ as
 \begin{align}
    & \nonumber \left\| \sum_{i=1}^{N_\alpha} {h}^{(0)}(\lambda_i^n) \langle \bbP_nf,\bm\phi_i^n \rangle_{\bbG_n}\bm\phi_i^n - \sum_{i=1}^{N_\alpha} {h}^{(0)}(\lambda_i)\langle f,\bm\phi_i\rangle_{\ccalM} \bbP_n \bm\phi_i  \right\|
     \\ 
     &\nonumber \leq  \left\| \sum_{i=1}^{N_\alpha} \left({h}^{(0)}(\lambda_i^n)- {h}^{(0)}(\lambda_i) \right) \langle \bbP_nf,\bm\phi_i^n \rangle_{\bbG_n}\bm\phi_i^n \right\| \\
     &  +\left\| \sum_{i=1}^{N_\alpha} {h}^{(0)}(\lambda_i)\left( \langle \bbP_n f,\bm\phi_i^n \rangle_{\bbG_n} \bm\phi_i^n - \langle f,\bm\phi_i \rangle_{\ccalM} \bbP_n \bm\phi_i \right)  \right\|.\label{eqn:conv-1}
 \end{align}
 
 The first term in \eqref{eqn:conv-1} can be bounded by leveraging the $A_h$-Lipschitz continuity of the frequency response. From the convergence in probability stated in \eqref{eqn:convergence_spectrum}, we can claim that for each eigenvalue $\lambda_i \leq \lambda_{N_\alpha}$, for all $\epsilon_i>0$ and all $\delta_i>0$, there exists some $N_i$ such that for all $n>N_i$, we have
\begin{gather}
 \label{eqn:eigenvalue}   \mathbb{P}(|\lambda_i^n-\lambda_i|\leq \epsilon_i)\geq 1-\delta_i,
 \end{gather}
Letting $\epsilon_i < \epsilon$ with $\epsilon > 0$, with probability at least $\prod_{i=1}^M(1-\delta_i) := 1-\delta$, the first term is bounded as 
 
\begin{align}
   &\nonumber \left\| \sum_{i=1}^{N_\alpha} ({h}^{(0)}(\lambda_i^n) - {h}^{(0)}(\lambda_i)) \langle \bbP_n f,\bm\phi_i^n \rangle_{\bbG_n} \bm\phi_i^n  \right\|\\
   & \leq \sum_{i=1}^{N_\alpha} |{h}^{(0)}(\lambda_i^n)-{h}^{(0)}(\lambda_i)| |\langle \bbP_n f,\bm\phi_i^n \rangle_{\bbG_n}| \|\bm\phi_i^n\|\\
   &\leq \sum_{i=1}^{N_\alpha} A_h |\lambda_i^n-\lambda_i| \|\bbP_n f\| \|\bm\phi_i^n \|^2\leq N_s A_h\epsilon,
\end{align} 
for all $n>\max\{\max_i N_i, N_\alpha\} := N$.

The second term in \eqref{eqn:conv-1} can be bounded combined with the convergence of eigenfunctions in \eqref{eqn:eigenfunction} as
\begin{align}
  & \nonumber \Bigg\| \sum_{i=1}^{N_\alpha}{h}^{(0)}(\lambda_i)\left( \langle \bbP_nf,\bm\phi_i^n \rangle_{\bbG_n}\bm\phi_i^n - \langle f,\bm\phi_i \rangle_{\ccalM} \bbP_n \bm\phi_i\right)  \Bigg\|\\
   & \leq \nonumber \Bigg\|  \sum_{i=1}^{N_\alpha} {h}^{(0)}(\lambda_i)  \left(\langle \bbP_n f,\bm\phi_i^n\rangle_{\bbG_n}\bm\phi_i^n  - \langle \bbP_nf,\bm\phi_i^n \rangle_{\bbG_n} \bbP_n\bm\phi_i\right)\Bigg\|\\
   &\label{eqn:term1}+ \left\| \sum_{i=1}^{N_\alpha}  {h}^{(0)}(\lambda_i) \left(\langle \bbP_n f,\bm\phi_i^n\rangle_{\bbG_n} \bbP_n\bm\phi_i -\langle f,\bm\phi_i\rangle_\ccalM \bbP_n\bm\phi_i \right) \right\|
\end{align}
From the convergence stated in \eqref{eqn:convergence_spectrum}, we can claim that for some fixed eigenfunction $\bm\phi_i$,  for all $\epsilon_i>0$ and all $\delta_i>0$, there exists some $N_i$ such that for all $n>N_i$, we have
\begin{gather}
 \label{eqn:eigenfunction}    \mathbb{P}(|\bm\phi_i^n(x_j) - \bm\phi_i(x_j)|\leq \epsilon_i)\geq 1-\delta_i,\quad \forall \; x_j\in X .
 \end{gather}
 Therefore, letting $\epsilon_i < \epsilon$ with $\epsilon > 0$, with probability at least $\prod_{i=1}^M(1-\delta_i) := 1-\delta$, for all $n>\max\{\max_i N_i, N_\alpha\} := N$, the first term in \eqref{eqn:term1} can be bounded as
\begin{align}
& \nonumber \left\|  \sum_{i=1}^{N_\alpha} {h}^{(0)}(\lambda_i) \left(\langle \bbP_n f,\bm\phi_i^n\rangle_{\bbG_n}\bm\phi_i^n  - \langle \bbP_nf,\bm\phi_i^n \rangle_{\ccalM} \bbP_n\bm\phi_i\right)\right\|\\
& \qquad \qquad\leq \sum_{i=1}^{N_\alpha} \|\bbP_n f\|\|\bm\phi_i^n - \bbP_n\bm\phi_i\|\leq N_s \epsilon,
\end{align}
because the frequency response is non-amplifying as stated in Assumption \ref{ass:filter_function}. The last equation comes from the definition of norm in $L^2(\bbG_n)$.
The second term in \eqref{eqn:term1} can be written as
\begin{align}
     & \nonumber \Bigg\| \sum_{i=1}^{N_\alpha}  {h}^{(0)}(\lambda_i^n) (\langle \bbP_n f,\bm\phi_i^n\rangle_{\bbG_n}  \bbP_n\bm\phi_i -\langle f,\bm\phi_i\rangle_\ccalM \bbP_n\bm\phi_i ) \Bigg\| \\
   &\leq \sum_{i=1}^{N_\alpha} |{h}^{(0)}(\lambda_i^n)| \left|\langle \bbP_n f,\bm\phi_i^n\rangle_{\bbG_n}  -\langle f,\bm\phi_i\rangle_\ccalM\right|\|\bbP_n\bm\phi_i\|.
\end{align}
Because $\{x_1, x_2,\cdots,x_n\}$ is a set of uniform sampled points from $\ccalM$, based on Theorem 19 in \cite{von2008consistency} we can claim that there exists some $N$ such that for all $n>N$
\begin{equation}
   \mathbb{P}\left(\left|\langle \bbP_n f,\bm\phi_i^n\rangle_{\bbG_n}  -\langle f,\bm\phi_i\rangle_\ccalM\right|\leq\epsilon \right)\geq 1-\delta,
\end{equation}
for all $\epsilon>0$ and $\delta>0$. Taking into consider the boundedness of frequency response $|{h}^{(0)}(\lambda)|\leq 1$ and the bounded energy $\|\bbP_n\bm\phi_i\|$. Therefore, we have for all $\epsilon>0$ and $\delta>0$,
\begin{align}
&\nonumber  \mathbb{P}\left(\left\| \sum_{i=1}^{N_\alpha}  h^{(0)}(\lambda_i^n) \left(\langle \bbP_n f,\bm\phi_i^n\rangle_{\bbG_n}  -\langle f,\bm\phi_i\rangle_\ccalM \right)\bbP_n\bm\phi_i  \right\|\leq N_s \epsilon\right)
\\& \qquad \qquad\qquad\qquad\qquad\qquad\qquad\qquad\qquad\geq 1-\delta,
\end{align}
for all $n>N$.

Combining the above results, we can bound the output difference of $h^{(0)}(\lambda)$. Then we need to analyze the output difference of $h^{(l)}(\lambda)$ and bound this as
\begin{align}
    \nonumber &\left\| \bbP_n \bbh^{(l)}(\ccalL)f -\bbh^{(l)}(\bbL_n)\bbP_n f \right\| 
    \\& \leq \left\| (\hat{h}(C_l)+\delta)\bbP_n f - (\hat{h}(C_l)-\delta)\bbP_nf\right\| \leq 2\delta\|\bbP_nf\|,
\end{align}
where $\bbh^{(l)}(\ccalL)$ and $\bbh^{(l)}(\bbL_n)$ are filters with filter function $h^{(l)}(\lambda)$ on the LB operator $\ccalL$ and graph Laplacian $\bbL_n$ respectively.
Combining the filter functions, we can write
\begin{align}
   \nonumber &\|\bbP_n\bbh(\ccalL)f-\bbh(\bbL_n)\bbP_n f\|\\\nonumber &=
    \Bigg\|\bbP_n\bbh^{(0)}(\ccalL)f +\bbP_n\sum_{l\in\ccalK_m}\bbh^{(l)}(\ccalL)f -\\& \qquad \qquad \qquad \bbh^{(0)}(\bbL_n)\bbP_n f - \sum_{l\in\ccalK_m} \bbh^{(l)}(\bbL_n)\bbP f \Bigg\|\\
    &\nonumber \leq \|\bbP_n \bbh^{(0)}(\ccalL)f-\bbh^{(0)}(\bbL_n)\bbP_n f\|+\\
    &\qquad \qquad \qquad \sum_{l\in\ccalK_m}\|\bbP_n \bbh^{(l)}(\ccalL)f-\bbh^{(l)}(\bbL_n)\bbP_nf\|.
\end{align}

Above all, we can claim that there exists some $N$, such that for all $n>N$, for all $\epsilon'>0$ and $\delta>0$, we have
\begin{equation}
    \mathbb{P}(\|\bbh(\bbL_n)\bbP_n f - \bbP_n\bbh(\ccalL) f\|\leq \epsilon')\geq 1-\delta.
\end{equation}

With $\lim\limits_{n\rightarrow \infty}D_l^n=0$ in high probability, this concludes the proof.

}

%

\end{document}